\documentclass[11pt,english]{article}
\usepackage[T1]{fontenc}
\usepackage[latin9]{inputenc}
\synctex=-1
\usepackage{amsmath}
\usepackage{amsthm}
\usepackage{amssymb}
\usepackage{bm}
\usepackage{graphicx}
\usepackage{booktabs} 
\usepackage{caption}
\usepackage{subcaption}
\usepackage{comment}
\usepackage{microtype} 
\usepackage{verbatim}
\usepackage[title]{appendix}
\usepackage{algpseudocode}
\usepackage{algorithm}
\usepackage{enumerate}
\usepackage[dvipsnames]{xcolor}
\definecolor{Gray}{gray}{0.94}
\usepackage[align=center,shadow=true,shadowsize=6pt,nobreak=true,framemethod=tikz,skipabove=9.5pt,skipbelow=9pt,innertopmargin=5pt,innerbottommargin=5pt,innerleftmargin=5pt,innerrightmargin=5pt,leftmargin=1.5pt,rightmargin=1.5pt]{mdframed}
\usetikzlibrary{shadows}

\makeatletter
\numberwithin{equation}{section}
\numberwithin{figure}{section}
\theoremstyle{plain}
\newtheorem{thm}{\protect\theoremname}
\theoremstyle{definition}

\theoremstyle{plain}
\newtheorem{lemma}{\protect\lemmaname}
\newtheorem{corollary}{\protect\corollaryname}

\theoremstyle{remark}
\newtheorem*{rem*}{\protect\remarkname}
\newtheorem{assumption}{\protect\assumptionname}
\theoremstyle{plain}

\newenvironment{manualtheorem}[1]{%
  \manualtheoreminner
}{\endmanualtheoreminner}

\@ifundefined{date}{}{\date{}}
\usepackage{color}
\usepackage{colortbl}
\usepackage{hyperref}
\hypersetup{
    colorlinks,
    allcolors=[rgb]{0.3,0.1,0.9}
}

\usepackage[lined,boxed,ruled, linesnumbered, algo2e]{algorithm2e}

\usepackage[margin=1in]
           {geometry}

\usepackage{thmtools}
\usepackage{thm-restate}
\usepackage{amsfonts}
\usepackage{tikz}
\usepackage{enumitem}
\usepackage{wrapfig}
\allowdisplaybreaks

\makeatother

\usepackage{babel}
\providecommand{\definitionname}{Definition}
\providecommand{\lemmaname}{Lemma}
\providecommand{\remarkname}{Remark}
\providecommand{\theoremname}{Theorem}
\providecommand{\corollaryname}{Corollary}
\providecommand{\remarkname}{Remark}
\providecommand{\assumptionname}{Assumption}

\begin{document}
\global\long\def\norm#1{\left\Vert #1\right\Vert }%
\global\long\def\R{\mathbb{R}}%
\global\long\def\eps{\epsilon}%
 
\global\long\def\Rn{\mathbb{R}^{n}}%
\global\long\def\tr{\mathrm{Tr}}%
\global\long\def\diag{\mathrm{diag}}%
\global\long\def\Diag{\mathrm{Diag}}%
\global\long\def\C{\mathbb{C}}%
\global\long\def\conv{\mathrm{conv}}%
 
\global\long\def\E{\mathbb{E}}%
\global\long\def\vol{\mathrm{vol}}%
\global\long\def\argmax{\mathrm{argmax}}%
\global\long\def\argmin{\mathrm{argmin}}%
\global\long\def\sign{\mathrm{sign}}%
\global\long\def\bd{\mathrm{bd}}%

\global\long\def\ham{\mathrm{Ham}}%
\global\long\def\e#1{ \exp\left(#1\right)}%
\global\long\def\Var{\mathrm{Var}}%
\global\long\def\dint{{\displaystyle \int}}%
\global\long\def\step{\delta}%
\global\long\def\Ric{\mathrm{Ric}}%
\global\long\def\P{\mathbb{P}}%
\global\long\def\len{\text{\text{len}}}%
\global\long\def\lspan{\mathrm{span}}%
\newcommand{\squeezeup}{\vspace{-2.5mm}}
\newcommand{\eg}{\emph{e.g.}}
\newcommand{\ie}{\emph{i.e.}}

\bibliographystyle{alpha}

\title{Provable Lifelong Learning of Representations}

\author{Xinyuan Cao \\ Georgia Tech\\\texttt{xcao78@gatech.edu} \and Weiyang Liu \\ University of Cambridge \& MPI-IS\\\texttt{wl396@cam.ac.uk}  \and Santosh S. Vempala \\ Georgia Tech\\\texttt{vempala@gatech.edu}}
\maketitle
\begin{abstract}
In lifelong learning, tasks (or classes) to be learned arrive sequentially over time in arbitrary order. During training, knowledge from previous tasks can be captured and transferred to subsequent ones to improve sample efficiency. We consider the setting where all target tasks can be represented in the span of a small number of unknown linear or nonlinear features of the input data. We propose a lifelong learning algorithm that maintains and refines the internal feature representation. We prove that for any desired accuracy on all tasks, the dimension of the representation remains close to that of the underlying representation. The resulting sample complexity improves significantly on existing bounds. In the setting of linear features, our algorithm is provably efficient and the sample complexity for input dimension $d$, $m$ tasks with $k$ features up to error $\epsilon$ is $\tilde{O}(dk^{1.5}/\epsilon+km/\epsilon)$. We also prove a matching lower bound for any lifelong learning algorithm that uses a single task learner as a black box. We complement our analysis with an empirical study, including a heuristic lifelong learning algorithm for deep neural networks. Our method performs favorably on challenging realistic image datasets compared to state-of-the-art continual learning methods.

\end{abstract}

\section{Introduction}

Recent years have witnessed significant advances in both theory and practice of supervised learning. While a variety of techniques are available for learning individual target functions, much less is known about {\em continual} or {\em lifelong} learning, where the learner is adding new target functions to their repertoire. Inspired by how humans learn and transfer knowledge during their lifespan, lifelong learning has many applications in computer vision~\cite{parisi2019continual} and robotics~\cite{thrun1995lifelong}.

A central idea for lifelong learning is to learn an {\em efficient representation} that facilitates the collection of target functions to be learned. For example, if deep feed-forward networks are being used for classification, the goal might be to learn a hidden layer whose outputs are relevant and useful features for the family of tasks. Building a classifier on top of them is relatively easy or less expensive than building one from the original input features. This representation itself is incrementally refined as more target functions are learned. 

We consider a very general setting of task/class incremental learning, where new samples from different tasks/classes are presented sequentially over time. The goal of the learner is to maintain hypothesis functions that work for all tasks/classes encountered so far. We assume that all targets are simple functions of a bounded number of unknown {\em linear or nonlinear} features.

Prior work~\cite{balcan2015efficient} considered the task-incremental setting where the target functions are linear classifiers of the input that all lie in a common low-dimensional subspace. Under this assumption, a simple algorithm can be shown to learn a good representation of size comparable to the optimal one (\ie, a basis of the common low-dimensional subspace). The algorithm proceeds as follows: maintain a small number of linear features; learn the next function as a linear function of the features; if the error is too high, learn the new function directly on the input, and add it as a new feature. Under mild assumptions on the input distribution (log-concavity), with a suitable choice of error parameters, this algorithm is guaranteed to learn a small set of features that work well for all the target functions. More recent works \cite{du2020few,tripuraneni2020provable,chua2021fine} focus on the sample complexity of multi-task learning under strong distributional assumptions on both the data and the tasks. 

Our paper is motivated by the following questions:

\begin{mdframed}
\begin{itemize}[leftmargin=7mm]
\setlength\itemsep{0.04em}
\item Can the theoretical guarantees for linear features be extended to a representation with nonlinear features?
\item Does the refinement of the internal representation have provable benefits?
\item What is the best possible sample complexity of lifelong learning?
\end{itemize}
\end{mdframed}

Our work addresses these questions for both task-incremental learning (classification or regression) and class-incremental learning (where we do not have access to the task ID). Our analysis applies to a broad class of lifelong learning algorithms that dynamically change the network architecture. First, we analyze the setting where the underlying common features are nonlinear, which is considerably more general than previously considered. We prove that this natural lifelong learning algorithm is guaranteed to learn low-error targets creating only a small number of nonlinear features. Secondly, we propose a new algorithm, with a refinement step, and show that it improves the sample complexity using a new perspective on feature subspaces. The resulting sample complexity improves significantly on known bounds for the setting of linear features, and perhaps surprisingly, we show that it is the best possible in the setting of linear features, assuming that the lifelong learner has black-box access to a single task learner to any desired level of accuracy. 
We do this by constructing a hard distribution over tasks. 

We conduct experiments on class-incremental learning using benchmark data sets and find that our proposed algorithm outperforms state-of-the-art continual learning algorithms.

\subsection{Problem Settings}

We consider $m$ tasks (or $m$-class classification) where the tasks (classes) arrive sequentially over time. Let $X=\mathbb{R}^d$ be the input space and $Y$ be the label space. We study a discriminative model, where the target function of each task can be learned using a linear combination of at most $k$ linear/nonlinear features. The goal is to learn a hypothesis function with small generalization errors on all tasks.

Formally, the problem is associated with a distribution $P$ over $X\times Y$, $D$ is the marginal of $P$ over $X$. The label for an input data point $\bm{x}\in\mathbb{R}^d$ is given by 
\[
\ell(\bm{x})=\phi\left(\langle {\bm{c}^*},\bm{\sigma}^*\left(\bm{x}\right)\rangle\right)
\]
where $\bm{\sigma}^*(\bm{x})=(\sigma_1^*(\bm{x}), \ldots, \sigma_k^*(\bm{x}))^\top\in\mathbb{R}^k$ is a vector of unknown features, $\bm{c}^*\in\mathbb{R}^k$, $\phi(\cdot): \mathbb{R} \rightarrow Y$ is the map to the label space. $(k \ll \min (m,d))$. Equivalently, we can view it as a two-layer network with $k$ neurons in the hidden layer (Figure \ref{fig:sketch_lll}).

\begin{figure}[tp]
    \centering
    \begin{subfigure}[b]{0.49\textwidth}
         \centering
         \includegraphics[height=2.2in]{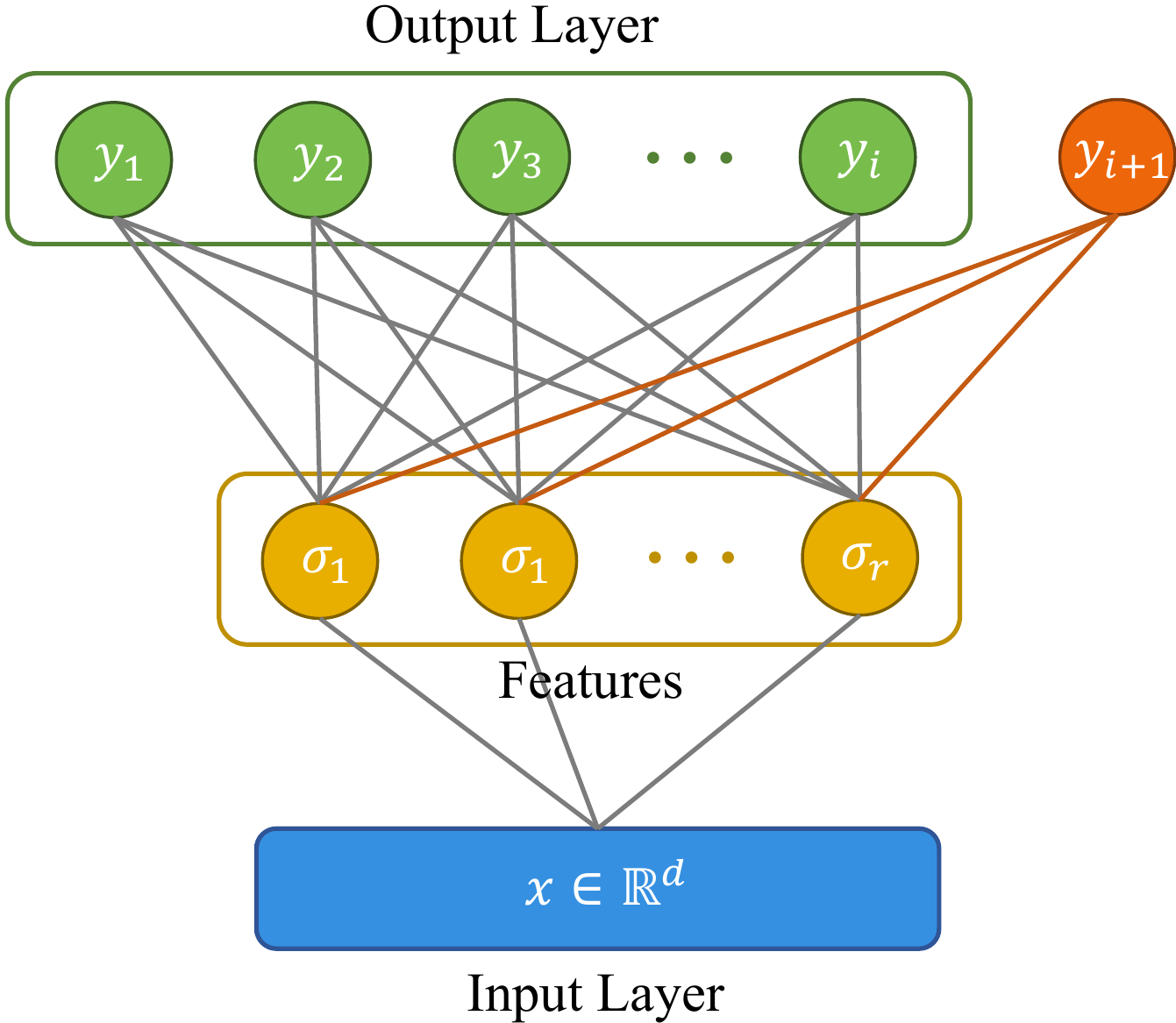}
         \caption{\textbf{Case1}: small error with current features}
     \end{subfigure}
     \begin{subfigure}[b]{0.49\textwidth}
         \centering
         \includegraphics[height=2.2in]{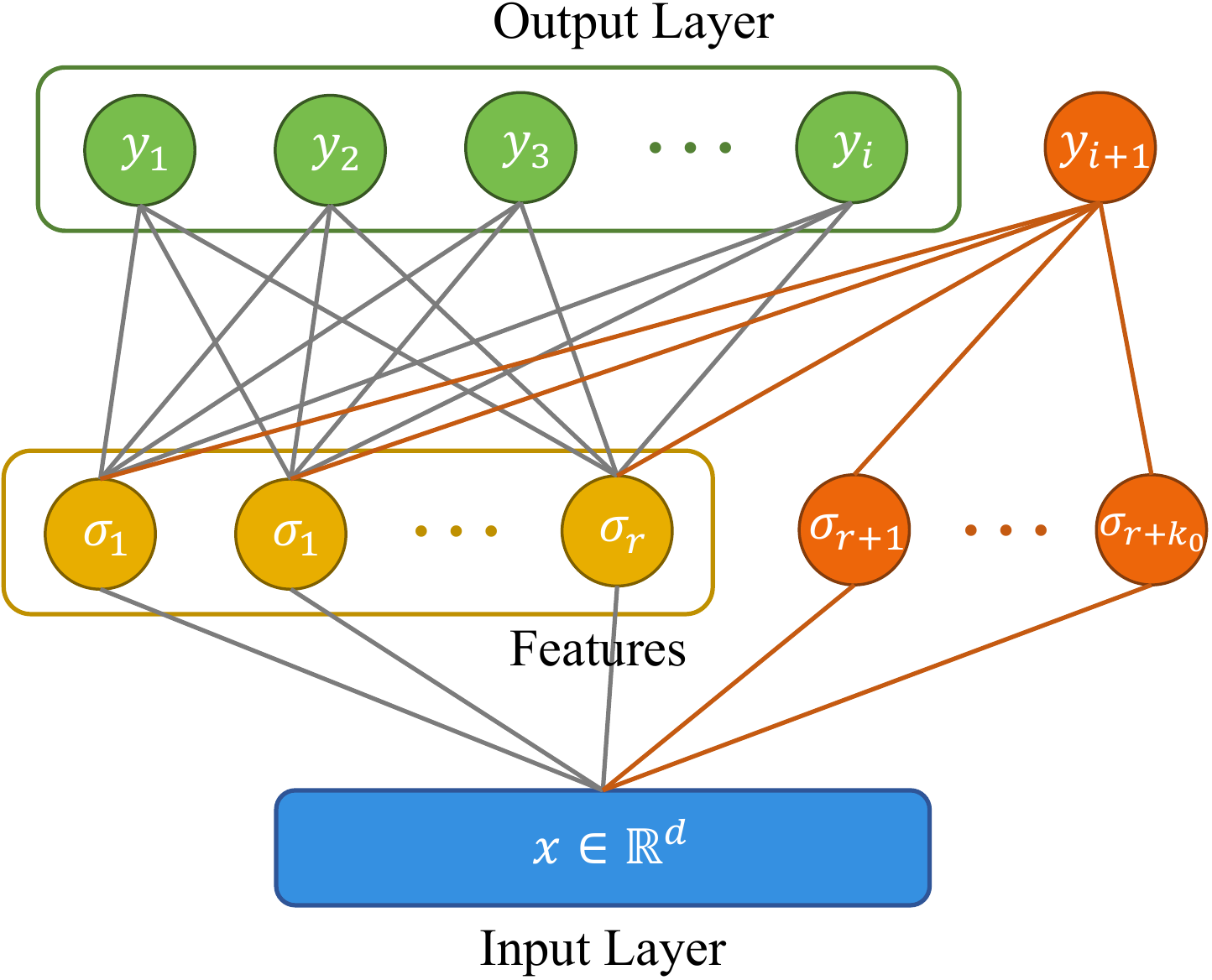}
         \caption{\textbf{Case2}: large error with current features}
     \end{subfigure}
    \caption{An illustration of LLL. Given a new task $y_{i+1}$, the algorithm tries to learn the task with existing features $\sigma_1,\cdots,\sigma_r$. If the error is small (case1, Sub-figure (a)), then it moves to the next task. Otherwise (case2, Sub-figure (b)), it learns a new set of features  $\sigma_{r+1},\cdots,\sigma_{r+k_0}$ and a linear combination of all features; for linear features, it learns a single new feature $\sigma_{r+1}$.}
    \label{fig:sketch_lll}
\end{figure}

Our goal is to learn a good hypothesis function $\hat{\ell}(\cdot)$ parameterized by $(\bm{c}^*, \bm{\sigma}^*)$ with a small generalization error $err = \mathbb{P}_{(\bm{x},y)\sim P}L(l(\bm{x}), \hat{l}(\bm{x}))$, where $L(\cdot,\cdot)$ is some loss function for the specific task.

We use a similar model with multi-task learning, where all tasks share the same low-dimensional feature subspace. However, it is different from multi-task learning, which has $T_1$ source tasks to learn all-at-once and use the features learned to solve the target tasks. The assumption is made there that the features of the target task are covered by all features that have been learned. Instead, lifelong learning algorithms learn all tasks sequentially, with no prior knowledge of the incoming tasks during training.

Here we focus on the task-incremental learning of binary classification tasks. Extensions to task-incremental learning of linear regression and multi-class classification tasks are given in Appendix~\ref{section:class_increment}.

Let $\bm{X}=\mathbb{R}^d$ be the input space, $\bm{Y}=\{\pm 1\}$ be the label space. For any task $i\in [m]$, any sample $(\bm{x}, y)$ drawn from $P$ satisfies $y=l_i(\bm{x})=\sign(\langle \bm{c}^*_i, \bm{\sigma}^*(\bm{x})\rangle)$, where the features are $\bm{\sigma}^*(\bm{x})= \bm{W}^*\bm{x}$ in the linear case and $\bm{\sigma}^*(\bm{x})=f(\bm{W}^*\bm{x})$ in the nonlinear case. Here $f(\cdot)$ is a nonlinear activation function, \eg ReLU. $\bm{W}^*\in\mathbb{R}^{k\times d}, \bm{c}^*_i\in\mathbb{R}^k$ etc. Specifically, in the linear case, for each task $i$, we equivalently have $y=\sign(\langle \bm{a}^*_i, \bm{x}\rangle)$, where $\bm{a}^*_i={\bm{W}^*}^\top \bm{c}^*_i \in\mathbb{R}^d$. WLOG we assume that each $\bm{a}_i^*$ is a unit vector, \ie, $\|\bm{a}^*_i\|_2=1, \forall i\in[m]$. The generalization error is defined as $err = \mathbb{P}_{(\bm{x},y)\sim P}(l_i(\bm{x})\neq \hat{l}_i(\bm{x}))$.

\subsection{Main Results}

In all the results and analysis, we only have Assumption \ref{equation:assumption}, which as Lemma 1 from  \cite{balcan2015efficient}) asserts, is satisfied by all log-concave distributions after an affine transformation. This class includes many common distributions, such as Gaussian, Uniform and Gamma distributions \cite{lovasz2007geometry}.

\begin{assumption}\label{equation:assumption}
(Data Distribution Assumption) Let $\theta(\cdot,\cdot)$ denote the angle between two vectors. We assume that there exist universal constants $c_2>c_1 > 0$ s.t., for any unit vectors $\bm{u},\bm{v} \in \R^d$, 
\begin{equation*}
    c_1 \theta(\bm{u},\bm{v})\leq \mathbb{P}_{\bm{x} \sim D}(\sign(\bm{u}\cdot \bm{x})\neq \sign(\bm{v}\cdot \bm{x})) \leq c_2 \theta (\bm{u},\bm{v})
\end{equation*}
\end{assumption}

Our theoretical upper bounds are summarized below. The detailed statements for these results appear as Theorem~\ref{thm:vanilla_algo}, Theorem~\ref{thm:tune_linear} and  Theorem \ref{thm:sdp_approx} in Section~\ref{section:theoretical_guarantees}. These results are based on the algorithms described in Section~\ref{section:algorithms}, called Basic Lifelong Learning (LLL) and Lifelong Learning with Representation Refinement (LLL-RR).

\begin{mdframed}
\begin{thm}[Summary of Upper Bounds]\label{thm:informal}
\vspace{-2.5mm}
Consider the lifelong learning setting of input dimension $d$, $m$ tasks with $k$ common features.
The basic lifelong learning algorithm achieves a target error of $\epsilon$ on all tasks with sample complexity $\tilde{O}(dk^{1.5}/\epsilon+km/\epsilon)$ for linear features and a factor of $k$ higher for nonlinear features. With representation refinement using at most $2k$ features, the sample complexity is $\tilde{O}(dk^{1.5}/\epsilon+km/\epsilon)$. In the linear setting refinement runs in polynomial-time.
\end{thm}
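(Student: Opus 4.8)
The plan is to prove the three claims of Theorem~\ref{thm:informal} by analyzing the two algorithms separately, and the proof naturally decomposes into (i) a representation-size bound, (ii) a per-task sample-complexity bound, and (iii) for refinement, a running-time argument. For the \textbf{basic LLL algorithm}, I would argue as follows. The algorithm maintains a growing set of features; for each new task it runs a single-task learner on top of the current features, and only if the achieved error exceeds a threshold does it learn a new feature (one new feature in the linear case, a block of at most $k$ in the nonlinear case) directly from the input. The key structural claim is that the number of features ever added is $O(k)$: since every target $\bm a_i^*$ lies in the $k$-dimensional row space of $\bm W^*$, once the maintained features span a subspace that is ``$\epsilon$-good'' for the remaining tasks no further features are added, and each genuinely new feature must be at angle bounded away from the current span (using Assumption~\ref{equation:assumption} to convert the error lower bound $c_1\theta$ into an angle gap). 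A volume/packing argument in the $k$-dimensional ambient subspace then caps the count by $O(k)$ in the linear case and $O(k)$ blocks, i.e. $O(k^2)$ individual features, in the nonlinear case — this is where the extra factor of $k$ in the nonlinear bound comes from. Given the representation size $r=O(k)$ (resp.\ $O(k^2)$), each of the $m$ tasks is learned as a halfspace in $\R^r$, costing $\tilde O(r/\epsilon)$ samples by the standard agnostic halfspace bound under Assumption~\ref{equation:assumption}, and each of the $O(k)$ feature-creation steps learns a halfspace in $\R^d$ at a \emph{finer} accuracy $\epsilon' = \tilde\Theta(\epsilon/\sqrt k)$ (needed so that the union of per-task errors telescopes to $\epsilon$), costing $\tilde O(d/\epsilon') = \tilde O(d\sqrt k/\epsilon)$ each; multiplying by $O(k)$ creation steps gives $\tilde O(dk^{1.5}/\epsilon)$, and $\tilde O(km/\epsilon)$ for the $m$ task-learning steps, yielding the claimed $\tilde O(dk^{1.5}/\epsilon + km/\epsilon)$ (a factor $k$ larger in the first term for nonlinear features, from $O(k^2)$ features).

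For \textbf{LLL-RR}, the additional ingredient is the refinement step that keeps the representation dimension at $2k$ rather than letting it grow to $\Theta(k^{1.5})$ or worse. The idea is: whenever the feature count would exceed $2k$, replace the current set of $r>2k$ feature vectors by a near-optimal $k$-dimensional (or $2k$-dimensional) subspace that is still $\epsilon$-good for all tasks seen so far. Since all the relevant target directions $\bm a_i^*$ lie in a common $k$-dimensional subspace, such a compressing subspace exists; the algorithmic question is to \emph{find} one from the learned halfspace normals without knowing $\bm W^*$. In the linear case I would set this up as an optimization: we have learned unit vectors (the per-task normals in the current feature coordinates, pulled back to $\R^d$) and want a low-dimensional subspace $V$ minimizing the worst-case projection residual $\max_i \text{dist}(\bm a_i, V)$, or equivalently a PSD matrix of bounded trace maximizing $\min_i \bm a_i^\top M \bm a_i$ — an SDP with $O(m)$ constraints, solvable in polynomial time, which is the content of Theorem~\ref{thm:sdp_approx} referenced in the statement. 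One shows the SDP's optimum has a feasible solution of rank $\le 2k$ with the required guarantee (rounding the SDP solution / using the low-rank structure of the optimum), and that the residual, passed through Assumption~\ref{equation:assumption}, increases each task's error only by $O(\epsilon)$. With the dimension pinned at $2k$, the per-task cost is $\tilde O(k/\epsilon)$ and — crucially — the \emph{number of feature-creation events} is still only $O(k)$ overall (each creation is ``charged'' to a genuine expansion of the span of targets, which can happen at most $O(k)$ times before refinement reclaims it, and the amortized argument shows total creations $O(k)$), so the creation cost stays $\tilde O(dk^{1.5}/\epsilon)$; hence $\tilde O(dk^{1.5}/\epsilon + km/\epsilon)$ again, now with a hard $2k$ cap on representation size.

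I would organize the writeup as: Lemma~A (single-task halfspace learning under Assumption~\ref{equation:assumption}: $\tilde O(d/\epsilon)$ samples suffice for error $\epsilon$), Lemma~B (angle/error conversion: error $\ge \epsilon$ against a subspace implies angle $\ge \Omega(\epsilon)$ to it, via $c_1\theta$), Lemma~C (feature-count bound $O(k)$ for LLL / $O(k)$ blocks for nonlinear, via packing in the $k$-dim target subspace), and Lemma~D (the SDP produces a $2k$-dim $O(\epsilon)$-good subspace in poly time). Then Theorem~\ref{thm:informal} follows by choosing the internal accuracy schedule $\epsilon' \asymp \epsilon/(\sqrt k\log(\cdot))$ so the accumulated error across $O(\sqrt k)$-many ``hops'' stays $\le \epsilon$, and summing the sample costs.

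\paragraph{Main obstacle.} The delicate point is not the per-task or per-feature sample bound — those are standard given Assumption~\ref{equation:assumption} — but controlling the \emph{interaction between representation refinement and accumulated error}: each refinement perturbs every learned hypothesis, and each feature-creation step is only accurate to $\epsilon'$, so one must show these errors do not compound across the (up to) $\Theta(k)$ refinements and $O(k)$ creations in a way that blows past $\epsilon$. This forces the $\epsilon/\sqrt k$ accuracy schedule and hence the $\sqrt k$ in $dk^{1.5}/\epsilon$; getting the bookkeeping tight (rather than losing an extra $k$) is the crux. The secondary obstacle is the SDP rounding in Lemma~D: proving that a bounded-trace PSD solution maximizing $\min_i \bm a_i^\top M \bm a_i$ can be taken to have rank $O(k)$ while losing only a constant factor in the guarantee — this needs the fact that the optimum is supported on the $k$-dimensional span of the $\bm a_i^*$'s plus a standard low-rank-extreme-point argument for SDPs with few constraints.
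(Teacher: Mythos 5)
There is a genuine gap at the heart of your argument: you claim that with per-feature accuracy $\epsilon'=\tilde\Theta(\epsilon/\sqrt k)$ a packing argument in the $k$-dimensional target subspace caps the number of feature-creation events at $O(k)$. The packing/$\gamma$-separation argument only bounds the number of tasks whose \emph{true} direction is at angle $\ge\gamma$ from the span of the previously created true directions; but a task can fail the $\epsilon$-error test even when its true direction lies entirely inside that span, simply because each learned feature is off by $\epsilon_{acc}$ and these errors tilt the learned subspace. To rule this out via the separation argument one needs the learned span to track the true span to within $O(\epsilon)$, which (see Lemma~\ref{lem:subspace} in the appendix) forces $\epsilon_{acc}=O(\gamma^2/k)=O(\epsilon^2/k)$ and hence the weaker $\tilde O(dk^2/\epsilon^2)$ bound of Theorem~\ref{thm:lll_nonlinear_main} --- exactly the prior bound this theorem improves on. With the larger $\epsilon_{acc}=\epsilon/(c\sqrt k)$ that you (correctly) want, the paper does \emph{not} get $O(k)$ creations; it gets $\tilde k=O(k\log(\log k/\epsilon))$, and the argument is entirely different: one tracks the symmetric convex set $X_{i_{\hat k}}$ of directions in the true subspace that are well approximated by \emph{every} feasible refinement (Lemmas~\ref{lemma:cvx_set}--\ref{cor:in_set}), shows that every failure event --- whether caused by a genuinely new direction or by accumulated feature error --- adds a point outside the John ellipsoid of $X$ scaled by $2\sqrt k$ and hence grows that ellipsoid's volume by a factor $13/10$ (Lemma~\ref{lemma: max_ellipsoid}), and caps the terminal volume by Blaschke--Santal\'o (Lemma~\ref{lem:maxvol}). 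Your "accumulated error over $O(\sqrt k)$ hops" explanation of the $\sqrt k$ is also not where it comes from: the $\sqrt k$ enters because the stopping condition requires $X$ to contain $B_k(0,1/2\sqrt k)$ (the largest ball one can hope to inscribe in the convex hull of unit vectors), so approximating $\bm a/(2\sqrt k)$ to distance $\epsilon/(2\sqrt k c'')$ is what certifies error $\epsilon$ for the unit vector $\bm a$.

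Two smaller corrections. First, for LLL-RR the refinement controls the \emph{dimension} of the representation (down to $2k-1$ via Theorem~\ref{thm:sdp_approx}), but the number of creation events remains $O(k\log(\log k/\epsilon))$; your amortized claim of $O(k)$ creations is again unsupported, and without refinement the basic algorithm's dimension is $O(k\log(\log k/\epsilon))$, not $\Theta(k^{1.5})$. Second, your SDP rounding via low-rank extreme points is not what the paper does and would need care: the paper instead constrains $\tr(\bm X)=d-k$ with $0\preceq\bm X\preceq \bm I$ for the projector onto $\bm V^{\perp}$, and the rank/approximation tradeoff comes from the elementary eigenvalue bound $\lambda_{k'+1}\ge (k'-k+1)/(k'+1)$, giving a $(2k-1)$-dimensional subspace with a $\sqrt 2$ loss. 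Your overall accounting of sample costs (creation steps at $\tilde O(d\sqrt k/\epsilon)$ each, tasks at $\tilde O(k/\epsilon)$ each) is consistent with the paper once the creation count is corrected, but the creation-count bound is the crux, and the proposal does not contain the idea needed to prove it.
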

\end{mdframed}

This raises the question of whether there exist algorithms with better sample complexity. We show that the answer is NO in a general sense. We assume that we have black-box access to a single-task learner that works as follows: it takes as input labeled examples and a target accuracy $\eps$, and outputs some feasible solution with error at most $\eps$. Then we show that any lifelong learning algorithm that achieves $\eps$ error for all tasks needs 
$\Omega(dk^{1.5}/\eps+km/\eps)$ samples. 
\begin{mdframed}
\begin{thm}[Lower Bound]\label{thm:adv_lower_bound}
\vspace{-2.5mm}
Suppose that a lifelong learner has black-box access to a single task learner that takes an error parameter $\eps$ as input and is allowed to return any vector that is within distance $\eps$ of the true target unit vector, using $\Theta(d/\eps)$ samples in $\R^d$. 
Then, there exists a distribution of $m$ tasks, $m = 2^{\Theta(k)}$ such that for any lifelong learning algorithm, WHP, the total number of samples required to learn all $m$ tasks up to error $\eps$ is $\Omega(dk^{1.5}/\eps+km/\eps)$.
\end{thm}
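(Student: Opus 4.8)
I would exhibit a single hard distribution and prove the two summands separately: an $\Omega(km/\eps)$ bound coming from the irreducible per-task cost, and an $\Omega(dk^{1.5}/\eps)$ bound coming from the cost of pinning down the shared $k$-dimensional subspace, and then note that (being charged on different tasks, hence different samples) the two add. Throughout I use the natural reading of the model in which a black-box call that the learner restricts to an $r$-dimensional subspace and runs to accuracy $\delta$ costs $\Theta(r/\delta)$ samples and returns a vector placed \emph{adversarially} within distance $\delta$ of the true target on that subspace (the stated single-task learner is the $r=d$ case); the adversary in the proof is this black box, assumed adaptive.

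\textbf{The distribution.} First draw a uniformly random $k$-dimensional subspace $V\subseteq\R^d$, then draw the $m=2^{\Theta(k)}$ task targets $\bm a_1^*,\dots,\bm a_m^*$ i.i.d. uniformly from the unit sphere $S(V)$; task $i$ is the halfspace $\bm x\mapsto\sign(\langle\bm a_i^*,\bm x\rangle)$ under any $D$ obeying Assumption~\ref{equation:assumption}. Randomizing $V$ makes the bound hold even against learners tailored to the distribution. Two consequences of $m$ being exponential in $k$ that I would record up front: (i) WHP the targets form an $\eta$-net of $S(V)$ for an absolute constant $\eta$; (ii) WHP every $\Theta(k)$-subset of them is well conditioned, with $\min_{\bm u\in S(V)}\max_j|\langle\bm v_j,\bm u\rangle|=\Theta(1/\sqrt k)$, whereas a size-$k$ subset can only guarantee $\Theta(1/k)$.

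\textbf{The $\Omega(km/\eps)$ term.} This I expect to be the easy part: fix $i$ and condition on the whole history before task $i$. Since $\bm a_i^*$ is fresh and uniform in $V$, a hypothesis with error $\le\eps$ means (Assumption~\ref{equation:assumption}) a vector $\hat{\bm a}_i$ with $\|\hat{\bm a}_i-\bm a_i^*\|=O(\eps)$, and the only route to it is a black-box call on task $i$'s data restricted to some subspace $\hat V$; to succeed $\hat V$ must have dimension $\ge k$ (a generic unit vector of $V$ is $\Omega(1/\sqrt k)$-far from any lower-dimensional subspace) and the call must run to accuracy $O(\eps)$, so it costs $\Omega(k/\eps)$. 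Independence of the targets forbids sharing these costs across tasks, so summing over $i$ gives $\Omega(km/\eps)$.

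\textbf{The $\Omega(dk^{1.5}/\eps)$ term, and the hard step.} I would split on whether the learner ever holds an estimate $\hat V$ (the span of the black-box outputs accumulated so far) that is $O(\eps)$-close to $V$. If not, then for a constant fraction of the tasks the target, being an $\Omega(1)$-spread unit vector of $V$, is $\gg\eps$-far from the current $\hat V$ and so must be solved by a dimension-$\Omega(d)$ call to accuracy $O(\eps)$; this already costs $\Omega(dm/\eps)$, which for $m=2^{\Theta(k)}$ exceeds $dk^{1.5}/\eps+km/\eps$. Otherwise the learner does build an $O(\eps)$-close $\hat V$ out of outputs on some tasks $i_1,\dots,i_N$ (true targets $\bm v_1,\dots,\bm v_N\in V$) run at accuracies $\delta_1,\dots,\delta_N$; spanning $V$ forces $N\ge k$, and each such call sits in dimension $\Omega(d)$ since no accurate low-dimensional subspace exists while $\hat V$ is still being assembled. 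The core is a ``decoy-subspace'' attack: the adversary perturbs the returned vectors within their budgets so the outputs are \emph{equally consistent} with $V$ and with a subspace $\hat V'$ obtained by tilting $V$ through angle $\alpha=\Theta(\sqrt k\cdot\max_j\delta_j)$ about a direction $\bm u\in S(V)$ chosen poorly aligned with $\bm v_1,\dots,\bm v_N$ (possible, with $\alpha$ this large, precisely by conditioning fact (ii) and because the $\bm v_j$ have $\Theta(1/\sqrt k)$ typical coordinate along a generic $\bm u$). By net fact (i) some task $i^*$ has target within $\eta$ of $\bm u$, so representing $\bm a_{i^*}^*$ through features from the learner's estimate (indistinguishable from $\hat V'$) incurs error $\Omega(\alpha)$ unless task $i^*$ is also solved by a direct accuracy-$O(\eps)$ call; an adaptive adversary keeps surfacing a fresh such direction after each patch, so the learner either pays $\Omega(k)$ extra direct calls at accuracy $O(\eps)$ (cost $\Omega(dk/\eps)$) or is forced to run its $N=\Theta(k)$ reconstruction calls at accuracy $O(\eps/\sqrt k)$ (cost $\Omega(k\cdot d\sqrt k/\eps)=\Omega(dk^{1.5}/\eps)$). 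Adding the disjoint contributions of the $\Theta(k)$ reconstruction tasks and the remaining $\Theta(m)$ tasks, and union-bounding over tasks (which is why the $e^{-\Omega(k)}$ failure probabilities in (i)--(ii) and in the per-task step are affordable for $m=2^{\Theta(k)}$), yields $\Omega(dk^{1.5}/\eps+km/\eps)$. The genuinely delicate step, I expect, is this last one: ruling out \emph{every} adaptive reconstruction schedule --- incremental/refined subspace estimates, task-dependent accuracies, arbitrary interleavings of direct and feature-based learning, and the adversary's online-limited foresight --- which I would handle with a potential function tracking, against samples spent, how accurately each direction of $V$ has been pinned down, together with the combinatorial fact from the net that until $\Omega(k)$ directions are resolved to accuracy $O(\eps)$ there is always an unresolved direction carrying a task on which the current representation fails.
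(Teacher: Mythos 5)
Your overall strategy --- an adversary that places the single-task learner's errors so that its outputs are equally consistent with a tilted copy of the true subspace, plus exponentially many tasks that force the learned subspace itself to be $\eps$-accurate --- is the same as the paper's (which takes the basis tasks $\bm{e}_1,\dots,\bm{e}_k$ followed by random Boolean combinations, with all errors concentrated on coordinate $k+1$). But the quantitative core of your argument has two gaps. First, the tilt an adversary can force out of calls run at accuracies $\delta_1,\dots,\delta_N$ is $\Theta\bigl(\sqrt{\sum_j\delta_j^2}\bigr)$ (the paper's Lemma~\ref{lemma:k_task_angle}), not $\Theta(\sqrt{k}\max_j\delta_j)$: if one $\delta_j$ equals $\eps$ and the rest are tiny, the subspace can only be tilted by about $\eps$, not $\sqrt{k}\,\eps$. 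With the correct $\ell_2$ aggregation, the requirement ``tilt $\le\eps$'' becomes $\sum_j\delta_j^2\le\eps^2$, and showing that \emph{every} such allocation costs $\sum_j d/\delta_j=\Omega(dk^{1.5}/\eps)$ is not automatic --- it is exactly the content of the paper's Lemma~\ref{lemma:find_avg_subset} (extracting a balanced subset of $\Omega(k)$ accuracies) combined with the generalized H\"older inequality. Your proposal instead asserts that the uniform allocation $\delta_j=O(\eps/\sqrt{k})$ is forced, which presupposes what must be proved; the exponent $1.5$ lives precisely in this optimization over non-uniform allocations.

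Second, your dichotomy does not close: the branch in which the learner ``pays $\Omega(k)$ extra direct calls at accuracy $O(\eps)$'' is credited only $\Omega(dk/\eps)$, which is strictly below the target, so as written the argument yields only $\Omega(dk/\eps+km/\eps)$. The repair (and the paper's route) is to argue that direct calls on the combination tasks \emph{never improve the representation} --- the adversary returns a vector lying inside the already-learned, wrong subspace --- so a learner that fails to pin down the subspace through the $k$ basis tasks must pay $\Omega(d/\eps)$ separately on each of the $2^{\Theta(k)}$ combination tasks that, WHP, are $\Omega\bigl(\sqrt{\sum_i\eps_i^2}\bigr)$-far from its representation (the paper's Lemma~\ref{lemma:new_task_angle}); that branch then costs far more than $dk^{1.5}/\eps$, and the dichotomy closes. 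You gesture at both repairs (``keeps surfacing a fresh such direction,'' the closing potential-function sketch) but explicitly leave this delicate step open. The $\Omega(km/\eps)$ term and the general setup are fine.
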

\end{mdframed}

Our contributions can be summarized as follows:

\paragraph{Sample complexity.} We bound the sample complexity of lifelong learning for both the linear and nonlinear cases. In the linear case, our bound for the lifelong learning is $\tilde{O}(dk^{1.5}/\eps+km/\eps)$. This improves the dependence on both $k$ and $\eps$ compared to past work~\cite{balcan2015efficient}, which proved a bound of $\tilde{O}(dk^2/\epsilon^2+km/\epsilon)$. It also improves existing theoretical results for multi-task learning~\cite{du2020few,tripuraneni2020provable}, where the best sample complexity is $\tilde{O}(dk^2/\epsilon+km/\epsilon)$. Moreover, this bound is the best possible up to logarithmic factor for any lifelong learning algorithm.

\paragraph{Representation refinement.}
We propose and analyze the step of sample-free representation refinement in the lifelong learning setting. Specifically, this step aims to reduce the dimension of the feature subspace while keeping the subspace close to the true one. In the linear setting, we provide an algorithmically efficient approach via an SDP relaxation. 
To the best of our knowledge, we provide the first provable bound for representation refinement.

\paragraph{Proof techniques.}
Our analysis is based on geometric insights. The test error translates to the distance between the target and learned vectors. To show that our learned feature subspace is close to the true one, we consider the set of candidate $k$-dimensional subspaces. We would like to show that the measure of this set decreases rapidly during learning. Instead, we identify the set of well-approximated vectors by our current learned subspace and show that the set grows at a geometric rate until it includes all vectors in the true subspace.

\paragraph{Empirical results.}
We evaluate our lifelong learning algorithms on standard benchmarks and compare them with state-of-the-art methods, demonstrating their practice efficiency. We also perform simulations for the setting of linear features, and exhibit results that match our theoretical bounds.

\subsection{Related work}\label{section:related_work}

Lifelong learning \cite{thrun1995lifelong} aims to solve different tasks arriving in a stream, where knowledge from current and previous tasks is re-used in subsequent tasks to improve efficiency and sample complexity. Early works found that lifelong learning can encounter Catastrophic Forgetting (CF) \cite{mccloskey1989catastrophic}, especially when using back-propagation \cite{ratcliff1990connectionist}. That is, the performance on old tasks can drop dramatically after learning a new task. There are three main approaches to addressing this problem: {\em adding a regularization term} \cite{li2017learning, kirkpatrick2017overcoming}, {\em freezing the network from previous tasks and adding branches to new tasks} \cite{xu2018reinforced,rusu2016progressive,liu2019neural,liu2021orthogonal,yoon2017lifelong} and {\em replaying previous tasks' exemplars} \cite{rebuffi2017icarl}. Our work is closest to the second approach in that we dynamically change the architecture to overcome CF. Although we do not know the number of tasks in advance, we prove that our algorithm has a small model size and efficient sample complexity.

Despite a vast literature on lifelong learning methods, theoretical investigations are relatively few. \cite{yin2020optimization} studies the optimization and generalization properties of the regularization-based method by analyzing the loss landscape. \cite{bennani2020generalisation,doan2021theoretical} analyze the generalization of the OGD algorithm \cite{farajtabar2020orthogonal} through NTK \cite{jacot2018neural}. \cite{balcan2015efficient} gives an upper bound on the architecture size when we grow the network when training binary classifiers. We improve their bounds by getting nearly tight sample complexity in the linear case and generalize the approach to the nonlinear regime.

Two topics closely related to lifelong learning are meta-learning and transfer learning. There is a line of work where all tasks approximately~\cite{finn2017model,khodak2019adaptive,balcan2019provable,denevi2019online} or conditionally~\cite{wang2020structured,denevi2020advantage,denevi2021conditional} share a common representation. However, our work focuses on the setting where all tasks share one common low-dimensional representation. \cite{chua2021fine} shows the benefits of task-specific fine-tuning, which is fundamentally different from our refinement step. Our refinement step aims to reduce the representation dimension with slight information loss and help to improve the sample complexity of subsequent tasks. This procedure needs no additional data.

There are other works with similar settings to ours where all tasks share one representation. \cite{baxter1997bayesian} bounds the sample complexity to achieve low average error from a Bayesian/information-theoretic point of view. We compare our results with recent work \cite{du2020few,tripuraneni2020provable, balcan2015efficient} in Table~\ref{table:related_work_linear} in the linear case. Previous works on multi-task learning \cite{du2020few,tripuraneni2020provable} need $\tilde{O}(dk^2/\epsilon)$ or more samples from previous tasks to learn the hidden features, while our algorithm needs $\tilde{O}(dk^{1.5}/\epsilon)$ samples. After that, each new task can be learned up to $\epsilon$ error with $\tilde{O}(k/\eps)$ samples. These results illustrate the efficiency of lifelong learning compared to all-at-once training.
Our analysis generalizes to the setting of nonlinear features, \eg, if labels are generated by a two-layer neural network, with $k$ hidden units. We prove our lifelong learning algorithm is sample efficient with only Assumption~\ref{equation:assumption}, which is minimal compared to related work.

\begin{table}[htb]
  \centering
  \begin{tabular}{p{0.3\textwidth}p{0.27\textwidth}p{0.13\textwidth}p{0.14\textwidth}}
    \toprule
    Method     & Assumptions (linear)     & Feature Dim & Total Samples \\
    \midrule
    \begin{minipage}[t]{0.3\textwidth}
    Few-shot learning \cite{du2020few}$^\dagger$ 
    \end{minipage}
    & 
    \begin{minipage}[t]{0.27\textwidth}
        \begin{itemize}[leftmargin=*]
            \setlength\itemsep{-0.2em}
            \item Sub-gaussian input
        \end{itemize}
    \end{minipage}& $k$ & $\tilde{O}(\frac{dk^2}{\epsilon}+\frac{km}{\epsilon})  $\\
    \begin{minipage}[t]{0.3\textwidth}
    Meta learning \cite{tripuraneni2020provable}$^\dagger$ 
    \end{minipage}
    & 
    \begin{minipage}[t]{0.27\textwidth}
        \begin{itemize}[leftmargin=*]
            \setlength\itemsep{-0.2em}
            \item Sub-gaussian input
        \end{itemize}
    \end{minipage}& $k$ & $\tilde{O}(\frac{dk^2}{\epsilon}+\frac{km}{\epsilon})  $\\
    LLL
    \cite{balcan2015efficient}$^\star$ & 
    \begin{minipage}[t]{0.3\textwidth}
        \begin{itemize}[leftmargin=*]
            \setlength\itemsep{-0.2em}
            \item Log-concave input
        \end{itemize}
    \end{minipage}& $k$ & $\tilde{O}(\frac{dk^2}{\epsilon^2}+\frac{km}{\epsilon})  $\vspace{1.5mm}\\
    \rowcolor{Gray}
    LLL (our paper)$^\star$ & 
    \begin{minipage}[t]{0.27\textwidth}
        \begin{itemize}[leftmargin=*]
            \setlength\itemsep{-0.2em}
            \item Well-spread input\\(Assumption~\ref{equation:assumption})
        \end{itemize}
    \end{minipage}& $2k\log(\frac{\log k}{\epsilon})$ & $\tilde{O}(\frac{dk^{1.5}}{\epsilon}+\frac{km}{\epsilon})  $\\
    \rowcolor{Gray}
    LLL-RR (our paper)$^\star$ & 
    \begin{minipage}[t]{0.27\textwidth}
        \begin{itemize}[leftmargin=*]
            \setlength\itemsep{-0.2em}
            \item Well-spread input\\(Assumption~\ref{equation:assumption})
        \end{itemize}
    \end{minipage}& $2k$ & $\tilde{O}(\frac{dk^{1.5}}{\epsilon}+\frac{km}{\epsilon})  $\\
    \bottomrule
  \end{tabular}
  \caption{Comparison of different transfer learning algorithms in the linear setting. $\dagger$: the method trains all source tasks all at once and then uses the representation to train the target tasks. $\star$: all source tasks and target tasks are learned sequentially.}
  \label{table:related_work_linear}
\end{table}

\paragraph{Notation.}
We use bold upper-case letters to refer to matrices (\eg $\bm{X}$) and bold lower-case letters to refer to vectors (\eg $\bm{x}$). We use $[m]=\{1,2,\cdots,m\}$. We use $\tilde{O}$ to hide polylogarithmic factors. $O,\Omega,\Theta$ are standard notations for order of growth.
For any two vectors $\bm{x},\bm{y}$, let $\theta(\bm{x},\bm{y})$ be the angle between them. The angle between a vector $\bm{x}$ and a subspace $\bm{U}$ is defined as $\theta(\bm{x}, \bm{U})=\min\limits_{\bm{u}\in \bm{U}}\theta(\bm{x},\bm{u})$. For two subspaces $\bm{U}, \bm{V}$, define $\theta(\bm{U}, \bm{V})=\max\limits_{\bm{u}\in \bm{U}}\theta(\bm{u}, \bm{V})$. Thus $\theta(\bm{U},\bm{V}) \le \alpha$ iff for all $\bm{u}\in \bm{U}, \exists \bm{v}\in \bm{V}$ s.t. $\theta(\bm{u}, \bm{v}) \leq \alpha$.
We define the distance from a vector $\bm{u}$ to a subspace $\bm{F}$ as the orthogonal distance: $d(\bm{u}, \bm{F})=\min\limits_{\bm{v}\in \bm{F}}\|\bm{u}-\bm{v}\|_2$.
For a distribution $D$ and two vectors $\bm{u},\bm{v}$, we define  $d_D(\bm{u},\bm{v})=\mathbb{P}_{\bm{x} \sim D}(\sign(\bm{u}\cdot \bm{x})\neq \sign(\bm{v}\cdot \bm{x})) $.

\section{Algorithms}\label{section:algorithms}

We study three algorithms: the {\em basic lifelong learning} algorithm (basic LLL) in Section \ref{section: basic_lll}, {\em lifelong learning with representation refinement} algorithm (LLL-RR) in Section \ref{section:algorithm_refinement}, and {\em heuristic lifelong learning} algorithm (H-LLL) in Section~\ref{section:HLLL}. We prove guarantees for basic LLL and LLL-RR in Section \ref{section:theoretical_guarantees}. We show that H-LLL for deep neural networks (Section~\ref{section:HLLL}) outperforms state-of-the-art continual learning algorithms in Section~\ref{section:real_data_exp}.

\subsection{Basic Lifelong Learning}\label{section: basic_lll}

Our algorithm maintains a set of features $\sigma_1(.), \ldots, \sigma_{r}(.)$ while tasks are presented incrementally. As is shown in Figure \ref{fig:sketch_lll}, when the next task, say $(i+1)$-th task arrives, the algorithm first tries to learn a new linear combination $y_{i+1}$ of existing features using examples from the current task. If the best such combination has a low error, it records the linear combination parameters and moves on to the next task. If the error is higher than a threshold $\epsilon$, then it learns a new set of features $\sigma_{r+1},\cdots,\sigma_{r+k_0}$ and a new linear combination of them with error up to $\epsilon_{acc}$. Denote $\tilde{k}$ as the number of steps that the algorithm learns new features. Let $k_0$ be the number of features learned at one time, it is a constant dependent on whether the features are linear or not. We describe the algorithm in Algorithm~\ref{algorithm:basic_lll}.

\begin{algorithm}[htp]
    \caption{Basic Lifelong learning Algorithm (Basic LLL)}
    \label{algorithm:basic_lll}
\KwIn{$d,m,k$, labeled examples of $m$ tasks, threshold parameters $\epsilon_{acc}, \epsilon$.}
\begin{algorithmic}
\State The algorithm maintains a set of features $\sigma_1(.), \ldots, \sigma_{r}(.)$ along training. When task $i+1$ arrives,
\end{algorithmic}
\vspace{-0.2in}
 \begin{itemize}
    \item Use the data from the $(i+1)$-th task, attempt to learn the linear function $\tilde{\bm{c}}_{i+1}$ using the current features $\bm{\sigma}(\cdot) = (\sigma_1(\cdot),\cdots,\sigma_{r}(\cdot))^\top$. 
    \vspace{-0.1in}
    \item Check whether the hypothesis $\bm{x} \mapsto\sign( \tilde{\bm{c}}_{i+1}^\top\bm{\sigma}(\bm{x}))$ has error less than $\epsilon$.
    \vspace{-0.15in}
    \begin{enumerate}
        \item If yes, record the linear combination parameters $\tilde{\bm{c}}_{i+1}$. 
        \vspace{-0.05in}
        \item Otherwise, learn a new set of features $\bm{\sigma}'(\cdot)=(\sigma_{r+1}(\cdot), \cdots, \sigma_{r+k_0}(\cdot))^\top$ and a linear function $\tilde{\bm{c}}_{i+1}$ such that the predictor $\bm{x}\mapsto \sign(\tilde{\bm{c}}_{i+1}^\top \bm{\sigma}'(\bm{x}))$ has error less than $\epsilon_{acc}$. \\
        Update the representation $\bm{\sigma}(\cdot) = (\sigma_1(\cdot),\cdots,\sigma_{r+k_0}(\cdot))^\top$.
    \end{enumerate}
\end{itemize}
\vspace{-0.15in}
\Return{$m$ predictors: $\bm{x} \xrightarrow{} \sign(\Tilde{\bm{c}}_i^\top\bm{\sigma}(\bm{x}))$, where $\bm{\sigma}(\bm{x}) = (\sigma_1(\bm{x}),\cdots,\sigma_{\tilde{k}k_0}(\bm{x}))^\top,1\leq i\leq m$.}
\end{algorithm}
\vspace{-0.1in}

The algorithm works for both linear and nonlinear features. For linear features, if a new target function does not have a good representation as a combination of the features learned so far, the new target is itself a new feature since everything is linear (Algorithm 1, \cite{balcan2015efficient}), so $k_0$ above is $1$. 
For nonlinear features, when the current representation is not good enough, we can learn a set of $k_0 \le k$ nonlinear features with low error since each task corresponds to a target with at most $k$ features. Here we assume that a single such combination can be learned efficiently (\ie, a neural network with a small, single hidden layer) \cite{bartlett2019nearly}. Section~\ref{section:theoretical_guarantees} proves that the number of features to be learned can be upper bounded by $\tilde{O}(kk_0)$. We give the full guarantees for this basic LLL algorithm in Theorem~\ref{thm:vanilla_algo}.

\subsection{Lifelong Learning with Representation Refinement}\label{section:algorithm_refinement}

Similar to the basic LLL algorithm, LLL-RR also expands the feature space gradually. Whenever we learn a new task $i$, we attempt to learn it using the current representation and check whether a linear combination exists with an error less than $\epsilon$. If yes, we record the classifier for the current task and move to the next one.  Otherwise, we learn a new classifier for the current task with error at most $\epsilon_{acc}$, via new features; we then do a step of \textit{representation refinement} on all the features learned so far. The refinement step can also be done when the number of features grows above a threshold rather than every time a new task is learned to high accuracy. The formal description of LLL-RR is given in Appendix~\ref{section:appendix_algo_lllrr}.

\paragraph{Refinement algorithm.} Denote $\tilde{\bm{w}}_1,\cdots, \tilde{\bm{w}}_{(\hat{k}+1)k_0}$ as all the features learned so far. The goal of refinement is to find a minimal dimensional feature subspace that is within distance $\epsilon_{acc}$ to all learned features. We \textit{minimize the dimension} of feature subspace while keeping it close to the original representation by solving the optimization problem (\ref{equation:original_opt}). This problem is NP-hard, but we provide an efficient approximation algorithm for the linear case (and practical implementation for the general case in Section~\ref{section:HLLL}).

\begin{algorithm}[htp]
    \caption{Representation Refinement (RR)}
     \label{algorithm:refinement}
\KwIn{All features learned so far $ \tilde{\bm{w}}_1,\cdots,\tilde{\bm{w}}_{(\hat{k}+1)k_0}$, and the desired feature subspace dimension $k$.}
\begin{algorithmic}
    \State Solve the following optimization problem, and get the solution $\bm{V}'$.
\end{algorithmic}
\begin{equation}
\label{equation:original_opt}
    \begin{aligned}
    \min_{\bm{V}} \quad &\text{dim} \left(\bm{V}\right)\\
    \text{s.t.}\quad&d\left(\tilde{\bm{w}}_i, \bm{V}\right)\leq \epsilon_{acc},\quad \text{for }1\leq i \leq \left(\hat{k}+1\right)k_0 \\
\end{aligned}
\end{equation}
\Return{Refined representation $\bm{V}'$.}
\end{algorithm}

The refinement step is provably beneficial to the total sample complexity. Theorem \ref{thm:tune_linear} guarantees that lifelong learning algorithm with representation refinement (LLL-RR) can be ended with learning new features in $\tilde{O}(k)$ steps. The analysis is shown in Section \ref{section:theoretical_guarantees}.

\paragraph{Linear features.}
We provide an efficient implementation for the linear case by using a Semi-definite Programming (SDP) relaxation (\ref{equation:sdp_relation}) and then applying Principal Component Analysis (PCA) to round the SDP solution. The relaxation from (\ref{equation:original_opt}) to (\ref{equation:sdp_relation}) is natural. The positive semi-definite (PSD) matrix $\bm{X}$ represents the projection matrix to $\bm{V}^\bot$, the complement of the subspace $\bm{V}$. It is a relaxation since $\bm{X}$ might have fractional eigenvalues between $0$ and $1$. We describe the formal algorithm in Algorithm~\ref{algorithm:linear_refinement}. As is proved in Theorem \ref{thm:sdp_approx} (Section~\ref{section:theoretical_guarantees}), if the optimal dimension of the feature subspace is $k$, this linear case implementation will output a $(2k-1)$-dimensional subspace $\bm{V}'$ with $d(\tilde{\bm{w}}_i,\bm{V}')\leq \sqrt{2}\epsilon_{acc}, \forall i\in[\hat{k}+1]$. Consequently, LLL-RR terminates with feature dimension $O(k)$.

\begin{algorithm}[htp]
    \caption{Representation Refinement (RR) Implementation in Linear Case}
    \label{algorithm:linear_refinement}
    \KwIn{All features learned so far $ \tilde{\bm{w}}_1,\cdots,\tilde{\bm{w}}_{\hat{k}+1}$, and the desired feature subspace dimension $k$.}
\begin{enumerate}
\item Solve the following SDP, and get the solution $\bm{X}^*, t^*$.
\begin{equation}
\label{equation:sdp_relation}
    \begin{aligned}
    \min_{\bm{X},t}\quad & t\\
    \textrm{s.t.} \quad & \tilde{\bm{w}}_i^\top \bm{X}\tilde{\bm{w}}_i \leq t, 1\leq i\leq \hat{k}+1\\
    &0\preceq \bm{X}\preceq I\\
    &\tr{(\bm{X})}=d-k
    \end{aligned}
    \vspace{-0.15in}
\end{equation}
\item Do the singular value decomposition $\bm{X}^*=\sum_{i=1}^d\lambda_i\bm{u}_i\bm{u}_i^\top$, where $0\leq \lambda_1\leq \cdots\leq \lambda_d \leq 1$.
\end{enumerate}
\Return{Refined representation $\bm{V}'=\lspan(\bm{u}_1,\cdots,\bm{u}_{2k-1})$.}
\end{algorithm}
\vspace{-0.1in}

\subsection{A Lifelong Learning Heuristic for Deep Neural Networks}
\label{section:HLLL}

In order to apply our basic LLL algorithm to deep neural networks, we propose a heuristic lifelong learning (H-LLL) algorithm. The intuition of our LLL algorithm is to build an expandable and dynamic representation that can adapt to incoming tasks/classes without sacrificing the quality for previous tasks/classes. Following this intuition, we propose to learn a separate encoder for each task. We observe the training data $\mathcal{D}_i$ for the $i$-th task and the memory buffer $\mathcal{M}_i$ for the previous tasks. The memory buffer is constructed based on herding selection~\cite{welling2009herding,rebuffi2017icarl}. H-LLL works iteratively in two phases. First, H-LLL learns the representation with a separate encoder $f_{i}$ in the $i$-th task, while the other encoders $f_j,j<i$ are frozen during the training in the $i$-th task. Second, H-LLL finetunes the last classifier layer using the memory buffer $\mathcal{M}_i$ and the current task data $\mathcal{D}_i$. These two steps are iterated as the training proceeds. We take the $i$-th task as an example. Since we train a separate encoder $f_i$ for the $i$-th task, the representation of a sample $\bm{x}$ (by the end of the $i$-th task) is constructed by concatenating all the learned features: $\bm{v}_i(\bm{x})=\{f_1(\bm{x}),f_2(\bm{x}),\cdots,f_i(\bm{x})\}$ where $\bm{v}_i$ denotes the representation after learning the $i$-th task. The training uses cross-entropy loss on both the memory buffer $\mathcal{M}_i$ and the current dataset $\mathcal{D}_i$:
\begin{equation}
    \mathcal{L}=-\frac{1}{\left| \mathcal{M}_i \cup \mathcal{D}_i\right|} \sum_{i=1}^{\left| \mathcal{M}_i \cup \mathcal{D}_i\right|}\log\left(\textnormal{SoftMax}\left( \bm{W}_{\textnormal{cls}}^\top \bm{v}_i(\bm{x}) \right)\right)
\end{equation}
where $\bm{W}_{\textnormal{cls}}$ is the weight of the last classifier layer. After training of the representation is completed, we follow \cite{yan2021dynamically} and re-train the classifier layer with a heated-up softmax~\cite{zhang2018heated} and a balanced finetuning method~\cite{castro2018end}. Note that, for each encoder $f_j,\forall j$, we can parameterize it with any neural network. In this paper, we use ResNet-18 as $f_j,\forall j$.

\section{Theoretical Guarantees}\label{section:theoretical_guarantees}

Here we state the main theorems for the basic LLL algorithm and LLL-RR algorithm, bounding the representation size and complexity. Here our algorithm and analysis apply for both linear and nonlinear features. For nonlinear features, we consider the kernel induced by them. These features live in a potentially infinite-dimensional space (or exponential in $d$ dimensional space if, \eg, the input is from the Boolean hypercube).

The theorem for the basic lifelong learning algorithm is stated as follows.

\begin{thm}[Basic LLL]\label{thm:vanilla_algo} 
Consider the lifelong learning setting of input dimension $d$, $m$ tasks with $k$ common features.
Let $\epsilon_{acc}=\frac{\epsilon}{c\sqrt{k}}$ for a sufficiently small constant 
$c >0$. Under Assumption \ref{equation:assumption}, the basic LLL algorithm, learns new features at most $\tilde{k}=O(k\log(\log(k)/\epsilon))$ times and the dimension of the learned feature space is $O(k\log(\log(k)/\epsilon))$ for linear features and $O(k^2\log(\log(k)/\epsilon))$ for nonlinear features. The total number of labeled examples to learn all tasks to within  error $\epsilon$ is 
$O(\frac{dk^{1.5}}{\epsilon}\log(\frac{\log(k)}{\epsilon})\log(\frac{k}{\epsilon})+\frac{km}{\epsilon}\log(\frac{\log(k)}{\epsilon})\log(\frac{1}{\epsilon}))=\tilde{O}(dk^{1.5}/\epsilon+km/\epsilon)$ for linear features and a factor of $k$ higher for nonlinear features.
\end{thm}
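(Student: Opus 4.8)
The plan is to first convert every ``error at most $\epsilon$'' requirement into a statement about angles via Assumption~\ref{equation:assumption} (which gives $d_D(\bm u,\bm v)\in[c_1\theta(\bm u,\bm v),c_2\theta(\bm u,\bm v)]$ for unit vectors), and then to establish the theorem in three parts: (i) a bound $\tilde{k}=O(k\log(\log(k)/\epsilon))$ on the number of rounds that create new features; (ii) the representation-size bound, which is just $\tilde{k}\cdot k_0$ with $k_0=1$ (linear) and $k_0\le k$ (nonlinear); (iii) the sample-complexity bound, obtained by summing the cost of the $\le\tilde{k}$ feature-learning rounds and the $\le m$ ``is the current representation good enough?'' checks. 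Every quantity estimated from data (a linear/nonlinear combination of current features, a new feature, an empirical error) is controlled by a uniform-convergence bound, with a single union bound over the $\le\tilde{k}+m$ learning events handling adaptivity.

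Part (i) is the crux. A feature-creating round at task $i$ happens only when the best predictor over the current representation has error $>\epsilon$; by the angle reduction this means $\theta(\bm{a}_i^*,\bm F)=\Omega(\epsilon)$, where $\bm F$ is the span of the features learned so far and $\bm{a}_i^*$ is the true target direction, which lies in the fixed $k$-dimensional subspace $\bm{U}^*$ spanned by $\bm\sigma^*$. After the round, $\bm F$ is enlarged by a vector within distance $\epsilon_{acc}$ of $\bm{a}_i^*$. Hence $\tilde{k}$ is at most the length of the longest sequence of unit vectors $\bm{a}_1^*,\bm{a}_2^*,\dots\in\bm{U}^*$ for which there exist perturbations $\tilde{\bm w}_j$ with $\|\tilde{\bm w}_j-\bm{a}_j^*\|\le\epsilon_{acc}$ and $\theta\big(\bm{a}_{t+1}^*,\lspan(\tilde{\bm w}_1,\dots,\tilde{\bm w}_t)\big)=\Omega(\epsilon)$ for every $t$. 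I would bound this length by a potential argument on the spherical measure of the set $G_t\subseteq\bm{U}^*\cap S^{d-1}$ of unit directions that $\lspan(\tilde{\bm w}_1,\dots,\tilde{\bm w}_t)$ approximates well: $G_0$ contains a cap of angular radius $\approx\epsilon_{acc}$ around $\bm{a}_1^*$, and each feature-creating round grows the measure of $G_t$ by a factor bounded away from $1$, because whenever $\bm F$ approximately contains both a covered direction $\bm u$ and the new, far-away target $\bm{a}^*$ it approximately contains their whole two-dimensional span, so the union of $G_t$ with a fresh cap around $\bm a^*$ gets replaced by essentially all directions in the span of what is now covered. The process must terminate once $G_t$ exhausts $\bm{U}^*\cap S^{d-1}$, and the choice $\epsilon_{acc}=\epsilon/(c\sqrt k)$ is precisely calibrated so that an $\ell_1$-combination of up to $k$ learned features is still an $O(\epsilon)$-accurate approximation of any unit vector in their span, which is what forces termination. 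Running the measure bookkeeping crudely (a radius-$\epsilon_{acc}$ cap on $S^{k-1}$ growing by a constant factor per round) already gives $\tilde{k}=O(k\log(k/\epsilon))$; sharpening this to $O(k\log(\log(k)/\epsilon))$ requires a more careful, multi-scale version controlling how the approximation error compounds as larger spans are covered, and getting that is the main obstacle.

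Given (i), part (ii) is immediate: the learned feature space has dimension $\le\tilde{k}k_0$, i.e.\ $O(k\log(\log(k)/\epsilon))$ for linear features and $O(k^2\log(\log(k)/\epsilon))$ for nonlinear features. For (iii), a feature-creating round learns a halfspace (linear case) or a two-layer network with $k_0\le k$ hidden units (nonlinear case, fit efficiently by assumption, with generalization controlled by~\cite{bartlett2019nearly}) in $\R^d$ to accuracy $\epsilon_{acc}=\epsilon/(c\sqrt k)$; in the realizable regime this needs $\tilde{O}(d/\epsilon_{acc})=\tilde{O}(d\sqrt k/\epsilon)$ samples in the linear case (finding a consistent homogeneous halfspace is a linear program) and a factor $k$ more in the nonlinear case since $\Theta(k)$ features are fit, so summing over the $\tilde{k}=\tilde{O}(k)$ such rounds gives $\tilde{O}(dk^{1.5}/\epsilon)$, resp.\ $\tilde{O}(dk^{2.5}/\epsilon)$. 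Each of the $\le m$ tasks additionally pays for one check — learning a near-optimal linear combination of the $O(\tilde{k}k_0)$ current features and comparing its error to the threshold $\epsilon$ — which happens in a space of dimension $\tilde{O}(k)$, resp.\ $\tilde{O}(k^2)$, and by learning halfspaces under Assumption~\ref{equation:assumption} costs $\tilde{O}(k/\epsilon)$, resp.\ $\tilde{O}(k^2/\epsilon)$, samples; summing over tasks gives $\tilde{O}(km/\epsilon)$, resp.\ $\tilde{O}(k^2m/\epsilon)$. Adding the two contributions and absorbing logarithmic factors yields $\tilde{O}(dk^{1.5}/\epsilon+km/\epsilon)$ for linear features and a factor $k$ more for nonlinear features, as claimed. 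The nonlinear case additionally requires running the part-(i) argument inside the (possibly infinite-dimensional) feature/RKHS space induced by the nonlinear features, where the true features still span a subspace of dimension $\le k$ and each feature-creating round contributes at most $k$ vectors; the geometric argument is otherwise unchanged.
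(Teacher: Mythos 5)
Your overall architecture (reduce test error to angle via Assumption~\ref{equation:assumption}, bound the number $\tilde k$ of feature-creating rounds by a geometrically growing potential, then sum per-round sample costs) matches the paper's, and your parts (ii) and (iii) are essentially what the paper does. The gap is in part (i), which you yourself flag as the main obstacle, and it is genuine on two counts. First, the growth mechanism you propose for the spherical measure of the covered set $G_t$ does not work as stated: when the new target $\bm{a}^*$ is only $\Theta(\epsilon)$-far from the currently covered directions, a unit vector in the two-dimensional span of a covered direction $\bm{u}$ and $\bm{a}^*$ can have coefficients as large as $1/\sin\theta(\bm{u},\bm{a}^*)=\Theta(1/\epsilon)$, so the approximation quality of $\bm{u}$ and $\bm{a}^*$ is \emph{not} inherited by the rest of their span; the same coefficient blow-up undermines the claim that an $\ell_1$-combination of up to $k$ learned features approximates every unit vector in their span (that requires the learned features to be well-conditioned, which nothing in the algorithm guarantees). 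Second, even granting a constant-factor growth per round, a cap of angular radius $\epsilon_{acc}$ growing to the full sphere $S^{k-1}$ only yields $\tilde k=O(k\log(k/\epsilon))$, whereas the theorem claims $\tilde k=O(k\log(\log(k)/\epsilon))$, and you do not supply the sharpening.

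The paper resolves both issues with a single device: the potential is the volume of the maximum inscribed ellipsoid of the symmetric \emph{convex} body $X_t\subseteq B_k(0,1)$ consisting of points within $O(\epsilon/\sqrt k)$ of every subspace consistent with the features learned so far (Lemma~\ref{lemma:cvx_set}, Lemma~\ref{lemma:in_set}, Corollary~\ref{cor:in_set}). Convexity means only convex combinations are ever used, so there is no coefficient blow-up; a target at distance at least $\epsilon/c''$ from the current subspace yields a new vertex at $2\sqrt k$ times the ellipsoid radius in its direction, forcing the inscribed-ellipsoid volume up by a factor $13/10$ (Lemma~\ref{lemma: max_ellipsoid}); and---this is where the $\log\log$ comes from---the terminal volume is bounded not by $\vol(B_k(0,1))$ but by the Blaschke--Santal\'o estimate $O\bigl((\sqrt{2\log(2\tilde k)/k})^k\bigr)\vol(B_k(0,1))$ for the inscribed ellipsoid of a symmetric polytope with $2\tilde k$ unit vertices (Lemma~\ref{lem:maxvol}). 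Termination is the statement that $X_{i_{\tilde k}}$ contains $B_k(0,\tfrac{1}{2\sqrt k})$, which after rescaling by $2\sqrt k$ and using $\epsilon_{acc}=\epsilon/(c\sqrt k)$ gives every future target in $\bm{V}^*$ error $O(\epsilon)$. You would need to replace your cap-measure potential with a convex-body potential of this kind (or supply an equally quantitative substitute) to complete the proof.
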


Our main result analyzes the lifelong learning algorithm with representation refinement. 

\begin{thm}[LLL with Representation Refinement]\label{thm:tune_linear} Consider the lifelong learning setting of input dimension $d$, $m$ tasks with $k$ common features. Suppose that the algorithm has access to an oracle that gives a constant-factor approximation of Optimization Problem \ref{equation:original_opt}. Set
$\epsilon_{acc}=\frac{\epsilon}{c\sqrt{k}}$ for a sufficiently small constant $c>0$. Under Assumption \ref{equation:assumption}, the LLL-RR algorithm learns at most $O(k\log(\log(k)/\epsilon))$ new features, and the dimension of the feature space is $O(k)$. The total number of labeled examples to learn tasks to within error $\epsilon$ is $O(\frac{dk^{1.5}}{\epsilon}\log(\frac{\log(k)}{\epsilon})\log(\frac{k}{\epsilon})+\frac{km}{\epsilon}\log(\frac{1}{\epsilon}))=\tilde{O}(dk^{1.5}/\epsilon+km/\epsilon)$.
\end{thm}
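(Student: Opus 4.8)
The plan is to reuse the geometric analysis underlying Theorem~\ref{thm:vanilla_algo} (basic LLL) and show that the refinement step prevents the feature dimension from growing beyond $O(k)$, while the number of ``learn new features'' steps stays at $O(k\log(\log(k)/\eps))$ as before. The key object, as in the basic analysis, is the set of target unit vectors that are well-approximated by the current learned subspace $\bm{\sigma}$; call it $S$. Assumption~\ref{equation:assumption} converts a test error $\le \eps$ into an angular bound: if the best linear combination of current features has error $> \eps$, then $\theta(\bm a^*_i, \text{span}(\bm\sigma)) \gtrsim \eps$, and conversely learning a new classifier to error $\eps_{acc} = \eps/(c\sqrt k)$ places the new feature within angle $O(\eps_{acc})$ of the true subspace $\bm U^* = \text{row}(\bm W^*)$. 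First I would re-establish (citing the argument behind Theorem~\ref{thm:vanilla_algo}) that each time we add a new feature, the measure of the ``well-approximated'' set $S \cap \bm U^*$ grows by a constant geometric factor, so after $\hat k = O(k\log(\log k/\eps))$ augmentation steps every vector in $\bm U^*$ is within angle $O(\eps/\sqrt k)$ of the current subspace; this bounds the number of \emph{tasks that trigger augmentation}, and hence the sample cost of those tasks.

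The new ingredient is controlling the \emph{dimension} after each refinement. Here I would invoke the oracle hypothesis: right before a refinement we have at most $(\hat k+1)k_0$ learned feature vectors $\tilde{\bm w}_j$, each within $\eps_{acc}$ (orthogonal distance, after normalization — a $\sqrt2$ loss as in Theorem~\ref{thm:sdp_approx}) of the true $k$-dimensional $\bm U^*$. Hence the true $\bm U^*$ itself is a feasible solution to Optimization Problem~\ref{equation:original_opt} of dimension $k$, so the optimum has dimension $\le k$, and the constant-factor oracle returns $\bm V'$ of dimension $O(k)$ with $d(\tilde{\bm w}_j, \bm V') \le \eps_{acc}$ for all $j$. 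The second point to check is that refinement does not \emph{destroy} the approximation guarantee: since each $\tilde{\bm w}_j$ was within $O(\eps_{acc})$ of $\bm U^*$ and now $\bm V'$ is within $O(\eps_{acc})$ of each $\tilde{\bm w}_j$, a triangle-inequality argument for angles (using the $\theta(\cdot,\cdot)$ subspace metric from the Notation paragraph) shows $\theta(\bm U^*, \bm V') = O(\eps_{acc}\sqrt k) = O(\eps/c)$ — wait, more carefully: each individual true feature direction is close to some $\tilde{\bm w}_j$, which is close to $\bm V'$, so $\theta(\bm U^*, \bm V')$ is still $O(\eps_{acc})$ per-vector; I want to argue the ``well-approximated set'' $S$ is essentially preserved (up to a constant blow-up in the angle, absorbed by the slack constant $c$), so the geometric-growth invariant survives refinement and the total count of augmentation steps is unaffected. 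This is the step I expect to be the main obstacle: arguing that the refinement's constant-factor error blowup can be folded into the constants without breaking the $O(k\log(\log k/\eps))$ bound on augmentation steps, i.e., that errors do not compound across the (at most $O(k\log\log(k/\eps))$) successive refinements.

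With those two facts in hand the sample-complexity bookkeeping is routine. There are three sources of samples: (i) tasks that are learned directly on current features to error $\eps$ — each costs $\tilde O(\dim(\bm\sigma)/\eps) = \tilde O(k/\eps)$ samples by a VC / uniform-convergence bound for halfspaces in the $O(k)$-dimensional feature space (the dimension is now $O(k)$ rather than $O(k\log\log(k/\eps))$ thanks to refinement, but this only changes log factors, matching the stated bound), summed over $m$ tasks giving $\tilde O(km/\eps)$; (ii) the $\hat k = O(k\log(\log k/\eps))$ augmentation steps, each learning $k_0$ new features to error $\eps_{acc} = \eps/(c\sqrt k)$ on the raw $d$-dimensional input, costing $\tilde O(d/\eps_{acc}) = \tilde O(d\sqrt k/\eps)$ samples each, for a total of $\tilde O(dk^{1.5}/\eps)$; (iii) refinement uses \emph{no} samples. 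Adding these gives the claimed $\tilde O(dk^{1.5}/\eps + km/\eps)$. I would conclude by noting that in the linear case $k_0=1$ and the oracle of Theorem~\ref{thm:sdp_approx} (SDP + PCA rounding) supplies the required constant-factor approximation in polynomial time, so the whole algorithm is efficient; the explicit log factors in the theorem statement come from (a) the $\log(\log(k)/\eps)$ factor counting augmentation rounds and (b) the $\log(k/\eps)$ resp.\ $\log(1/\eps)$ confidence/union-bound factors in the uniform-convergence estimates for the two sample sources.
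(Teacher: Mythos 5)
Your overall architecture matches the paper's: a geometric growth argument bounds the number of augmentation steps by $O(k\log(\log k/\eps))$, the constant-factor oracle applied to Problem~(\ref{equation:original_opt}) (for which the true $k$-dimensional subspace is feasible) keeps the representation dimension at $O(k)$, refinement is sample-free, and the bookkeeping over the two sample sources gives $\tilde O(dk^{1.5}/\eps + km/\eps)$. One presentational caveat: in the paper the LLL-RR analysis is the primary proof and Theorem~\ref{thm:vanilla_algo} is derived \emph{from} it, so you cannot literally ``cite the argument behind Theorem~\ref{thm:vanilla_algo}''; the growth argument has to be built here.

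The step you flag as the main obstacle --- that the refinement's constant-factor error could compound across the $O(k\log(\log k/\eps))$ successive refinements and break the growth invariant --- is a genuine gap in your write-up, and the paper closes it with a specific device that your per-refinement triangle inequality does not supply. Two ingredients. First, Algorithm~\ref{algorithm:lll_rr} memorizes the full list of raw learned features $\tilde{\bm w}_{i_1},\dots,\tilde{\bm w}_{i_{\hat k}}$ and re-solves the refinement problem against \emph{all} of them at every step, so the maintained subspace is always within $c_1\eps_{acc}$ of every raw feature ever learned; the approximation error is always one hop from the raw data and never compounds. Second, and this is the key construction, the ``well-approximated set'' is not defined relative to the single current subspace (which changes non-monotonically under refinement) but relative to the entire family $Y_{i_{\hat k}}$ of subspaces feasible for the accumulated refinement constraints: $X_{i_{\hat k}}$ is the set of vectors in the unit ball of the true subspace within $O(\eps/\sqrt k)$ of \emph{every} $\bm V\in Y_{i_{\hat k}}$. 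Since constraints only accumulate, $Y$ shrinks and $X$ grows monotonically across refinements, so the invariant survives for free; and since the maintained subspace is always some member of $Y_{i_{\hat k}}$, a task that fails on it certifies that the scaled new target lies outside $X_{i_{\hat k}}$, which is exactly what drives the volume growth (Lemmas~\ref{lemma:cvx_set}--\ref{lemma: max_ellipsoid}). Finally, note that geometric growth alone does not yield the $O(k\log(\log k/\eps))$ count: one also needs the lower bound $(\Omega(\eps/\sqrt k))^{k}\vol(B_k(0,1))$ on the initial volume and an upper bound on the terminal volume of the inscribed ellipsoid of a symmetric polytope with $O(\tilde k)$ unit vertices (the paper's Lemma~\ref{lem:maxvol}, via Blaschke--Santal\'o), which is what produces the $\log\log$ rather than a plain $\log$.
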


In the linear setting, we provide an efficient implementation of the constant-factor approximation oracle in Algorithm~\ref{algorithm:linear_refinement} with the following guarantee.

\begin{thm}[Approximation]\label{thm:sdp_approx}
In the linear case, for the optimization problem (\ref{equation:original_opt}), if there exists a subspace $\bm{V}^*$ of $k$ dimension with $d(\tilde{\bm{w}}_i, \bm{V}^*) \leq \epsilon_{acc}, \forall i\in[\hat{k}+1]$, then for any constant $c>1$, we can get a $(ck-1)$-dimensional subspace solution with approximation factor $\sqrt{1+\frac{1}{c-1}}$ in maximum distance. Specifically, let $c=2$, the output of Algorithm \ref{algorithm:linear_refinement}, $\bm{V}'$, is a $(2k-1)$-dimensional subspace s.t. $d(\tilde{\bm{w}}_i, \bm{V}')\leq \sqrt{2}\eps_{acc},\forall i\in[\hat{k}+1]$.
\end{thm}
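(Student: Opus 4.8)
The plan is to read the PSD matrix $\bm{X}^{*}$ returned by the SDP~(\ref{equation:sdp_relation}) as a \emph{fractional} orthogonal projector onto the complement of the target subspace, and then to round it by simply discarding its smallest eigenvalues. The whole argument is driven by the trace constraint $\tr(\bm{X})=d-k$ together with $0\preceq\bm{X}\preceq I$.

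First I would establish that the SDP is feasible with optimal value $t^{*}\le \eps_{acc}^{2}$. Without loss of generality assume $\dim\bm{V}^{*}=k$: if $\dim\bm{V}^{*}<k$ we extend $\bm{V}^{*}$ to a $k$-dimensional subspace by adjoining arbitrary orthonormal directions, which only shrinks each $d(\tilde{\bm{w}}_{i},\bm{V}^{*})$. Then take $\bm{X}=P_{(\bm{V}^{*})^{\perp}}$, the orthogonal projector onto $(\bm{V}^{*})^{\perp}$: it is symmetric with spectrum $\{0,1\}$ (so $0\preceq\bm{X}\preceq I$), it has $\tr(\bm{X})=\dim(\bm{V}^{*})^{\perp}=d-k$, and it satisfies $\tilde{\bm{w}}_{i}^{\top}\bm{X}\tilde{\bm{w}}_{i}=\|P_{(\bm{V}^{*})^{\perp}}\tilde{\bm{w}}_{i}\|_{2}^{2}=d(\tilde{\bm{w}}_{i},\bm{V}^{*})^{2}\le\eps_{acc}^{2}$. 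Hence $(\bm{X},\eps_{acc}^{2})$ is feasible, so $t^{*}\le\eps_{acc}^{2}$; this identity is also exactly what makes~(\ref{equation:sdp_relation}) a bona fide relaxation of~(\ref{equation:original_opt}).

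Next I would count the small eigenvalues of $\bm{X}^{*}$. Writing the SVD $\bm{X}^{*}=\sum_{j=1}^{d}\lambda_{j}\bm{u}_{j}\bm{u}_{j}^{\top}$ with $0\le\lambda_{1}\le\cdots\le\lambda_{d}\le1$, the trace constraint gives $\sum_{j=1}^{d}(1-\lambda_{j})=d-(d-k)=k$ with every term nonnegative. So for a threshold $\tau=1-\tfrac1c\in(0,1)$, the number of indices $j$ with $\lambda_{j}<\tau$ is strictly less than $k/(1-\tau)=ck$; for $c=2$ this means at most $2k-1$ eigenvalues lie below $1/2$. Since the $\lambda_{j}$ are sorted in increasing order, it follows that $\lambda_{j}\ge 1/2$ for every $j\ge 2k$ (and more generally $\lambda_{j}\ge 1-\tfrac1c$ for $j\ge\lceil ck\rceil$). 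Now set $\bm{V}'=\lspan(\bm{u}_{1},\dots,\bm{u}_{2k-1})$, so that $(\bm{V}')^{\perp}=\lspan(\bm{u}_{2k},\dots,\bm{u}_{d})$, and bound the orthogonal distance via the eigenbasis:
\[
d(\tilde{\bm{w}}_{i},\bm{V}')^{2}\;=\;\sum_{j\ge 2k}\langle\bm{u}_{j},\tilde{\bm{w}}_{i}\rangle^{2}\;\le\;2\sum_{j\ge 2k}\lambda_{j}\langle\bm{u}_{j},\tilde{\bm{w}}_{i}\rangle^{2}\;\le\;2\sum_{j=1}^{d}\lambda_{j}\langle\bm{u}_{j},\tilde{\bm{w}}_{i}\rangle^{2}\;=\;2\,\tilde{\bm{w}}_{i}^{\top}\bm{X}^{*}\tilde{\bm{w}}_{i}\;\le\;2t^{*}\;\le\;2\eps_{acc}^{2},
\]
which gives $d(\tilde{\bm{w}}_{i},\bm{V}')\le\sqrt2\,\eps_{acc}$ for all $i\in[\hat{k}+1]$. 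Repeating the same chain with threshold $1-\tfrac1c$ and dimension $\lceil ck\rceil-1$ replaces the factor $2$ by $\tfrac{1}{1-1/c}=\tfrac{c}{c-1}=1+\tfrac{1}{c-1}$, yielding the general approximation factor $\sqrt{1+\tfrac{1}{c-1}}$. Finally, feeding a $(2k-1)$-dimensional refined representation into the LLL-RR analysis (Theorem~\ref{thm:tune_linear}) keeps the feature dimension $O(k)$ throughout.

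None of the steps is computationally heavy; the only place that needs care is the eigenvalue counting in the third paragraph — extracting the \emph{tight} bound of $2k-1$ (not $2k$) dimensions from the trace constraint, and checking that the increasing ordering of the $\lambda_{j}$ really forces every retained direction $\bm{u}_{2k},\dots,\bm{u}_{d}$ to carry eigenvalue $\ge 1/2$. One should also be careful with the reduction $\dim\bm{V}^{*}=k$ and with the identity $\tilde{\bm{w}}_{i}^{\top}\bm{X}\tilde{\bm{w}}_{i}=d(\tilde{\bm{w}}_{i},\bm{V})^{2}$ for a genuine projector, since these are what tie the SDP value back to the original minimum-dimension problem.
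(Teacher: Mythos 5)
Your proof is correct and follows essentially the same route as the paper: certify $t^*\le\eps_{acc}^2$ via the feasible point $P_{(\bm{V}^*)^\perp}$, round by keeping the $2k-1$ smallest-eigenvalue directions, and use the trace constraint to force $\lambda_j\ge 1-\tfrac1c$ on the discarded directions, giving the $\sqrt{1+\tfrac{1}{c-1}}$ blow-up. Your Markov-style count of eigenvalues below the threshold is just a repackaging of the paper's averaging bound $\lambda_{k'+1}\ge\frac{k'-k+1}{k'+1}$, and your explicit feasibility check is a small (welcome) clarification of a step the paper leaves implicit.
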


In Appendix~\ref{section:appendix_theoretical_basic}, we give
another approach for analyzing the basic LLL algorithm with weaker guarantees, similar to the proof in \cite{balcan2015efficient}. 

\subsection{Proof of Guarantees for Basic LLL algorithm and LLL-RR algorithm}\label{section:lllrr}

We will prove the guarantees for LLL-RR algorithm (Theorem~\ref{thm:tune_linear}) first, and then the proof of Theorem~\ref{thm:vanilla_algo} follows easily. 

\paragraph{Proof idea and plan.}
Our proof is based on geometry. Firstly, we show that Assumption~\ref{equation:assumption} guarantees that the distance between hypothesis vectors approximates the test error within a constant factor. 
As is shown in Figure~\ref{fig:intuition}, for any target $\bm{a}_i$ that has a large error based on the previous features, $d(\bm{a}_i, \bm{V}_{i-1}) \geq \eps$ neglecting constants. Learning $\bm{a}_i$ accurately will help reduce the angle between the feature subspace of the algorithm $\bm{V}_i$ and the true feature subspace $\bm{V}^*$. To quantify the improvement in the angle between the feature subspaces, we construct a convex set whose volume grows at a geometric rate. This leads to the upper bound on the number of new features.

To be more specific, denote $i_1,\cdots,i_{\tilde{k}}$ as the indices of tasks where we learn new features. At step $i_{\hat{k}}$, we construct a set $Y_{i_{\hat{k}}}$ of all possible subspaces that are feasible solutions to the refinement optimization problem (\ref{equation:original_opt}). Let $X_{i_{\hat{k}}}$ be the set of vectors in the unit ball in the true $k$-dimensional feature subspace that are within distance $O(\eps/\sqrt{k})$ to all subspaces in $Y_{i_{\hat{k}}}$. Then we can show that $X_{i_{\hat{k}}}$ is a symmetric convex set. Clearly, the set $Y_{i_{\hat{k}}}$ shrinks during training as we have more and more constraints in the optimization problem. Alongside, the volume of $X_i$ increases exponentially. The learning procedure terminates when $X_{i_{\hat{k}}}$ covers the ball $B_k(0,1/2\sqrt{k})$, which means that a target function (unit vector) spanned by the true $k$ features will have error $O(\epsilon)$ to the solution learned by LLL-RR. In other words, the feature subspace we learn can solve all future tasks with small errors using only hypothesis vectors from the learned feature subspace.

\begin{figure}[htbp]
    \centering
    \includegraphics[height=2.3in]{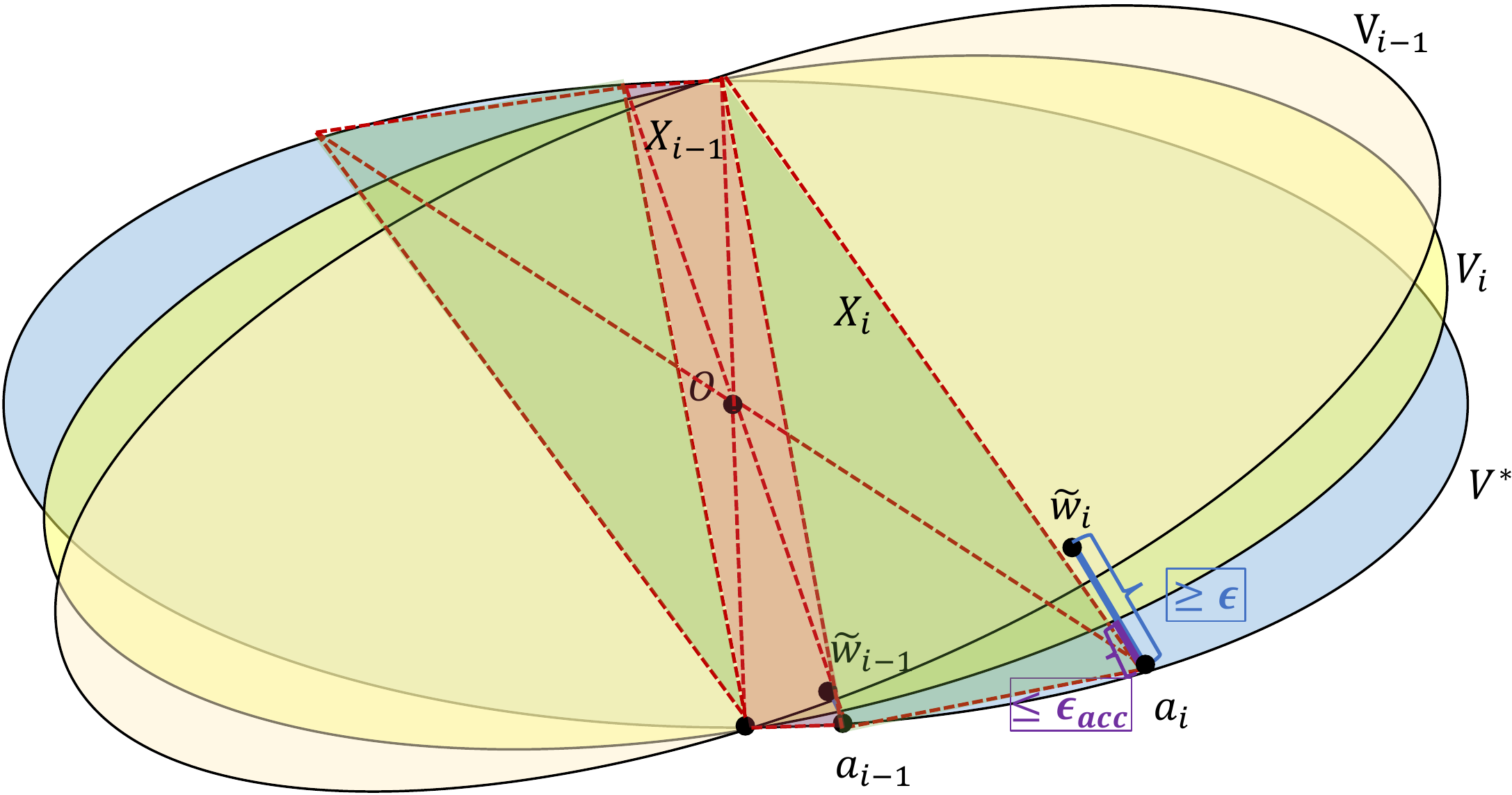}
    \caption{Geometric illustration of the proof sketch. For any target $\bm{a}_i$ that has more than $\eps$ error based on the previous feature subspace $\bm{V}_{i-1}$, the algorithm accurately learns a new feature within error $\eps_{acc}$, and therefore pushes the new feature subspace $\bm{V}_i$ towards the true one $\bm{V}^*$.}
    \label{fig:intuition}
\end{figure}

We will prove this step by step. Lemma~\ref{lemma:error2dist} bridges the test error to the distance metric. Lemma~\ref{lemma:cvx_set}, Lemma~\ref{lemma:in_set} and Corollary~\ref{cor:in_set} show that the convex hull of true feature vectors is contained in the set $X_{i_{\hat{k}}}$. Lemma~\ref{lemma: max_ellipsoid} carefully analyzes the maximum volume ellipsoid in $X_{i_{\hat{k}}}$, whose volume grows exponentially. Based on these facts, we can bound the number of new features and prove Theorem~\ref{thm:tune_linear}.

We will first prove the theorem for the linear setting. Then it can be naturally extended to the nonlinear case when we consider the kernel of the features, regarding each feature as a potentially infinite-dimensional vector. 

Let $\bm{X}=\mathbb{R}^d$, for each task $i$, there exists unit length $\bm{a}_i$ such that all $(\bm{x},y)$ drawn from $P$ satisfies $\text{sign}(\langle\bm{a}_i, \bm{x}\rangle)=y$. Let $\bm{A} \in\mathbb{R}^{m\times d}$, rows of which are $\bm{a}_i^\top$. Since the parameters $\bm{a}_i$ lie in some $k$-dimensional subspace with $k\ll\min(m,n)$, there exists $\bm{W}\in\mathbb{R}^{k\times d}, \bm{C}\in\mathbb{R}^{m\times k}$ such that $\bm{A}=\bm{CW}$. Rows $\bm{w}_1^\top,\cdots,\bm{w}_k^\top$ can be seen as $k$ linear meta-features that are sufficient to learn $m$ tasks. In each step when the current feature subspace cannot achieve low error, we learn new features. Then we take the refinement step to keep a minimal dimensional subspace that is close to all current features $\tilde{\bm{w}}_1,\cdots, \tilde{\bm{w}}_{\hat{k}}$.

\begin{lemma}\label{lemma:error2dist}
Given two unit vectors $\bm{u}, \bm{v}$ and a distribution $D$. If $D$ satisfies Assumption \ref{equation:assumption}, then there exist nonzero constants $c'$ and $c''$ such that $c'\|\bm{u}-\bm{v}\|_2\leq d_D(\bm{u},\bm{v}) \leq c''\|\bm{u}-\bm{v}\|_2$.
\end{lemma}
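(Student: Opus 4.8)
The plan is to connect the probability of sign disagreement $d_D(\bm u,\bm v)$ to the angle $\theta(\bm u,\bm v)$ using Assumption~\ref{equation:assumption}, and then connect the angle to the Euclidean distance $\|\bm u-\bm v\|_2$ for unit vectors. Assumption~\ref{equation:assumption} already gives $c_1\theta(\bm u,\bm v)\le d_D(\bm u,\bm v)\le c_2\theta(\bm u,\bm v)$, so the only remaining work is the elementary geometric fact that for two unit vectors the chord length and the angle are comparable up to absolute constants.

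Concretely, I would first record the exact identity $\|\bm u-\bm v\|_2 = 2\sin\bigl(\theta(\bm u,\bm v)/2\bigr)$, which holds for any two unit vectors since $\|\bm u-\bm v\|_2^2 = 2-2\cos\theta = 4\sin^2(\theta/2)$. Next I would invoke the standard bounds $\frac{2}{\pi}\,x \le \sin x \le x$ valid for $x\in[0,\pi/2]$ (Jordan's inequality), applied with $x=\theta/2\in[0,\pi/2]$ since $\theta\in[0,\pi]$. This yields
\begin{equation*}
\frac{2}{\pi}\,\theta(\bm u,\bm v) \;\le\; \|\bm u-\bm v\|_2 \;=\; 2\sin\!\bigl(\theta(\bm u,\bm v)/2\bigr) \;\le\; \theta(\bm u,\bm v).
\end{equation*}
Combining this two-sided comparison with Assumption~\ref{equation:assumption} gives
\begin{equation*}
c_1\,\|\bm u-\bm v\|_2 \;\le\; c_1\,\theta(\bm u,\bm v) \;\le\; d_D(\bm u,\bm v) \;\le\; c_2\,\theta(\bm u,\bm v) \;\le\; \frac{\pi c_2}{2}\,\|\bm u-\bm v\|_2,
\end{equation*}
so the claim holds with $c' = c_1$ and $c'' = \frac{\pi c_2}{2}$, both nonzero since $c_2>c_1>0$.

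There is essentially no hard step here — this lemma is a one-line consequence of the assumption plus trigonometry; the only thing to be slightly careful about is the range of $\theta$ (it lies in $[0,\pi]$, so $\theta/2\in[0,\pi/2]$, which is exactly the range where Jordan's inequality applies and where $\sin$ is monotone). One could alternatively absorb everything into the constants and simply say ``distance and angle are within a constant factor for unit vectors,'' but spelling out $2\sin(\theta/2)$ makes the constants explicit and transparent. This lemma will then be used throughout Section~\ref{section:lllrr} to translate the $\eps$ error guarantees of the single-task learner into statements about Euclidean distances $d(\bm a_i,\bm V_{i-1})$ between hypothesis vectors and learned feature subspaces.
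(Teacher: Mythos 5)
Your proof is correct and follows essentially the same route as the paper: both reduce the lemma to the elementary fact that for unit vectors the chord length $\|\bm{u}-\bm{v}\|_2$ and the angle $\theta(\bm{u},\bm{v})$ agree up to absolute constants, then sandwich $d_D$ using Assumption~\ref{equation:assumption}. The only difference is the trigonometric estimate — you use the exact identity $\|\bm{u}-\bm{v}\|_2=2\sin(\theta/2)$ with Jordan's inequality, which cleanly covers the full range $\theta\in[0,\pi]$, whereas the paper's Taylor bound $\cos x\le 1-11x^2/24$ is only valid for $x\le 1$; your version is, if anything, slightly more careful.
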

\begin{proof}
    By Assumption \ref{equation:assumption}, $\exists c_1,c_2$ s.t. $c_1\theta(\bm{u},\bm{v})\leq d_D(\bm{u},\bm{v})\leq c_2\theta(\bm{u},\bm{v})$. Using the Taylor expansion of cosine function, we know $1-x^2/2 \leq \cos(x) \leq 1-x^2/2!+x^4/4! \leq 1 - 11x^2/24$. Since $\|\bm{u}-\bm{v}\|_2^2 = 2-2\cos(\theta(\bm{u},\bm{v}))$, we have $\sqrt{\frac{11}{12}}\theta(\bm{u},\bm{v}) \leq \|\bm{u}-\bm{v}\|_2 \leq \theta(\bm{u},\bm{v})$. Choose $c'=c_1,c''=\sqrt{\frac{12}{11}}c_2$, we get the results proved. 
\end{proof}

\begin{lemma}\label{lemma:cvx_set}
Let $S$ be a set of subspaces. Let $X=\{\bm{x}\in B_k(0,1)| d(\bm{x}, \bm{Y})\leq r, \forall \bm{Y}\in S\}$. Here $B_k(0,1)$ is the unit ball in $k$-dimension. Then the set $X$ is a symmetric convex set.
\end{lemma}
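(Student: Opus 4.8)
The plan is to verify the two claimed properties — symmetry and convexity — directly from the definition of $X$, treating it as an intersection of (infinitely many) half-space-like conditions, one for each subspace $\bm{Y}\in S$. For a fixed subspace $\bm{Y}$, define $g_{\bm{Y}}(\bm{x}) = d(\bm{x},\bm{Y}) = \|\bm{x} - \Pi_{\bm{Y}}\bm{x}\|_2 = \|\Pi_{\bm{Y}^\perp}\bm{x}\|_2$, where $\Pi_{\bm{Y}}$ and $\Pi_{\bm{Y}^\perp}$ are the orthogonal projections onto $\bm{Y}$ and its orthogonal complement. The key observation is that $g_{\bm{Y}}$ is a seminorm: it is the composition of the linear map $\Pi_{\bm{Y}^\perp}$ with the Euclidean norm, hence it is (i) even, $g_{\bm{Y}}(-\bm{x}) = g_{\bm{Y}}(\bm{x})$, and (ii) convex, since $g_{\bm{Y}}(\lambda \bm{x} + (1-\lambda)\bm{x}') = \|\Pi_{\bm{Y}^\perp}(\lambda \bm{x} + (1-\lambda)\bm{x}')\|_2 \le \lambda\|\Pi_{\bm{Y}^\perp}\bm{x}\|_2 + (1-\lambda)\|\Pi_{\bm{Y}^\perp}\bm{x}'\|_2$ by linearity of $\Pi_{\bm{Y}^\perp}$ and the triangle inequality.

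Given this, the argument is short. For each $\bm{Y}\in S$, the set $\{\bm{x} : g_{\bm{Y}}(\bm{x}) \le r\}$ is a sublevel set of a convex function, hence convex, and it is symmetric because $g_{\bm{Y}}$ is even. The unit ball $B_k(0,1)$ is likewise convex and symmetric. Since $X$ is the intersection of $B_k(0,1)$ with all these sets over $\bm{Y}\in S$, and intersections of convex sets are convex while intersections of symmetric sets are symmetric, $X$ is a symmetric convex set. (If $S$ is empty, $X = B_k(0,1)$, which is trivially symmetric and convex.)

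I do not expect any genuine obstacle here; the statement is essentially a packaging of the fact that distance-to-a-subspace is a seminorm. The only point requiring a moment's care is making sure $d(\cdot,\bm{Y})$ is interpreted as the \emph{orthogonal} distance (which the paper's notation section fixes), so that it equals $\|\Pi_{\bm{Y}^\perp}\bm{x}\|_2$ and the linearity/triangle-inequality argument goes through; with a non-orthogonal or non-linear notion of "distance to a subspace" convexity could fail, but that is not the case here. One should also note that the bound $d(\bm{x},\bm{Y})\le r$ is a non-strict inequality, so the sublevel sets are closed, but strictness is irrelevant to convexity and symmetry.
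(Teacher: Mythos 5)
Your proof is correct and follows essentially the same route as the paper: both arguments reduce to the fact that $d(\bm{x},\bm{Y})=\|\Pi_{\bm{Y}^\perp}\bm{x}\|_2$ is even and satisfies the triangle inequality, and then take the intersection of the resulting symmetric convex sublevel sets. The only difference is presentational — the paper verifies the triangle inequality for the projected norm by an explicit Cauchy--Schwarz computation and passes through midpoint convexity plus closedness, whereas you invoke the seminorm/sublevel-set packaging directly, which is cleaner but not a different idea.
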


\begin{proof}
For any $\bm{x}\in X, \forall \bm{Y}\in S$, since $d(\bm{x},\bm{Y})=d(-\bm{x}, \bm{Y})$, we have $-\bm{x} \in X$. So $S$ is symmetric about the origin.

For any $\bm{x}_1,\bm{x}_2\in X$, for any $\bm{Y}\in S$, we have $d(\bm{x}_1, \bm{Y})\leq r, d(\bm{x}_2,\bm{Y})\leq r$. Let $\bm{P}$ be the projection matrix of $\bm{Y}$, 
\begin{align*}
    &\left(\|\bm{P}\bm{x}_1\|_2 + \|\bm{P}\bm{x}_2\|_2\right)^2 - \|\bm{P}\left(\bm{x}_1+\bm{x}_2\right)\|_2^2\\
    =& \bm{x}_1^\top \bm{P}^\top \bm{P}\bm{x}_1+\bm{x}_2^\top \bm{P}^\top \bm{P}\bm{x}_2+2\|\bm{P}\bm{x}_1\|_2\|\bm{P}\bm{x}_2\|_2-\left(
    \bm{x}_1+\bm{x}_2\right)^\top \bm{P}^\top \bm{P}\left(\bm{x}_1+\bm{x}_2\right)\\
    =& \|\bm{P}\bm{x}_1\|_2\|\bm{P}\bm{x}_2\|_2-\left(\bm{P}\bm{x}_1\right)^\top\left(\bm{P}\bm{x}_2\right)
    \geq  0
\end{align*}
So we have 
\[
d\left(\frac{\bm{x}_1+\bm{x}_2}{2}, \bm{Y}\right) 
= \left\|\bm{P}\left(\frac{\bm{x}_1+\bm{x}_2}{2}\right)\right\|_2
\leq \left\|\bm{P}\left(\frac{\bm{x}_1}{2}\right)\right\|_2+\left\|\bm{P}\left(\frac{\bm{x}_2}{2}\right)\right\|_2 
= \frac{1}{2}d\left(\bm{x}_1,\bm{Y}\right) +  \frac{1}{2}d\left(\bm{x}_2,\bm{Y}\right)\leq r
\]
For a fixed $\bm{Y}$, $\{\bm{x}\in B_k(0,1) | d(\bm{x},\bm{Y})\leq r\}$ is closed and thus convex. Therefore 
\[
X=\bigcap_{\bm{Y}\in S} \{\bm{x}\in B_k(0,1) | d(\bm{x},\bm{Y})\leq r\}
\]
is a convex set.
\end{proof}

\begin{lemma}\label{lemma:in_set} For any $\hat{k}\in [\tilde{k}]$, let 
\[
Y_{i_{\hat{k}}} := \{\bm{V} | d(\tilde{\bm{w}}_{i_j}, \bm{V}) \leq c_1\epsilon_{acc}, \forall j\leq \hat{k}\},
\]
\[
X_{i_{\hat{k}}} := \{\bm{x}\in B_k(0,1) | d(\bm{x}, \bm{V}) \leq (c_1+\frac{1}{c'})\epsilon_{acc}, \forall \bm{V}\in Y_{i_{\hat{k}}}\},
\]
where $c_1, c'$ are small constants. Then for any $j\geq \hat{k},  \pm \bm{a}_{i_{\hat{k}}} \in X_{i_{j}}$.
\end{lemma}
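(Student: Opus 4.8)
The plan is to reduce the claim to the single inclusion $\pm\bm{a}_{i_{\hat k}}\in X_{i_{\hat k}}$. First I would note that the family $\{Y_{i_j}\}$ is nested decreasing in $j$: for any $j\ge \hat k$, the constraints defining $Y_{i_j}$ include all of the constraints $d(\tilde{\bm w}_{i_l},\bm V)\le c_1\epsilon_{acc}$, $l\le \hat k$, that define $Y_{i_{\hat k}}$, so $Y_{i_j}\subseteq Y_{i_{\hat k}}$. Since $X_{i_j}$ is cut out by imposing $d(\bm x,\bm V)\le (c_1+\tfrac1{c'})\epsilon_{acc}$ for \emph{every} $\bm V\in Y_{i_j}$, a smaller index set $Y$ gives fewer constraints, hence a larger $X$; thus $X_{i_{\hat k}}\subseteq X_{i_j}$ for all $j\ge\hat k$. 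It therefore suffices to establish $\pm\bm a_{i_{\hat k}}\in X_{i_{\hat k}}$.

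\textbf{The target is close to the freshly learned feature.} At step $i_{\hat k}$ the current representation fails to $\epsilon$-fit task $i_{\hat k}$, so (in the linear case, following Algorithm~1 of \cite{balcan2015efficient}) the algorithm learns the new feature $\tilde{\bm w}_{i_{\hat k}}$ directly on the input with classification error at most $\epsilon_{acc}$, i.e.\ $d_D(\tilde{\bm w}_{i_{\hat k}},\bm a_{i_{\hat k}})\le \epsilon_{acc}$ (we normalize $\tilde{\bm w}_{i_{\hat k}}$ to a unit vector, which does not change its sign pattern). Applying the lower bound of Lemma~\ref{lemma:error2dist} then gives $\|\tilde{\bm w}_{i_{\hat k}}-\bm a_{i_{\hat k}}\|_2\le \epsilon_{acc}/c'$.

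\textbf{Triangle inequality over $Y_{i_{\hat k}}$.} Fix any $\bm V\in Y_{i_{\hat k}}$. Taking $l=\hat k$ in the definition of $Y_{i_{\hat k}}$ we get $d(\tilde{\bm w}_{i_{\hat k}},\bm V)\le c_1\epsilon_{acc}$; let $\bm p$ be the orthogonal projection of $\tilde{\bm w}_{i_{\hat k}}$ onto $\bm V$, so $\|\tilde{\bm w}_{i_{\hat k}}-\bm p\|_2\le c_1\epsilon_{acc}$. Then
\[
 d(\bm a_{i_{\hat k}},\bm V)\ \le\ \|\bm a_{i_{\hat k}}-\bm p\|_2\ \le\ \|\bm a_{i_{\hat k}}-\tilde{\bm w}_{i_{\hat k}}\|_2+\|\tilde{\bm w}_{i_{\hat k}}-\bm p\|_2\ \le\ \Bigl(c_1+\tfrac1{c'}\Bigr)\epsilon_{acc}.
\]
Moreover $\bm a_{i_{\hat k}}$ lies in the true $k$-dimensional subspace $\lspan(\bm w_1,\dots,\bm w_k)$ (since $\bm A=\bm C\bm W$) and is a unit vector, so $\bm a_{i_{\hat k}}\in B_k(0,1)$. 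Hence $\bm a_{i_{\hat k}}\in X_{i_{\hat k}}$, and since $d(-\bm x,\bm V)=d(\bm x,\bm V)$ (equivalently, by the symmetry asserted in Lemma~\ref{lemma:cvx_set}) also $-\bm a_{i_{\hat k}}\in X_{i_{\hat k}}$. Combining with the monotonicity step yields $\pm\bm a_{i_{\hat k}}\in X_{i_j}$ for every $j\ge\hat k$.

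\textbf{Main obstacle.} Everything here is bookkeeping with nested sets and two triangle inequalities; the only point requiring care is the middle step, namely being precise about what object enters $Y_{i_{\hat k}}$ at index $\hat k$ and why it is $\epsilon_{acc}$-accurate for the label $\sign(\langle\bm a_{i_{\hat k}},\cdot\rangle)$ — this is exactly the guarantee returned by the single-task learner in the ``large error'' branch of the algorithm, and one must also check that the constant $c_1$ in the definition of $Y$ has been fixed loosely enough to absorb the constant-factor slack of the refinement oracle (so that the freshly learned features, as stored, indeed satisfy the $c_1\epsilon_{acc}$ constraint). The nonlinear case is handled identically once each feature is regarded as a vector in the kernel space induced by the activations, so that the same orthogonal-projection arguments apply.
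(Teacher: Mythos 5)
Your proof is correct and follows essentially the same route as the paper's: use the $\epsilon_{acc}$-accuracy of the freshly learned feature together with Lemma~\ref{lemma:error2dist} to get $d(\bm{a}_{i_{\hat{k}}},\tilde{\bm{w}}_{i_{\hat{k}}})\leq \epsilon_{acc}/c'$, apply the triangle inequality against any $\bm{V}\in Y_{i_{\hat{k}}}$, and conclude via the nesting $Y_{i_1}\supseteq\cdots\supseteq Y_{i_{\tilde{k}}}$ hence $X_{i_1}\subseteq\cdots\subseteq X_{i_{\tilde{k}}}$. Your write-up is in fact slightly more careful than the paper's (making the projection point explicit and flagging the role of the constant $c_1$), but the argument is the same.
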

\begin{proof}
For $\forall \bm{V}\in Y_{i_{\hat{k}}}$, $d(\tilde{\bm{w}}_{i_{\hat{k}}},\bm{V})\leq c_1\epsilon_{acc}$. Since we learn the feature vector $\tilde{\bm{w}}_{i_{\hat{k}}}$ within error $\epsilon_{acc}$, by Lemma \ref{lemma:error2dist}, we have $d(\bm{a}_{i_{\hat{k}}},\tilde{\bm{w}}_{i_{\hat{k}}})\leq \epsilon_{acc}/c'$. So  $d(\bm{a}_{i_{\hat{k}}},\bm{V}) \leq (c_1+\frac{1}{c'})\epsilon_{acc}$. Hence $\bm{a}_{i_{\hat{k}}}\in X_i$. Because $d(\bm{x},\bm{V})=d(-\bm{x}, \bm{V})$, we also know $-\bm{a}_{i_{\hat{k}}}\in X_i$.
Also $Y_{i_1}\supseteq Y_{i_2}\supseteq \cdots \supseteq Y_{i_{\tilde{k}}}$, so $X_{i_1}\subseteq X_{i_2}\subseteq \cdots \subseteq X_{i_{\tilde{k}}}$. So for any $j \geq \hat{k}$, $\pm \bm{a}_{i_{\hat{k}}} \in X_{i_j}$.
\end{proof}

\begin{corollary}\label{cor:in_set}
For any $\hat{k}\in [\tilde{k}]$, $\conv(\pm \bm{a}_{i_1},\cdots,\pm \bm{a}_{i_{\hat{k}}}) \subseteq X_{i_{\hat{k}}}$.
\end{corollary}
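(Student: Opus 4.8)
The plan is to derive Corollary~\ref{cor:in_set} directly from Lemma~\ref{lemma:in_set} and Lemma~\ref{lemma:cvx_set}, with essentially no new work. The two ingredients I need are: (i) each of the points $\pm\bm{a}_{i_1},\dots,\pm\bm{a}_{i_{\hat{k}}}$ lies in $X_{i_{\hat{k}}}$, and (ii) $X_{i_{\hat{k}}}$ is convex, so it contains the convex hull of any finite set of its points.

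First I would invoke Lemma~\ref{lemma:in_set} with the index $j=\hat{k}$: for every $j' \le \hat{k}$ we have $\pm\bm{a}_{i_{j'}} \in X_{i_{j}}$ for all $j \ge j'$, and in particular (taking $j=\hat{k}$) we get $\pm\bm{a}_{i_{j'}} \in X_{i_{\hat{k}}}$ for each $j' \in \{1,\dots,\hat{k}\}$. So all $2\hat{k}$ extreme points $\pm\bm{a}_{i_1},\dots,\pm\bm{a}_{i_{\hat{k}}}$ belong to $X_{i_{\hat{k}}}$.

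Next I would observe that $X_{i_{\hat{k}}}$ is a symmetric convex set: this is exactly Lemma~\ref{lemma:cvx_set} applied with $S = Y_{i_{\hat{k}}}$ (a set of subspaces) and radius $r = (c_1 + \tfrac{1}{c'})\epsilon_{acc}$, which matches the definition of $X_{i_{\hat{k}}}$ in Lemma~\ref{lemma:in_set}. Since a convex set containing a finite collection of points contains their convex hull, we conclude $\conv(\pm\bm{a}_{i_1},\dots,\pm\bm{a}_{i_{\hat{k}}}) \subseteq X_{i_{\hat{k}}}$, which is the claim.

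There is no real obstacle here — the statement is a bookkeeping consequence of the two preceding lemmas. The only point worth double-checking is that the radius and the set-of-subspaces used in the definition of $X_{i_{\hat{k}}}$ in Lemma~\ref{lemma:in_set} are literally the hypotheses of Lemma~\ref{lemma:cvx_set}, so that convexity applies verbatim; and that Lemma~\ref{lemma:in_set} is quoted with the correct specialization ($j = \hat{k}$) so that all the hull's vertices land in the same set $X_{i_{\hat{k}}}$ rather than in different $X_{i_j}$'s. Both are immediate from the statements as given.
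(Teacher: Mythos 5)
Your proposal is correct and matches the paper's proof exactly: Lemma~\ref{lemma:in_set} (specialized to $j=\hat{k}$) places all the points $\pm\bm{a}_{i_1},\dots,\pm\bm{a}_{i_{\hat{k}}}$ in $X_{i_{\hat{k}}}$, and Lemma~\ref{lemma:cvx_set} gives convexity of $X_{i_{\hat{k}}}$, so the convex hull is contained. Your extra check that the radius and subspace family in Lemma~\ref{lemma:in_set} instantiate the hypotheses of Lemma~\ref{lemma:cvx_set} is sound and only makes the argument more explicit than the paper's two-line version.
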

\begin{proof}
From Lemma \ref{lemma:in_set}, we know $\pm \bm{a}_{i_1},\cdots,\pm \bm{a}_{i_{\hat{k}}} \in X_{i_{\hat{k}}}$. Combined with Lemma \ref{lemma:cvx_set}, we get the corollary.
\end{proof}

\begin{lemma}[Max Ellipsoid]\label{lemma: max_ellipsoid}
Let $K \subset \R^k$ be a symmetric convex body and $E(K)$ be the maximum volume ellipsoid contained in $K$. 
For a vector $\bm{u}$ on the boundary of $E(K)$, \ie $\bm{u}\in\bd({E(K)})$, let $K'=\conv(K, 2\sqrt{k}\bm{u}, -2\sqrt{k}\bm{u})$. Then, 
\[
\frac{\vol\left(E\left(K'\right)\right)}{\vol\left(E\left(K\right)\right)} \geq \frac{13}{10}.
\]
\end{lemma}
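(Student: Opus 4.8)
The plan is to work in the coordinate system in which $E(K)$ is the Euclidean unit ball $B_k$; this is WLOG since volume ratios of maximum-volume ellipsoids are affine-invariant. Then $\bm u \in \bd(E(K))$ is a unit vector, and after a rotation we may take $\bm u = \bm e_1$. So the task reduces to the following clean statement: if $B_k \subseteq K$ and $\bm e_1 \in \bd(B_k)$, and $K' = \conv(K, \pm 2\sqrt k\, \bm e_1)$, then $\vol(E(K'))/\vol(B_k) \ge 13/10$. To prove this I will not try to find $E(K')$ exactly; instead I will \emph{exhibit} an explicit ellipsoid $E_0$ with $E_0 \subseteq K'$ and $\vol(E_0)/\vol(B_k) \ge 13/10$, and then use $\vol(E(K')) \ge \vol(E_0)$.

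The natural candidate is an axis-aligned ellipsoid stretched along $\bm e_1$ and slightly shrunk in the orthogonal directions:
\[
E_0 = \left\{ \bm x : \frac{x_1^2}{\alpha^2} + \frac{x_2^2 + \cdots + x_k^2}{\beta^2} \le 1 \right\},
\]
with $\alpha > 1$ (exploiting the new far point at distance $2\sqrt k$) and $\beta < 1$ slightly. The containment $E_0 \subseteq K'$ is the crux. Since $K' \supseteq \conv(B_k, \pm 2\sqrt k\,\bm e_1)$, it suffices to show $E_0 \subseteq \conv(B_k, \pm 2\sqrt k\,\bm e_1)$. This convex hull is a "spindle": at height $x_1 = s$ with $|s| \le 1$ it contains the ball of radius $1$ in the orthogonal hyperplane (from $B_k$), and for $1 \le |s| \le 2\sqrt k$ it contains the ball of radius $\rho(s) = \frac{2\sqrt k - |s|}{2\sqrt k - 1}$ (linear interpolation of the cone from $B_k$'s equator to the apex). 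So I need: for every $x_1 = s$ in the range of $E_0$ (i.e. $|s| \le \alpha$), the cross-sectional radius of $E_0$, namely $\beta\sqrt{1 - s^2/\alpha^2}$, is at most $\min(1, \rho(s))$ when $|s|\ge 1$ and at most $1$ when $|s|\le 1$. For $|s| \le 1$ this is immediate from $\beta \le 1$. For $1 \le |s| \le \alpha$ I need $\beta\sqrt{1 - s^2/\alpha^2} \le \frac{2\sqrt k - s}{2\sqrt k - 1}$; this is a one-variable inequality that I will verify by choosing $\alpha, \beta$ appropriately as functions that stay bounded away from the degenerate regime uniformly in $k$ — e.g. something like $\alpha = 1 + c_1$, $\beta = 1 - c_2/k$ for small absolute constants, so that $\beta^k \to$ a constant $> 0$ and $\alpha\beta^{k-1} \ge 13/10$. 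The volume ratio is exactly $\alpha \beta^{k-1}$, so I then just check $\alpha\beta^{k-1} \ge 13/10$ for all $k \ge 1$ with the chosen constants (the worst case is large $k$, where $\beta^{k-1} \to e^{-c_2}$, so I need $\alpha e^{-c_2} \ge 13/10$, a fixed numerical check).

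The main obstacle is the tension in choosing $\alpha$ and $\beta$: making $\alpha$ large helps the volume but forces the cross-section of $E_0$ to remain thin over a long stretch $1 \le |s| \le \alpha$ where the spindle radius $\rho(s)$ is decaying, while making $\beta$ close to $1$ helps the volume but risks violating containment near $|s|$ slightly above $1$. I expect the binding constraint to be at $|s|$ just above $1$ (where $\rho(s)$ has just dropped below $1$ with slope $\approx -1/(2\sqrt k)$, i.e. very gently for large $k$) and possibly at $|s|$ near $\alpha$. I will likely take $\beta^2 = 1 - c/k$ and $\alpha$ a small constant $>1$, reducing everything to: (i) a large-$k$ asymptotic check that $\alpha e^{-c/2} \ge 13/10$, and (ii) finitely many small-$k$ cases checked directly. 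An alternative if the spindle bound is too lossy for small $k$: handle $k \le k_0$ separately by noting that for bounded $k$ the extra point $2\sqrt k\,\bm e_1$ is at a bounded distance and one can use John's theorem plus a direct ellipsoid construction; but I expect the spindle argument above to go through uniformly with a careful constant choice, and that is the route I would pursue first.
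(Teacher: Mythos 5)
Your high-level strategy coincides with the paper's: reduce to $E(K)=B_k(0,1)$ by affine invariance, inscribe an explicit axis-aligned ellipsoid $\{\bm{x}: x_1^2/\alpha^2+\sum_{i\ge 2}x_i^2/\beta^2\le 1\}$ in the spindle $\conv(B_k(0,1),\pm 2\sqrt{k}\bm{e}_1)$, and lower-bound $\vol(E(K'))$ by its volume $\alpha\beta^{k-1}\vol(B_k(0,1))$; the paper's choice is $\alpha=2$, $\beta^2=1-\tfrac{3}{4k-1}$, giving ratio $2e^{-3/8}\ge 13/10$, which fits your ansatz ($\alpha$ an absolute constant, $\beta=1-\Theta(1/k)$). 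However, there is a genuine gap in the step you yourself identify as the crux: your description of the spindle's cross-sections is wrong, and wrong in the unsound direction (it overestimates the body). The convex hull of $B_k(0,1)$ and the apex $2\sqrt{k}\bm{e}_1$ is bounded by the tangent cone from the apex, which touches the sphere on the circle $x_1=\tfrac{1}{2\sqrt{k}}$, not at $x_1=1$. Hence the cross-section at height $s$ has radius $\sqrt{1-s^2}$ only for $|s|\le \tfrac{1}{2\sqrt{k}}$, and radius $\tfrac{2\sqrt{k}-|s|}{\sqrt{4k-1}}$ (not $\tfrac{2\sqrt{k}-|s|}{2\sqrt{k}-1}$, and certainly not $1$) for $\tfrac{1}{2\sqrt{k}}\le |s|\le 2\sqrt{k}$. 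Your claim that the spindle contains the full radius-$1$ ball at every height $|s|\le 1$ is therefore false, and with it your containment check: for large $k$ it would certify $\alpha=2,\beta=1$, whereas the only ellipsoid of that form with $\beta=1$ actually contained in the spindle is the unit ball itself.

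As a consequence, your guess of where the constraint binds (``at $|s|$ just above $1$'') is also off: the binding condition is tangency to the supporting line $x_1+x_2\sqrt{4k-1}=2\sqrt{k}$ of the cone part, and for the optimal ellipsoid the tangency occurs at $x_1=\Theta(1/\sqrt{k})$ --- deep inside the region you treat as unconstrained. Working with the correct tangent line, the containment condition on the $2$-dimensional slice becomes $\alpha^2\le 4k-\beta^2(4k-1)$ (and one checks the supporting halfplanes of the spherical-cap part are then automatically satisfied), after which optimizing $\beta$ gives $\beta^2=1-\tfrac{3}{4k-1}$, $\alpha=2$, and the ratio $4^{1/2}(1-\tfrac{3}{4k-1})^{(k-1)/2}\ge 2e^{-3/8}>13/10$ uniformly in $k$, with no separate small-$k$ casework needed. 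So the plan is salvageable by replacing your cross-section formula with the tangent-cone one, but as written the verification you propose would accept ellipsoids that are not contained in $K'$.
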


\begin{proof}
Since $E(K) \subseteq K$, we have
\[
K'':=\conv\left(E\left(K\right), 2\sqrt{k}\bm{u},-2\sqrt{k}\bm{u}\right) \subseteq K'
\]
So it suffices to prove that
\[
\frac{\mbox{vol}\left(E\left(K''\right)\right)}{\mbox{vol}\left(E\left(K\right)\right)} \geq \frac{13}{10}
\]
First, let's assume that $E(K)=B_k(0,1)$, \ie, the unit ball around the origin, and the vector $\bm{u}$ is $(1,0,\cdots,0)^\top \in B_k(0,1)$. By symmetry, we can assume the ellipsoid
\[
E(K'')=\left\{\bm{x}\left\vert\frac{x_1^2}{a^2}+\sum_{i=2}^k \frac{x_i^2}{b^2} \leq 1\right.\right\}
\]
We consider the two-dimensional slice first, where $\bm{u}=(1,0)^\top$, $E(K'')=\{(x,y)\vert\frac{x^2}{a^2}+\frac{y^2}{b^2}\leq 1\}$. Direct calculation shows that 
$$\forall (x,y)\in K'', y\leq- \frac{x}{\sqrt{4k-1}}+\frac{2\sqrt{k}}{\sqrt{4k-1}}$$ where $y=- \frac{x}{\sqrt{4k-1}}+\frac{2\sqrt{k}}{\sqrt{4k-1}}$ is the line tangent to the unit ball and go across the point $(\frac{1}{2\sqrt{k}}, \frac{\sqrt{4k-1}}{2\sqrt{k}})$ and the point $(2\sqrt{k},0)$ (see Figure \ref{fig:ellipsoid}). 
\begin{figure}[htbp]
    \centering
    \includegraphics[height=1.8in]{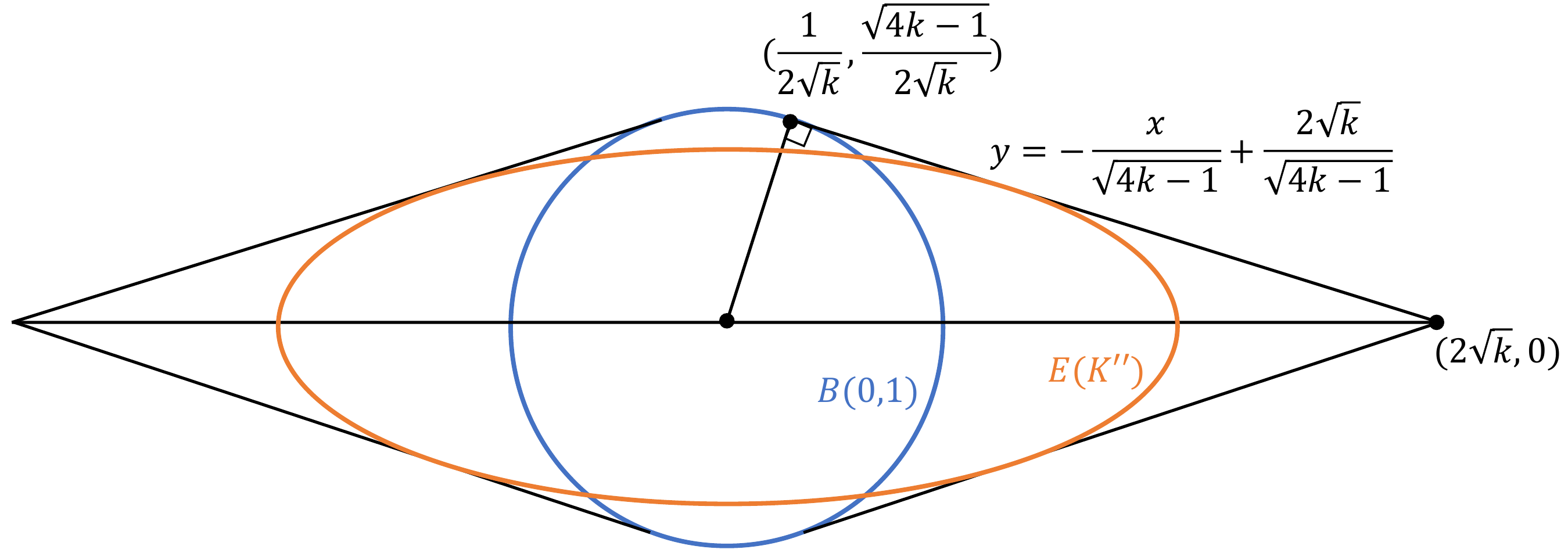}
    \caption{Maximum volume ellipsoid}
    \label{fig:ellipsoid}
\end{figure}
So for any point on the boundary of $E(K'')$, it also satisfies $$y^2=b^2\left(1-\frac{x^2}{a^2}\right)\leq\left(- \frac{x}{\sqrt{4k-1}}+\frac{2\sqrt{k}}{\sqrt{4k-1}}\right)^2$$
Simplifying the inequality we get
\begin{align*}
    \left(\frac{b^2}{a^2}+\frac{1}{4k-1}\right)x^2-\frac{4\sqrt{k}}{4k-1}x+\frac{4k}{4k-1}-b^2\geq 0
\end{align*}
To ensure that the quadratic inequality holds, let the determinant equal zero, and we get $a^2=4k-b^2(4k-1)$. So,
\[
\left\{\left(x,y\right)\left\vert\frac{x^2}{4k-b^2(4k-1)}+\frac{y^2}{b^2}\leq 1\right.\right\} \subseteq K'' \mbox{ for } b<1. 
\]
By symmetry, in $k$ dimensions, we get that $E_b \subseteq E(K'')$ for $b<1$.
\begin{align*}
    E_b=\left\{x \left\vert \frac{x_1^2}{4k-b^2(4k-1)}+\sum_{i=2}^k \frac{x_i^2}{b^2} \leq 1\right.\right\}
\end{align*}
The volume of this ellipsoid is
\begin{align*}
    \mbox{vol}\left(E_b\right)=\sqrt{\left(4k-b^2\left(4k-1\right)\right)b^{2k-2}}\cdot \mbox{vol}\left(B_k\left(0,1\right)\right)
\end{align*}
Let $f(b)=(4k-b^2(4k-1))b^{2k-2}$. Calculate its derivative and let it to be zero, so we get $\hat{b}^2=1-\frac{3}{4k-1}<1$. 
$$f\left(\hat{b}\right)=4\left(1-\frac{3}{4k-1}\right)^{k-1}=4\left(\left(1-\frac{3}{4k-1}\right)^{\frac{4k-4}{3}}\right)^{\frac{3}{4}} \geq 4\left(\frac{1}{e}\right)^{\frac{3}{4}}$$
Hence we know 
$$\mbox{vol}\left(E_{\hat{b}}\right) \geq 2\left(\frac{1}{e}\right)^{\frac{3}{8}}\mbox{vol}\left(B_k\left(0,1\right)\right) \geq \frac{13}{10}\mbox{vol}\left(B_k\left(0,1\right)\right)$$
Finally for any symmetric ellipsoid $E(K):=\bm{A}^{1/2}B_k(0,1)$ and any $\bm{u}=\bm{A}^{1/2}\bm{u}_0$ on the boundary of $E(K)$, where $\bm{u}_0$ is the unit vector corresponding to $\bm{u}$. There exists an orthogonal matrix, say $\bm{Q}$, that rotates $\bm{u}_0$ to $(1,0,\cdots,0)^\top$. That is $\bm{Q}\bm{u}_0 = (1,0,\cdots,0)^\top$.
Define an affine bijective transformation $T:=\bm{A}^{1/2}\bm{Q}^\top \bm{x}$, with $T^{-1}(\bm{x})=\bm{QA}^{-1/2}\bm{x}$. Then 
\begin{align*}
    T\left(\left(1,0,\cdots,0\right)^\top\right)=\bm{A}^{1/2}\bm{Q}^\top \bm{Q}\bm{u}_0=\bm{A}^{1/2}\bm{u}_0=\bm{u}
\end{align*}
\begin{align*}
    T\left(B_k\left(0,1\right)\right)=&\{T\left(\bm{y}\right)|\bm{y}^\top \bm{y} \leq 1\}\\
    =& \{\bm{x} | T^{-1}\left(\bm{x}\right)^\top T^{-1}\left(\bm{x}\right) \leq 1\}\\
    =&\{\bm{x} | \bm{x}^\top \bm{A}^{-1/2}\bm{Q}^\top \bm{QA}^{-1/2}\bm{x} \leq 1\}\\
    =& \{\bm{x} | \bm{x}^\top \bm{A}^{-1}\bm{x} \leq 1\}\\
    =& E\left(K\right)
\end{align*}
So we get $T(E_{\hat{b}}) \subset E(K'')$. Since the ratio of volumes is invariant under affine transformation, we have
\begin{align*}
    \frac{\mbox{vol}\left(E\left(K''\right)\right)}{\mbox{vol}\left(E\left(K\right)\right)} 
    \geq \frac{\mbox{vol}\left(T\left(E_{\hat{b}}\right)\right)}{\mbox{vol}\left(T\left(B_k\left(0,1\right)\right)\right)}
    = \frac{\mbox{vol}\left(E_{\hat{b}}\right)}{\mbox{vol}\left(B_k\left(0,1\right)\right)}
    \geq \frac{13}{10}.
\end{align*}

\end{proof}

The next lemma gives us a stopping condition.

\begin{lemma}\label{lem:maxvol}
Let $P=\conv(\bm{x}_1,\ldots,\bm{x}_m)$ be a polytope in $\R^k$ with each $\bm{x}_i$ of unit Euclidean length. 
Then, the maximum volume ellipsoid contained in $P$ satisfies
\[
\vol\left(E\left(P\right)\right)\le 2\sqrt{2e}\left(\sqrt{\frac{2\log 2m}{k}}\right)^k\vol\left(B_k\left(0,1\right)\right). 
\]
\end{lemma}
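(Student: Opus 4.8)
Since $E(P)$ is by definition contained in $P$, I would first reduce to bounding $\vol(P)$, observing that the stated bound carries a harmless extra factor $2\sqrt{2e}>1$. The polytope $P$ lies in $B_k(0,1)$, but that crude containment only gives $\vol(P)\le\vol(B_k(0,1))$, which is far too weak once $2\log 2m<k$ — exactly the regime in which the lemma has content. The plan is instead to control the \emph{mean width} of $P$ and feed it into Urysohn's inequality (the Euclidean ball maximizes volume among convex bodies of a given mean width), in the quantitative form $\vol(P)\le(w(P)/2)^{k}\vol(B_k(0,1))$, where $w(P)=\E_{\bm\theta\sim\mathbb{S}^{k-1}}[h_P(\bm\theta)+h_P(-\bm\theta)]$ and $h_P(\bm\theta)=\max_i\langle\bm\theta,\bm x_i\rangle$ is the support function of $P$.

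Next, since $\bm\theta$ and $-\bm\theta$ have the same distribution, $w(P)/2=\E_{\bm\theta\sim\mathbb{S}^{k-1}}[\max_i\langle\bm\theta,\bm x_i\rangle]$, so the task becomes bounding the expected maximum of the $m$ random variables $\langle\bm\theta,\bm x_i\rangle$. Each is a single coordinate of a uniformly random point of $\mathbb{S}^{k-1}$, hence has mean $0$, variance $1/k$, and sub-Gaussian tails with variance proxy $O(1/k)$ (for instance $\E[e^{\lambda\langle\bm\theta,\bm x_i\rangle}]\le e^{\lambda^2/(2k)}$, from the log-Sobolev inequality on the sphere). A moment-generating-function estimate with a union bound over $i\in[m]$ then gives $\E_{\bm\theta}[\max_i\langle\bm\theta,\bm x_i\rangle]\le\inf_{\lambda>0}\big(\tfrac{\log m}{\lambda}+\tfrac{\lambda}{2k}\big)=\sqrt{\tfrac{2\log m}{k}}\le\sqrt{\tfrac{2\log 2m}{k}}$. (One could instead integrate the tail bound $\P(\max_i\langle\bm\theta,\bm x_i\rangle>t)\le m e^{-kt^{2}/2}$; this yields the same leading term plus a lower-order correction, which is precisely what the slack $2\sqrt{2e}$ accommodates.)

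Assembling the two estimates, $\vol(E(P))\le\vol(P)\le\big(\sqrt{2\log 2m/k}\big)^{k}\vol(B_k(0,1))\le 2\sqrt{2e}\,\big(\sqrt{2\log 2m/k}\big)^{k}\vol(B_k(0,1))$, with the degenerate case $\dim P<k$ (where both volumes are $0$) trivial. I would resist the more hands-on alternative of bounding the semi-axes $a_1,\dots,a_k$ of $E(P)$ one at a time — containment forces $\sum_j a_j^{2}\le 2\log 2m+O(1)$ via the width in a random direction, after which AM–GM bounds $\prod_j a_j$ — because AM–GM loses a factor $(1+O(1/\log m))^{k}$, which is not a constant when $k\gg\log m$; passing through mean width is the point, since there the map from width to volume is linear, not quadratic, in the exponent. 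The main difficulty I expect is bookkeeping of absolute constants: pinning down the sub-Gaussian constant of the spherical marginals and, if one wants the argument self-contained, supplying a short proof of the required instance of Urysohn's inequality instead of quoting it.
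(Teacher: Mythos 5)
Your proposal is correct, but it takes a genuinely different route from the paper. The paper passes to the polar body: it applies the Blaschke--Santal\'o inequality $\vol(P)\vol(P^*)\le \vol(B_k(0,1))^2$ and then lower-bounds $\vol(P^*)$ directly, noting that $P^*$ is an intersection of $m$ halfspaces tangent to the unit ball, each of which removes at most a $\frac{1}{2m}$-fraction of the ball $B_k(0,r)$ with $r=\sqrt{(k-1)/(2\log 2m)}$ (via the spherical-cap volume bound, Lemma~\ref{lem:capvol}); the factor $2\sqrt{2e}$ arises from converting $r^{-k}$ with $k-1$ in the denominator into the stated form with $k$. Your route replaces this with mean width plus Urysohn. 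The two are close cousins --- Urysohn's inequality in the form you use follows from Blaschke--Santal\'o by writing $\vol(P^*)=\vol(B_k(0,1))\,\E_{\bm{\theta}}[h_P(\bm{\theta})^{-k}]$ and applying Jensen --- but your version is arguably cleaner: the MGF bound $\E[e^{\lambda\langle\bm{\theta},\bm{x}_i\rangle}]\le e^{\lambda^2/(2k)}$ does hold exactly (the even moments of a spherical marginal are dominated by those of $N(0,1/k)$), so the constant-bookkeeping you worry about resolves in your favor and you do not even need the slack $2\sqrt{2e}$. Your route also has the minor advantage that mean width is translation-invariant, so it does not implicitly require $P$ to be centered (the origin-centered Blaschke--Santal\'o product can exceed $\vol(B_k)^2$ for non-symmetric bodies; in the paper's application $P$ is symmetric, so this is harmless there). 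One caution: your parenthetical fallback of integrating the tail bound produces a correction factor of the form $(1+O(1/\log m))^{k}$, which is \emph{not} absorbed by a constant when $k\gg \log m$ --- the same failure mode you correctly diagnose for the AM--GM alternative --- so you should commit to the MGF/union-bound computation rather than treat the two as interchangeable.
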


\begin{proof}
Recall the polar of a convex body $P$ is the convex body defined as \[
P^*=\{\bm{x}\,:\,\langle \bm{x},\bm{y}\rangle \le 1 \mbox{ for all } \bm{y}\in P\}.
\]
By the Blaschke-Santalo inequality, we have
\[
\vol\left(P\right)\vol\left(P^*\right)\le \vol\left(B_k\left(0,1\right)\right)^2.
\]
Next we lower bound the volume of $P^*$. Note that $P^*$ is the intersection of exactly $m$ halfspaces, each tangent to the unit ball. Consider the ball $B_k(0,r)$ with $r=\sqrt{\frac{k-1}{2\log(2m)}}$. By Lemma~\ref{lem:capvol}, each halfpace cuts off a cap of this ball, of volume at most $e^{-(k-1)/2r^2}=\frac{1}{2m}$ of the volume of $B_k(0,r)$. Therefore, the volume that is in the intersection of all $m$ halfspaces is at least $\vol(B_k(0,r))/2$ and hence, this is a lower bound on the volume of $P^*$.  
Using this, we have,
\[
\vol\left(P\right) \le \frac{\vol\left(B_k\left(0,1\right)\right)^2}{\vol\left(B_k\left(0,r\right)\right)/2} \le 2r^{-k} \vol\left(B_k\left(0,1\right)\right)
\]
Furthermore, since $1+x\leq e^x$, we can derive the following and complete the proof.
\[
2r^{-k}=2(\sqrt{\frac{2\log(2m)}{k-1}})^k
=2(\sqrt{\frac{2\log(2m)}{k}})^k(1+\frac{1}{k-1})^{\frac{k-1}{2}}\sqrt{1+\frac{1}{k-1}}
\leq 2\sqrt{2e}(\sqrt{\frac{2\log(2m)}{k}})^k
\]

\end{proof}

\begin{lemma}\label{lem:capvol}
(Lemma 4.1 from  \cite{lovasz2007geometry})For any $\frac{1}{\sqrt{k}}<t<1$ and halfspace $H$ at distance $tr$ from the origin, 
\[
\vol\left(B_k\left(0,r\right) \cap H\right) \le e^{-t^2k/2} \vol\left(B_k\left(0,r\right)\right).
\]
\end{lemma}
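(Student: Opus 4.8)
The plan is to reduce the cap volume to a one-dimensional integral by slicing perpendicular to the axis of the halfspace, and then to control that integral with a single carefully chosen change of variables. After a rotation we may assume $H=\{\bm{x}\in\R^k:x_1\ge tr\}$ (it lies at distance $tr>0$ from the origin, hence does not contain it), so that $\vol(B_k(0,r)\cap H)=\vol(C)$ where $C=\{\bm{x}:\norm{\bm{x}}_2\le r,\ x_1\ge tr\}$ is the spherical cap. Writing $\kappa_j:=\vol(B_j(0,1))$ and slicing along the $x_1$-axis, each slice at height $x_1$ is a $(k-1)$-ball of radius $\sqrt{r^2-x_1^2}$, so
\[
\vol(C)=\kappa_{k-1}\int_{tr}^{r}\left(r^2-x_1^2\right)^{(k-1)/2}dx_1,
\qquad
\vol\!\left(B_k(0,r)\right)=2\kappa_{k-1}r^k\int_0^1\left(1-s^2\right)^{(k-1)/2}ds,
\]
the second identity coming from the same slicing together with the rescaling $x_1=rs$.

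The key step is the substitution $x_1=r\sqrt{t^2+(1-t^2)w^2}$ in the cap integral. This is an increasing bijection from $w\in[0,1]$ onto $x_1\in[tr,r]$, under which $r^2-x_1^2=r^2(1-t^2)(1-w^2)$ and $dx_1=\frac{r(1-t^2)w}{\sqrt{t^2+(1-t^2)w^2}}\,dw\le r\sqrt{1-t^2}\,dw$, the last inequality because $\sqrt{t^2+(1-t^2)w^2}\ge\sqrt{1-t^2}\,w$. Substituting and comparing with the formula for $\vol(B_k(0,r))$ above gives
\[
\vol(C)\le\kappa_{k-1}r^k(1-t^2)^{k/2}\int_0^1\left(1-w^2\right)^{(k-1)/2}dw=\tfrac12(1-t^2)^{k/2}\,\vol\!\left(B_k(0,r)\right).
\]
Finally $\tfrac12(1-t^2)^{k/2}\le(1-t^2)^{k/2}\le e^{-t^2k/2}$ by $1-x\le e^{-x}$, which is exactly the claimed bound (with a factor $2$ to spare).

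The only nonroutine point is discovering the right substitution: it must simultaneously match the endpoints ($x_1=tr\leftrightarrow w=0$ and $x_1=r\leftrightarrow w=1$) and keep the Jacobian at most $r\sqrt{1-t^2}$. A cruder estimate that merely bounds $r^2-x_1^2\le r^2(1-t^2)$ over $[tr,r]$ loses an extra $\sqrt{k}$ factor, since $\int_{-1}^1(1-s^2)^{(k-1)/2}ds=\Theta(1/\sqrt{k})$, and would be too weak for the application in Lemma~\ref{lem:maxvol}. Everything else is routine Fubini and change of variables; in fact the hypothesis $1/\sqrt{k}<t<1$ is not needed to obtain the stated inequality and is only relevant to the way this lemma is later invoked.
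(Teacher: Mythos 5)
The paper does not prove this lemma; it is quoted verbatim as Lemma 4.1 of the cited reference \cite{lovasz2007geometry}, so there is no in-paper argument to compare against. Your self-contained proof is correct: the slicing identities are standard, the substitution $x_1=r\sqrt{t^2+(1-t^2)w^2}$ does map $[0,1]$ bijectively onto $[tr,r]$ with $r^2-x_1^2=r^2(1-t^2)(1-w^2)$, and the Jacobian bound follows from $\sqrt{t^2+(1-t^2)w^2}\ge\sqrt{1-t^2}\,w$, yielding $\vol(C)\le\tfrac12(1-t^2)^{k/2}\vol(B_k(0,r))\le e^{-t^2k/2}\vol(B_k(0,r))$. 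This is essentially the classical argument for the spherical-cap bound, and your observation that the hypothesis $t>1/\sqrt{k}$ is not needed for the inequality (it matters only for the regime in which the bound is nontrivial and for the companion lower bound in the original reference) is accurate; the extra factor $\tfrac12$ you retain also comfortably absorbs the fact that Lemma~\ref{lem:maxvol} actually invokes the bound with exponent $(k-1)/(2r^2)$ rather than $k/(2r^2)$.
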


We can now prove Theorem~\ref{thm:tune_linear}.

\begin{proof}
Let's first prove the linear case. Since the ground truth feature space lies in a $k$-dimensional subspace $\bm{V}^*=\lspan(\bm{a}_1,\cdots,\bm{a}_m) \subseteq \mathbb{R}^k$, the span of the true features is $\mathbb{R}^k$. Along training, when we deal with the $i_{\hat{k}}$-th task, let $\hat{\bm{w}}_{i_{\hat{k}}}$ be the new feature we learn to ensure that the $i_{\hat{k}}$-th task has error no more than $\epsilon_{acc}=\frac{\epsilon}{2\sqrt{k}c''(c_1+1/c')}$, where $c', c''$ are universal constants defined in Lemma~\ref{lemma:error2dist} and $c_1$ is the approximation constant of Optimization Problem~\ref{equation:original_opt}. WLOG we assume $\hat{\bm{w}}_{i_{\hat{k}}}$ to be a unit vector. By Lemma \ref{lemma:error2dist}, 
$$d(\bm{a}_{i_{\hat{k}}}, \hat{\bm{w}}_{i_{\hat{k}}}) \leq \frac{\epsilon_{acc}}{c'}=\frac{\epsilon}{2\sqrt{k}c''(c_1c'+1)}$$
Denote $\bm{V}_{i_{\hat{k}}}$ be the feature subspace after fine-tuning (optimization). Since there exists a $k$-dimensional subspace $\bm{V}^*$ satisfying all constraints and the algorithm outputs a constant-factor approximation of Optimization Problem~\ref{equation:original_opt}, we can get a $c_1$-approximation solution with dimension $c_2 k$ for constants $c_1,c_2$. So we know in the end the dimension of the feature subspace we get is $O(k)$.

Let $B_k(0,1)$ be the unit ball on the subspace. Denote the set of all possible solutions of the optimization as $Y_{i_{\hat{k}}} := \{\bm{V} | d(\tilde{\bm{w}}_{i_j}, \bm{V}) \leq c_1\eps_{acc}, \forall j\leq \hat{k}\}$. Let $X_{i_{\hat{k}}}$ be all the vectors in the unit ball that is within distance $(c_1+\frac{1}{c'})\eps_{acc}=\frac{\eps}{2\sqrt{k}c''}$ to all of the subspaces in $Y_{i_{\hat{k}}}$, that is $X_{i_{\hat{k}}} := \{\bm{x}\in B_k(0,1) | d(\bm{x}, \bm{V}) \leq \frac{\eps}{2\sqrt{k}c''}, \forall \bm{V}\in Y_{i_{\hat{k}}}\}$. By Lemma \ref{lemma:cvx_set} and Corollary \ref{cor:in_set}, we know $X_{i_{\hat{k}}}$ is a convex set containing $\{\pm \bm{a}_{i_1},\cdots,\pm \bm{a}_{i_{\hat{k}}}\}$. We will show next that after learning $\tilde{k}=O(k\log(\log(k)/\eps))$ new tasks, $X_{i_{\tilde{k}}}$ contains the ball $B_k(0,1/2\sqrt{k})$.

In the initial step, $Y_0$ is the set of all subspaces. We naturally have $B(0,\frac{\epsilon}{2\sqrt{k}c''}) \subseteq X_0$ since for any $\bm{x}\in B(0,\epsilon)$, $d(\bm{x}, \bm{V}) \leq d(\bm{O},\bm{V})+d(\bm{x},\bm{O})\leq \frac{\eps}{2\sqrt{k}c''}$. Here $\bm{O}$ is the origin. So we know $\vol(X_0)\geq (\frac{\epsilon}{2\sqrt{k}c''})^k V_0$, where $V_0$ is the volume of the unit ball in $\mathbb{R}^k$. 
Encountering the $i_{\hat{k}}$-th task, the current feature subspaces $\bm{V}_{i_{\hat{k}-1}}$ cannot ensure an $\epsilon$ error. By Lemma \ref{lemma:error2dist}, $d(\bm{a}_{i_{\hat{k}}}, \bm{V}_{i_{\hat{k}-1}}) \geq \frac{\epsilon}{c''} $. Hence $d(\frac{\bm{a}_{i_{\hat{k}}}}{2\sqrt{k}},\bm{V}_{i_{\hat{k}-1}}) \geq \frac{\epsilon}{2\sqrt{k}c''} $, which means $\pm \frac{\bm{a}_{i_{\hat{k}}}}{2\sqrt{k}} \notin X_{i_{\hat{k}-1}}$. Consequently, the vector $\bm{u}=u'\bm{a}_{i_{\hat{k}}} \in \bd(E(X_{i_{\hat{k}-1}}))$ satisfies $\|\bm{u}\| < \frac{1}{2\sqrt{k}}$. According to lemma \ref{lemma: max_ellipsoid}, we know that 
$$\mbox{vol}\left(\conv\left(X_{i_{\hat{k}-1}}, \pm \bm{a}_{i_{\hat{k}}}\right)\right)\geq \mbox{vol}\left(\conv \left(X_{i_{\hat{k}-1}}, \pm 2\sqrt{k}\bm{u}\right)\right) \geq  \frac{13}{10}\mbox{vol}\left(X_{i_{\hat{k}-1}}\right)$$
Also because of the convexity of $X_{i_{\hat{k}}}$, we have $X_{i_{\hat{k}}}\supseteq \conv(X_{i_{\hat{k}-1}}, \pm \bm{a}_{i_{\hat{k}}})$. Therefore,
$$\frac{\mbox{vol}\left(X_{i_{\hat{k}}}\right)}{\mbox{vol}\left(X_{i_{\hat{k}-1}}\right)} \geq \frac{13}{10}$$

The algorithm will terminate when $X_{i_{\tilde{k}}} \supseteq E(X_{i_{\tilde{k}}}) \supseteq B_k(0,\frac{1}{2\sqrt{k}})$. So for any unit vector $\bm{a}\in B_k(0,1)$, $d(\bm{a}, \bm{V}_{i_{\tilde{k}}}) \leq \frac{\epsilon}{c''}$. This means that after learning $\tilde{k}$ new features, for any new tasks with weights lie in the same feature subspace $\bm{V}^*$, the current features can achieve error less than $\epsilon$.
By Lemma \ref{lem:maxvol},we know the volume of $E(X_{i_{\tilde{k}}})$ is upper bounded by $2\sqrt{2e}\left(\sqrt{\frac{2\log 2\tilde{k}}{k}}\right)^k\vol(B_k(0,1))$. It grows by a constant factor $\frac{13}{10}$ whenever we learn a new feature. So the number of tasks we learn with error $\eps_{acc}$ in the algorithm $\tilde{k}$ satisfies:
\begin{align*}
    \left(\frac{\epsilon}{2c''\sqrt{k}}\right)^k\cdot \left(\frac{13}{10}\right)^{\tilde{k}} \leq 2\sqrt{2e}\left(\sqrt{\frac{2\log 2\tilde{k}}{k}}\right)^k
\end{align*}
Simplify and take the log to both sides, so we will have $
    \tilde{k}-\frac{k}{2}\log(2\log(2\tilde{k})) \leq const+k\log(\frac{1}{\eps})
$. This will lead to $\tilde{k} \leq O(k\log(\log(k)/\eps))$.

Moreover, the sample complexity \cite{balcan2013active} of learning one task with input dimension $d$ up to $\epsilon$ error is $O(d\log(1/\eps)/\epsilon)$. So the sample complexity of our algorithm is 
\begin{align*}
&O\left(\frac{dk}{\eps_{acc}}\log\left(1/\eps_{acc}\right)\log\left(\log\left(k\right)/\eps\right)\right) +O\left(\frac{km}{\eps}\log\left(1/\eps\right)\right) 
\\
&= O\left(\frac{dk\sqrt{k}}{\eps}\log\left(k/\eps\right)\log\left(\log\left(k\right)/\eps\right)+ \frac{km}{\eps}\log\left(1/\eps\right)\right)\\ &=\tilde{O}\left(\frac{dk\sqrt{k}}{\eps}+\frac{km}{\eps}\right).
\end{align*}

Finally, for the nonlinear case, we consider the kernel of the features. These features live in a potentially infinite-dimensional space. If we assume there is an oracle to get a constant approximation for the optimization problem~\ref{equation:original_opt}, the dimension of features will be $O(k)$ in the end. Other bounds follow precisely the same as the linear case.
\end{proof}

It is noteworthy that the convex set $X_i$ (of feature vectors in $ \mathbb{R}^k$) we keep in the proof is defined to be close to any possible subspaces that are close to the subspace spanned by new features $\tilde{\bm{w}}_i$. So our proof is quite general for any lifelong learning algorithm that dynamically expands the architecture, \eg the basic LLL algorithm. Consequently, we can prove Theorem~\ref{thm:vanilla_algo} as follows.

\begin{proof}[Proof of Theorem~\ref{thm:vanilla_algo}.]
The proof exactly follows the proof of the Theorem \ref{thm:tune_linear}. Without the refinement of the feature subspace, the subspace we get is still in the set $Y_{i_{\tilde{k}}}$. Since $X_{i_{\tilde{k}}}$ will eventually cover the $B_k(0,\frac{1}{2\sqrt{k}})$ after learning $O(k\log(\log(k)/\epsilon))$ features, the dimension of the feature subspace is at most $O(k\log(\log(k)/\epsilon))$ for the linear features and $O(k^2\log(\log(k)/\epsilon))$ for the nonlinear features. So the total sample complexity is $\tilde{O}(dk^{1.5}/\eps+km/\eps)$ for linear features and $\tilde{O}(dk^{2.5}/\eps+k^2m/\eps)$ for nonlinear features.
\end{proof}

\subsection{Proof of the Approximate Optimization for the Linear Setting}\label{section:refine_approx}

\begin{proof}[Proof of Theorem~\ref{thm:sdp_approx}.]
Let $(\bm{X}^*, t^*)$ be a solution of the SDP~(\ref{equation:sdp_relation}) with singular value decomposition (SVD) $\bm{X}^*=\sum_{i=1}^d\lambda_i \bm{u}_i \bm{u}_i^\top$, where $0\leq\lambda_1\leq \cdots\leq \lambda_{d}\leq 1$, and $\sum_{i=1}^d \lambda_i=d-k$. Let $\bm{V}'$ be the span of $\{\bm{u}_1,\cdots, \bm{u}_{k'}\}$. Then the squared distance of any vector $\bm{a}$ to $\bm{V}'$ is $\sum_{i=k'+1}^d (\bm{a}^\top\bm{u}_i)^2$. In the meantime, the SDP assigns a value of $\bm{a}^\top \bm{X}^*\bm{a} = \sum_{i=1}^d \lambda_i (\bm{a}^\top \bm{u}_i)^2$. Thus, the multiplicative increase in squared distance is at most
\[
\frac{\sum_{i=k'+1}^d (\bm{a}^\top \bm{u}_i)^2}{\sum_{i=1}^d \lambda_i (\bm{a}^\top \bm{u}_i)^2} \leq \frac{1}{\lambda_{k'+1}}
\]
Now since the sum of all eigenvalues is $d-k$ and each one is at most $1$, for $k'\geq k$, we must have
\[
\lambda_{k'+1} \geq \frac{(d-k) - (d-k'-1)}{k'+1}=\frac{k'-k+1}{k'+1}
\]
Choose $k'=ck-1$, and we would have
\[
\frac{\sum_{i=ck}^d (\bm{a}^\top \bm{u}_i)^2}{\sum_{i=1}^d \lambda_i (\bm{a}^\top \bm{u}_i)^2} \leq \frac{1}{\lambda_{ck}} \leq 1+\frac{1}{c-1}
\]
Since there exists a subspace $\bm{V}^*$ of $k$ dimension with $d(\tilde{\bm{w}}_i, \bm{V}^*) \leq \epsilon_{acc}, \forall i\in[\hat{k}+1]$, we have $\sum_{i=1}^d \lambda_i (\tilde{\bm{w}}_i^\top \bm{u}_i)^2 \leq \epsilon_{acc}^2$. Consequently, $\lspan(\bm{u}_1,\cdots,\bm{u}_{ck-1})$ is a $(ck-1)$-dimensional approximation of $\bm{V}^*$ with the approximation factor $\sqrt{1+\frac{1}{c-1}}$ in maximum distance. 

Specifically, for $c=2$, we have $d^2(\tilde{\bm{w}}_i,\bm{V}') =\sum_{i=2k}^d (\tilde{\bm{w}}_i^\top \bm{u}_i)^2 \leq 2\epsilon_{acc}^2$.
\end{proof}

Theorem~\ref{thm:sdp_approx} guarantees that in the linear case, our refinement step will always output an $O(k)$-dimensional feature subspace of error bounded by $O(\epsilon_{acc})$ without using any additional samples.

\section{A Lower Bound for General Lifelong Learning Algorithms}\label{section:adv_lower_bound}

In this section, we show that our sample complexity bound for \textit{general} lifelong learning algorithms in the linear setting is asymptotically the best possible, assuming black-box access to a learner for a single linear target. 

As a warm-up, we first show that the analysis of \textit{our} lifelong algorithm is \textit{tight}.

\begin{thm}[Tight Example]\label{thm:lower_bound}
Using the same condition and algorithm as in Theorem \ref{thm:tune_linear}, the total sample complexity is $\Omega(dk^{1.5}/\epsilon+km/\epsilon)$.
\end{thm}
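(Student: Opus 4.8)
The plan is to exhibit an explicit sequence of tasks whose weight vectors lie in a fixed $k$-dimensional subspace $\bm{V}^*$ and for which the LLL-RR algorithm, run with the parameters of Theorem~\ref{thm:tune_linear}, is forced to (i) learn new features $\Omega(k\log(\log k/\epsilon))$ times, each costing $\Omega(d/\epsilon_{acc}) = \Omega(dk^{1.5}/\epsilon)$ samples because $\epsilon_{acc} = \Theta(\epsilon/\sqrt k)$, and (ii) spend $\Omega(k/\epsilon)$ samples on each of the remaining $m$ tasks just to certify error below $\epsilon$ with the current features. Summing gives $\Omega(dk^{1.5}/\epsilon + km/\epsilon)$. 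The second term is the easy one: take $m$ tasks all equal to (or tiny perturbations of) a single fixed unit vector $\bm{a}\in\bm{V}^*$; even after the representation is learned, verifying that a hypothesis achieves error $\le \epsilon$ on a task whose label function is $\sign(\langle\bm{a},\bm{x}\rangle)$ requires $\Omega(1/\epsilon)$ labeled examples by a standard testing/estimation lower bound (distinguishing error $0$ from error $2\epsilon$), and since the learner must output a predictor per task with error $\le \epsilon$ and tasks arrive online with no shared samples, this is $\Omega(m/\epsilon)$; inserting a mild packing of $\Theta(k)$ truly distinct directions into $\bm{V}^*$ pushes it to $\Omega(km/\epsilon)$, matching the bookkeeping in Theorem~\ref{thm:tune_linear}.

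For the first term, the construction mirrors the upper-bound proof in reverse. I will build the $\tilde k = \Theta(k\log(\log k/\epsilon))$ ``hard'' tasks inductively: having fixed the previous feature vectors $\tilde{\bm w}_{i_1},\dots,\tilde{\bm w}_{i_{\hat k-1}}$ actually produced by the algorithm, the set $Y_{i_{\hat k-1}}$ of subspaces consistent with the refinement SDP is nonempty and, crucially, the associated convex set $X_{i_{\hat k-1}}$ from Lemma~\ref{lemma:in_set} has not yet swallowed $B_k(0, 1/2\sqrt k)$ — because its maximum-volume ellipsoid can have grown by only the factor $13/10$ per step (Lemma~\ref{lemma: max_ellipsoid}) from an initial volume $\Theta((\epsilon/\sqrt k)^k)\vol(B_k)$, so after fewer than $\tilde k$ steps it is still strictly inside that ball. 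Hence there exists a unit direction $\bm a\in\bm V^*$ with $d(\bm a, \bm V) \ge \epsilon/(2\sqrt k c'')$ for the current learned subspace $\bm V$; I choose task $i_{\hat k}$ to have weight vector $\bm a_{i_{\hat k}} := \bm a$. By Lemma~\ref{lemma:error2dist} this forces error $> \epsilon$ with the current features, so the algorithm must invoke its single-task learner on $\R^d$ at accuracy $\epsilon_{acc}$, costing $\Theta(d/\epsilon_{acc}) = \Theta(dk^{1.5}/\epsilon)$ samples (here I use the assumed form of the single-task learner: it uses $\Theta(d/\epsilon)$ samples for accuracy $\epsilon$, and no cheaper certificate of $\le\epsilon$ error is available from the current representation). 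Repeating this for all $\tilde k$ steps yields the $\Omega(dk^{1.5}/\epsilon \cdot \log(\log k/\epsilon)) = \tilde\Omega(dk^{1.5}/\epsilon)$ contribution.

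The main obstacle is step (i)'s adaptivity: the adversary must choose $\bm a_{i_{\hat k}}$ \emph{after} seeing $\tilde{\bm w}_{i_1},\dots,\tilde{\bm w}_{i_{\hat k-1}}$, and must guarantee simultaneously that (a) this new task genuinely has error $>\epsilon$ under the \emph{current, post-refinement} subspace (not merely under pre-refinement features), and (b) the set $X_{i_{\hat k}}$ still does not contain $B_k(0,1/2\sqrt k)$ so that the induction can continue — equivalently, that the volume has room to keep multiplying by $13/10$ for the full count. The delicate point is a matching \emph{lower} bound on how slowly $\vol(X)$ can grow, i.e. that one step cannot make it jump past the stopping volume; this needs the ellipsoid-growth estimate of Lemma~\ref{lemma: max_ellipsoid} to be essentially tight (the $13/10$ is $2 e^{-3/8} + o(1)$, a genuine constant), together with the observation that $X_{i_{\hat k}} \subseteq \conv(X_{i_{\hat k-1}}, \pm\bm a_{i_{\hat k}}, \text{small slack})$ so a single new constraint enlarges $X$ by only a bounded factor. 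Granting that, the induction runs for $\tilde k = \Theta(k\log(\log k/\epsilon))$ rounds, and combining with the $\Omega(km/\epsilon)$ term from the repeated tasks gives $\Omega(dk^{1.5}/\epsilon + km/\epsilon)$, as claimed. A last technical wrinkle is ensuring all chosen $\bm a_{i_{\hat k}}$ together with the $m$ ``verification'' tasks really do span only a $k$-dimensional subspace; since every $\bm a_{i_{\hat k}}$ was selected from $\bm V^*$ by construction, this is automatic.
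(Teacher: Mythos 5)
Your proposal and the paper's proof are aimed at different targets, and yours has a real gap. The paper's argument is not about forcing many feature-learning rounds at a \emph{given} accuracy; it is about showing that the accuracy parameter itself, $\epsilon_{acc}=O(\epsilon/\sqrt{k})$, is forced. Concretely, the paper takes the $k-1$ orthonormal tasks $\bm{a}_i=\bm{e}_i$ and supposes each learned feature carries its permitted error $\epsilon_{acc}$ concentrated in a common extra coordinate, $\tilde{\bm{w}}_i=\bm{a}_i+\epsilon_{acc}\bm{a}_k$. Then every one of the $2^{k-2}$ tasks of the form $\frac{1}{\sqrt{k}}\sum_i \alpha_i\bm{a}_i$ with $\alpha_i\in\{\pm 1\}$ suffers error $\Theta(\sqrt{k}\,\epsilon_{acc})$ under the learned representation, and learning one of them does not help the others; so unless $\epsilon_{acc}=O(\epsilon/\sqrt{k})$ the algorithm must relearn exponentially many tasks from scratch. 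That error-amplification-by-$\sqrt{k}$ step is the entire content of the $dk^{1.5}/\epsilon$ term, and your proposal never engages with it: you take $\epsilon_{acc}=\Theta(\epsilon/\sqrt{k})$ as an immutable feature of the algorithm and reduce the first term to bookkeeping, which shows nothing about whether a coarser $\epsilon_{acc}$ (hence a cheaper algorithm of the same form) would have sufficed.

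Within your own route there are two further problems. First, the adaptive induction you use to force $\tilde{k}=\Theta(k\log(\log k/\epsilon))$ rounds needs an \emph{upper} bound on how fast $\vol(X_{i_{\hat{k}}})$ can grow per round, so that $X$ does not swallow $B_k(0,1/2\sqrt{k})$ prematurely; Lemma~\ref{lemma: max_ellipsoid} only gives the lower bound $13/10$, and nothing in the paper (or your sketch) rules out a single new constraint on $Y_{i_{\hat{k}}}$ enlarging $X_{i_{\hat{k}}}$ by an unbounded factor. You flag this as ``delicate'' but do not resolve it, and it is exactly where the argument would break. Second, the whole machinery is unnecessary: presenting the $k$ orthonormal directions of $\bm{V}^*$ as the first $k$ tasks already forces $k$ feature-learning invocations, and $k\cdot\Theta(d/\epsilon_{acc})=k\cdot\Theta(d\sqrt{k}/\epsilon)=\Theta(dk^{1.5}/\epsilon)$ is all you need --- note also the slip in your first paragraph, where you write $d/\epsilon_{acc}=dk^{1.5}/\epsilon$; a single invocation costs only $\Theta(d\sqrt{k}/\epsilon)$, and the extra factor of $k$ must come from the number of invocations. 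Your treatment of the $km/\epsilon$ term via a per-task testing lower bound is reasonable and roughly consistent with what the paper implicitly assumes.
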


\begin{proof}
Assume the task vectors $\bm{a}_i=\bm{e}_i,1\leq i\leq k-1$, where $\bm{e}_i\in\mathbb{R}^k$ has 1 in $i$-th coordinate and 0 otherwise. Assume that our algorithm accurately learns these $k-1$ tasks and returns $\tilde{\bm{w}}_{i}=\bm{a}_i+\epsilon_{acc} \bm{a}_k$. Then for a new task's weight $\frac{1}{\sqrt{k}}\sum_{i=1}^{k-1} \alpha_i \bm{a}_i$, $\alpha_i\in\{1,-1\}$. Based on the current features, the error we make on this task is $O(\sqrt{k}\epsilon_{acc})$. If we assume that $\epsilon_{acc}=\omega(\epsilon/\sqrt{k})$, then we need to learn these $2^{k-2}$ tasks accurately as well given learning any of them will not help with others (except its negative). Then the total complexity will be exponential with respect to $k$. So $\epsilon_{acc}=O(\epsilon/\sqrt{k})$, and thus we have the sample complexity $\Omega(dk^{1.
5}/\eps+km/\eps)$.
\end{proof}

Theorem~\ref{thm:lower_bound} shows that the analysis of our algorithm's sample complexity is tight for the linear setting. The main result of this section is a lower bound for general lifelong learning algorithms.

\begin{manualtheorem}{2}[Lower bound]
Suppose that a lifelong learner has black-box access to a single task learner that takes an error parameter $\eps$ as input and is allowed to return any vector that is within distance $\eps$ of the true target unit vector, using $\Theta(d/\eps)$ samples in $\R^d$. Then, there exists a distribution of $m$ tasks, $m = 2^{\Theta(k)}$ such that for any lifelong learning algorithm, WHP, the total number of samples required to learn all $m$ tasks up to error $\eps$ is $\Omega(dk^{1.5}/\eps+km/\eps)$.
\end{manualtheorem}

\paragraph{Proof idea.}

\begin{figure}[htbp]
    \centering
    \includegraphics[height=1.8in]{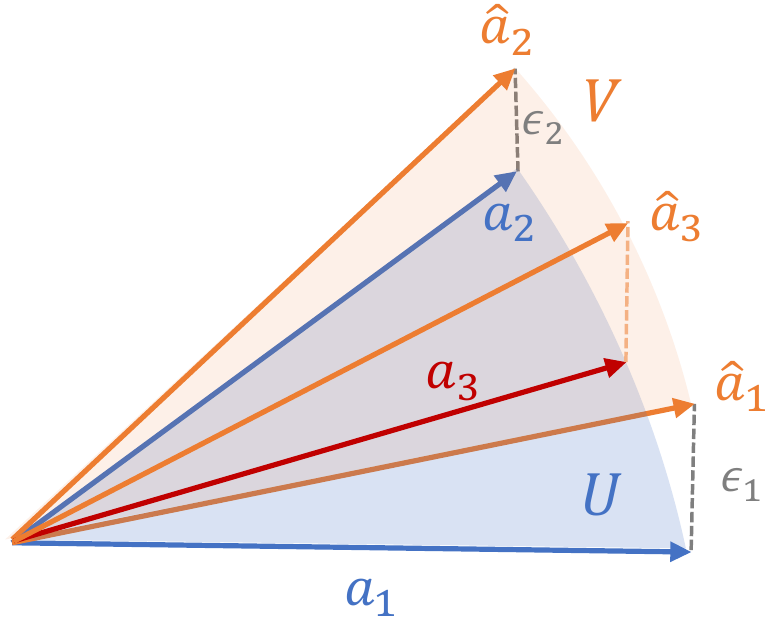}
    \caption{Geometric illustration of the lower bound examples when $k=2,d=3$. The errors that the algorithm makes concentrate on the third coordinate, and thus lead to the large angle between the learned feature subspace $V$ and the underlying one $U$. Any new task, \eg, $\bm{a}_3$ that lies on the span of $\{\bm{a}_1,\bm{a}_2\}$ cannot help improve the representation $V$.}
    \label{fig:lower_bound_intuition}
\end{figure}

Consider a sequence of tasks $\bm{a}_i=
\begin{cases}
\bm{e}_i, & 1\leq i\leq k\\
\sum_{j\in S} x_{j}\bm{e}_j, &  i> k
\end{cases}$, where $\bm{e}_j$ is the standard unit vector, $x_{j}\stackrel{\text{i.i.d}}{\sim}\text{Bernoulli}(1/2)$, $S\subset [k]$ is a subset of indices. 
The proof is mainly by constructing an adversarial output of the algorithm where the errors that it makes concentrate on one coordinate, say $k+1$. (See Figure~\ref{fig:lower_bound_intuition}.)
We will show by the following steps:
\begin{enumerate}
    \item[(1)] After learning the first $k$ tasks, each with error $\eps_i$, the angle between the learned feature subspace and the underlying one is at least $\Omega(\sqrt{\sum_{i=1}^k \eps_i^2})$.
    \item[(2)] For each new task followed, the angle of the new task to the learned subspace is at least $\Omega(\sqrt{\sum_{i\in S} \eps_i^2})$ with high probability.
    \item[(3)] Learning such a new task does not improve the representation.
    \item[(4)] To solve all tasks up to error $\eps$, we will need each $\eps_i = O(\eps/\sqrt{k})$, and it leads to the sample complexity. 
\end{enumerate}
We will show them accordingly with the following lemmas.

\begin{lemma}\label{lemma:k_task_angle}
For $k$ orthonormal tasks $\bm{a}_i=\bm{e}_i,1\leq i\leq k$, for any algorithm that learns task $i$ within error $\eps_i$, \ie, $d_D(\bm{a}_i, \hat{\bm{a}}_i) \leq \eps_i$. Let $U=\lspan(\bm{a}_1,\cdots, \bm{a}_k)$ be original feature subspace, and $V=\lspan(\hat{\bm{a}}_1,\cdots, \hat{\bm{a}}_k)$ be the learned subspace. Then there exist feasible outputs $\hat{\bm{a}}_i$ such that $\theta(U,V)= \Omega(\sqrt{\sum_{i=1}^k \eps_i^2})$.
\end{lemma}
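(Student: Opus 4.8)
The plan is to exhibit an explicit adversarial set of feasible outputs $\hat{\bm{a}}_1,\dots,\hat{\bm{a}}_k$ in which every individual error is pushed into one common fresh coordinate direction, so that the errors reinforce (rather than cancel) and the learned subspace $V$ is forced to tilt away from $U$ by an angle equal, up to constants, to the $\ell_2$-aggregate of the per-task errors. Since $d>k$, fix a coordinate $\bm{e}_{k+1}$ not spanned by $\bm{a}_1,\dots,\bm{a}_k$, and let all the error live there.

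\emph{Step 1 (feasible construction).} By Lemma~\ref{lemma:error2dist}, $d_D(\bm{a}_i,\hat{\bm{a}}_i)\le c''\|\bm{a}_i-\hat{\bm{a}}_i\|_2$, so to be feasible it suffices to keep the Euclidean perturbation of order $\eps_i$. Set $\hat{\bm{a}}_i=\sqrt{1-c_i^2}\,\bm{e}_i+c_i\bm{e}_{k+1}$ with $c_i=\Theta(\eps_i)$ small, concretely $c_i=\eps_i/(2c'')$. Then $\hat{\bm{a}}_i$ is a unit vector, $\|\bm{a}_i-\hat{\bm{a}}_i\|_2\le\sqrt{2}\,c_i$ (using $1-\sqrt{1-c_i^2}\le c_i^2$), hence $d_D(\bm{a}_i,\hat{\bm{a}}_i)\le\eps_i$, so these are legitimate single-task-learner outputs. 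The $\hat{\bm{a}}_i$ are linearly independent (distinct $\bm{e}_i$-components, positive coefficients), so $V=\lspan(\hat{\bm{a}}_1,\dots,\hat{\bm{a}}_k)$ is $k$-dimensional.

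\emph{Step 2 (compute the angle).} Both $U$ and $V$ sit inside the $(k+1)$-dimensional coordinate subspace $W=\lspan(\bm{e}_1,\dots,\bm{e}_{k+1})$, and inside $W$ the subspace $V$ is a hyperplane through the origin. Writing $b_i=c_i/\sqrt{1-c_i^2}$, the unit normal of $V$ within $W$ is $\bm{n}=(1+\sum_i b_i^2)^{-1/2}(-b_1,\dots,-b_k,1)$; indeed $\bm{n}\cdot\hat{\bm{a}}_i=-b_i\sqrt{1-c_i^2}+c_i=0$. For any unit vector $\bm{u}=\sum_i\beta_i\bm{e}_i\in U\subseteq W$ the orthogonal distance to $V$ is therefore $d(\bm{u},V)=|\bm{u}\cdot\bm{n}|=\big|\textstyle\sum_i\beta_i b_i\big|\big/\sqrt{1+\sum_i b_i^2}$. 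Maximizing over $\bm{u}$ by Cauchy--Schwarz ($\beta_i\propto b_i$) gives $\max_{\bm{u}\in U,\,\|\bm{u}\|=1}d(\bm{u},V)=\|b\|_2/\sqrt{1+\|b\|_2^2}$. Since $b_i\ge c_i=\Theta(\eps_i)$, we get $\|b\|_2^2=\Omega\big(\sum_{i=1}^k\eps_i^2\big)$; in the regime of interest $\sum_i\eps_i^2=O(1)$ (otherwise the claim is vacuous, as $\theta(U,V)\le\pi/2$, and in the eventual application $\eps_i=O(\eps/\sqrt{k})$), so $\|b\|_2\le 1$ and the maximum is $\Omega\big(\sqrt{\sum_i\eps_i^2}\big)$.

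\emph{Step 3 (conclude).} For the maximizing unit vector $\bm{u}^*\in U$ we have $\theta(\bm{u}^*,V)=\arcsin d(\bm{u}^*,V)\ge d(\bm{u}^*,V)$, and $\theta(U,V)\ge\theta(\bm{u}^*,V)$ by definition, which combined with Step 2 yields $\theta(U,V)=\Omega\big(\sqrt{\sum_{i=1}^k\eps_i^2}\big)$. I expect the only delicate points to be bookkeeping the constants through Lemma~\ref{lemma:error2dist} and the unit-norm renormalization, and flagging that the bound is meaningful only in the small-error regime; the essential content --- concentrating all errors on a single coordinate makes the subspace angles add in quadrature --- is delivered directly by the construction, and the hyperplane-normal computation in Step 2 is the shortest path to it.
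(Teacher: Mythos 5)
Your proposal is correct, and it uses the same adversarial construction as the paper: concentrate every task's error on a single fresh coordinate $\bm{e}_{k+1}$ so the perturbations add in quadrature, and then observe that the worst unit direction in $U$ is the one proportional to $(\eps_1,\ldots,\eps_k)$. Where you differ is in how the angle is extracted. The paper forms the projection matrix $\bm{P}_V=\hat{\bm{A}}(\hat{\bm{A}}^\top\hat{\bm{A}})^{-1}\hat{\bm{A}}^\top$, expands $(\bm{I}+\bm{s}\bm{s}^\top)^{-1}$ to first order, and evaluates $\tan\theta(\bm{x},\bm{P}_V\bm{x})$ at the specific witness $\bm{x}=\bm{s}$; this involves an approximation step ("$\approx$") and a guessed maximizer. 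You instead note that $V$ is a hyperplane of the $(k+1)$-dimensional coordinate subspace $W$, write down its unit normal exactly, and get $\max_{\bm{u}\in U}d(\bm{u},V)=\|\bm{b}\|_2/\sqrt{1+\|\bm{b}\|_2^2}$ by Cauchy--Schwarz, which is exact, identifies the optimal direction rather than merely exhibiting one, and avoids the Neumann-series bookkeeping. You are also more careful than the paper on feasibility (unit-normalizing $\hat{\bm{a}}_i$ and checking $d_D(\bm{a}_i,\hat{\bm{a}}_i)\le\eps_i$ via the upper bound of Lemma~\ref{lemma:error2dist}) and you correctly flag that the bound is only meaningful when $\sum_i\eps_i^2=O(1)$, which is the regime that matters downstream. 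The two arguments deliver the same constant-free conclusion; yours is the cleaner derivation.
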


\begin{proof}
Since the algorithm learns task $i$ within error $\eps_i$, with Lemma~\ref{lemma:error2dist}, there exists a constant $c$ such that $\|\bm{a}_i-\hat{\bm{a}}_i\| \leq \eps_i/c$. Because we does not consider constant factor, we assume WLOG that $\|\bm{a}_i-\hat{\bm{a}}_i\| \leq \eps_i$.
Consider the case where all errors made by the algorithm concentrate on the $k+1$ coordinate. Then the features learned by the algorithm are 
$\hat{\bm{a}_i} = 
\begin{pmatrix}
 \bm{e}_i\\
 \eps_i
\end{pmatrix}, 1\leq i\leq k$.
Denote $\bm{A}=(\bm{a}_1,\cdots, \bm{a}_k)=
\begin{pmatrix}
 \bm{I}_k\\
 \bm{0}^\top
\end{pmatrix}$, columns of which are the task vectors. Denote $\bm{s}=(\eps_1,\cdots,\eps_k)^\top$. $\hat{\bm{A}}=(\hat{\bm{a}}_1,\cdots, \hat{\bm{a}}_k) = \begin{pmatrix}
 \bm{I}_k\\
 \bm{s}^\top
\end{pmatrix}$. 
By the definition of the angles between subspace, we have $\theta(U, V)= \max\limits_{\bm{x\in U}}\theta(\bm{x}, \bm{P}_V\bm{x})$, where $\bm{P}_V$ is the projection matrix of the subspace $V$.
\begin{align*}
    \bm{P}_V =& \hat{\bm{A}} (\hat{\bm{A}}^\top \hat{\bm{A}})^{-1}\hat{\bm{A}}^\top\\
    =&\begin{pmatrix}
     \bm{I}_k\\
     \bm{s}^\top 
    \end{pmatrix}
    \begin{pmatrix}
     \bm{I}_k+\bm{s}\bm{s}^\top
    \end{pmatrix}^{-1}
    \begin{pmatrix}
     \bm{I}_k & \bm{s}
    \end{pmatrix}\\
    =&\begin{pmatrix}
     \bm{I}_k\\
     \bm{s}^\top
    \end{pmatrix}
    \begin{pmatrix}
     \bm{I}_k-\bm{s}\bm{s}^\top + o((\bm{s}\bm{s})^3)
    \end{pmatrix}
    \begin{pmatrix}
     \bm{I}_k &  \bm{s}
    \end{pmatrix}\\
    \approx&\begin{pmatrix}
     \bm{I}_k-\bm{s}\bm{s}^\top  &  \bm{s}(1-\bm{s}^\top\bm{s}) \\
      \bm{s}^\top(1-\bm{s}^\top\bm{s})  &  \bm{s}^\top\bm{s} - (\bm{s}^\top\bm{s})^2
    \end{pmatrix}
\end{align*}
Let $\bm{x}=\bm{s}=(\eps_1,\cdots,\eps_k)^\top\in U$.
Since 
\[
\bm{P}_V \bm{x}= \left(\left(1-\sum_{i=1}^k \eps_i^2\right)\eps_1,\cdots, \left(1-\sum_{i=1}^k \eps_i^2\right)\eps_k, \left(1-\sum_{i=1}^k \eps_i^2\right)\sum_{i=1}^k \eps_i^2\right)^\top, 
\]
we have 
\begin{align*}
    \tan \theta\left(\bm{x},\bm{P}_V\bm{x}\right) = 
    \frac{\left(1-\sum_{i=1}^k \eps_i^2\right)\sum_{i=1}^k \eps_i^2}{\left(1-\sum_{i=1}^k \eps_i^2\right)\sqrt{\sum_{i=1}^k \eps_i^2}}
    = \sqrt{\sum_{i=1}^k \eps_i^2}
\end{align*}
So we know that $\theta(U,V)\geq \theta(\bm{x},\bm{P}_V\bm{x})=\Omega(\sqrt{\sum_{i=1}^k \eps_i^2})$. So we prove that the angle between the learned subspace $V$ and the underlying one $U$ is at lease $\Omega(\sqrt{\sum_{i=1}^k \eps_i^2})$.

\end{proof} 

\begin{lemma}\label{lemma:new_task_angle}
For any vector $\bm{x}=\sum_{i\in S} x_{i}\bm{e}_i$, where $S\subset [k]$, 
$x_{i}\stackrel{\text{i.i.d}}{\sim}\text{Bernoulli}(1/2)$. Assume that $\forall i\in S, 0\leq \eps_i \leq 2\sqrt{\sum_{i\in S}\eps_i^2/|S|}$. Let $V=\lspan(\{\hat{\bm{a}}_i\}, i\in S)$, where $\hat{\bm{a}_i} = 
\begin{pmatrix}
 \bm{e}_i\\
 \eps_i
\end{pmatrix}$. Then with high probability, $\theta(\bm{x}, V)=\Omega (\sqrt{\sum_{i\in S} \eps_i^2})$.
\end{lemma}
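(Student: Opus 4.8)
The plan is to reduce $\theta(\bm x,V)$ to a closed-form expression via an explicit orthogonal projection, and then control that expression by a concentration bound that crucially exploits the uniformity hypothesis on the $\eps_i$. After reindexing so that $S=\{1,\dots,T\}$ with $T=|S|$, everything lives in the $(T+1)$-dimensional subspace spanned by $\{\bm e_i:i\in S\}$ and $\bm e_{k+1}$; there $\bm x=(\bm b^\top,0)^\top$ with $\bm b\in\{0,1\}^T$ the vector of the $x_i$, and $V$ is the column space of $\hat{\bm A}=\begin{pmatrix}\bm I_T\\ \bm s^\top\end{pmatrix}$, where $\bm s=(\eps_i)_{i\in S}$.

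I would then compute $\bm P_V\bm x=\hat{\bm A}(\hat{\bm A}^\top\hat{\bm A})^{-1}\hat{\bm A}^\top\bm x$. Since $\hat{\bm A}^\top\hat{\bm A}=\bm I_T+\bm s\bm s^\top$, Sherman--Morrison gives $(\hat{\bm A}^\top\hat{\bm A})^{-1}=\bm I_T-\tfrac{\bm s\bm s^\top}{1+\|\bm s\|^2}$, and $\hat{\bm A}^\top\bm x=\bm b$, so a one-line computation yields $\bm x-\bm P_V\bm x=\tfrac{\langle\bm s,\bm b\rangle}{1+\|\bm s\|^2}\begin{pmatrix}\bm s\\-1\end{pmatrix}$ and hence the exact identity
\[
\sin\theta(\bm x,V)=\frac{\|\bm x-\bm P_V\bm x\|}{\|\bm x\|}=\frac{|\langle\bm s,\bm b\rangle|}{\|\bm b\|\,\sqrt{1+\|\bm s\|^2}}.
\]

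It then remains to lower bound the right side with high probability. The denominator is harmless: $\|\bm b\|^2=\sum_{i\in S}x_i\le T$ deterministically, and $\|\bm s\|^2=\sum_{i\in S}\eps_i^2$ is bounded (the $\eps_i$ being small errors), so $\sqrt{1+\|\bm s\|^2}=O(1)$. For the numerator, $\langle\bm s,\bm b\rangle=\sum_{i\in S}\eps_i x_i$ is a sum of independent variables in $[0,\eps_i]$ with mean $\tfrac12\|\bm s\|_1$; by Hoeffding's inequality, $\langle\bm s,\bm b\rangle\ge\tfrac14\|\bm s\|_1$ except with probability $\exp(-\|\bm s\|_1^2/8\|\bm s\|_2^2)$. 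Here the hypothesis $\eps_i\le 2\|\bm s\|_2/\sqrt{T}$ is exactly what is needed: it gives $\|\bm s\|_1\ge\|\bm s\|_2^2/\|\bm s\|_\infty\ge\tfrac12\|\bm s\|_2\sqrt{T}$, so the failure probability is $\exp(-\Omega(T))$ (high probability, since $T=|S|=\Theta(k)$ in the lower-bound construction), and on the good event $|\langle\bm s,\bm b\rangle|=\Omega(\|\bm s\|_2\sqrt{T})$. Combining, $\sin\theta(\bm x,V)\ge\Omega(\|\bm s\|_2\sqrt{T})/(\sqrt{T}\cdot O(1))=\Omega(\|\bm s\|_2)$, and since $\theta\ge\sin\theta$ this gives $\theta(\bm x,V)=\Omega(\sqrt{\sum_{i\in S}\eps_i^2})$.

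The main obstacle — and the only substantive point — is the concentration step together with seeing why the uniformity assumption cannot be dropped: without it, a single dominant coordinate $\eps_{i_0}$ makes $\langle\bm s,\bm b\rangle\approx\eps_{i_0}x_{i_0}$, which is $0$ with probability $\tfrac12$, so no high-probability bound of order $\|\bm s\|_2$ could hold. The bound $\eps_i\le2\|\bm s\|_2/\sqrt{T}$ does double duty: it caps the Hoeffding increments at $O(\|\bm s\|_2/\sqrt T)$ and supplies the $\ell_1$--$\ell_2$ comparison $\|\bm s\|_1\gtrsim\|\bm s\|_2\sqrt T$ that makes the mean dominate the fluctuations.
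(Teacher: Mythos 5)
Your proposal is correct and follows essentially the same route as the paper's proof: both compute the orthogonal projection of $\bm{x}$ onto $V$ explicitly, reduce the angle bound to lower-bounding $\sum_{i\in S}\eps_i x_i$, and combine a Chernoff/Hoeffding bound with the $\ell_1$-versus-$\ell_2$ comparison $\sum_{i\in S}\eps_i\ge\tfrac12\sqrt{|S|}\,\|\bm{s}\|_2$ supplied by the uniformity hypothesis. Your exact Sherman--Morrison identity for $\sin\theta(\bm{x},V)$ is a slightly cleaner packaging than the paper's $\tan^2\theta$ computation, but it is the same argument.
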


\begin{proof}
Denote $\bm{y}=\bm{P}_V(\bm{x})$, then $y_j=\begin{cases}
x_j-\eps_j \sum_{i\in S} \eps_i x_i & \text{ if }j\in S\\
(1-\sum_{i=1}^k\eps_i^2)\sum_{i\in S} \eps_i x_i  & \text{ if }j=k+1
\end{cases}$. 
Then we can calculate the angle between $\bm{x}$ and the subspace $V$ as follows.

\begin{align*}
    \left(\tan\left(\theta\left(\bm{x}, \bm{P}_V\left(\bm{x}\right)\right)\right)\right)^2 
    =& 
    \frac{\left(1-\sum_{i\in S} \eps_i^2\right)^2 \left(\sum_{i\in S} \eps_ix_i\right)^2 }
    {\sum_{i\in S}x_i^2  + \left(\sum_{i\in S} \eps_i^2\right)\left(\sum_{i\in S} \eps_i x_i\right)^2 - 2\left(\sum_{i\in S} \eps_ix_i\right)^2}
    \geq
    \frac{ \frac{1}{4} \left(\sum_{i\in S} \eps_ix_i\right)^2}
    {\sum_{i\in S} x_i^2}
\end{align*}
The last inequality is because we learn each task well, so we can assume $\sum_{i\in S} \eps_i^2 \leq 1/2$. Then the probability that $\theta(\bm{x}, \bm{P}_V(\bm{x}))$ is greater than $O(\sqrt{\sum_{i\in S} \eps_i^2})$ is as follows.
\begin{align*}
    \mathbb{P}\left(\theta\left(\bm{x},\bm{P}_V\left(\bm{x}\right)\right)\geq \frac{1}{16}\sqrt{\sum_{i\in S}{\eps_i^2}}\right)
    =&
    \mathbb{P}\left(\tan^2\theta(\bm{x},\bm{P}_V\left(\bm{x}\right)\right)\geq \frac{1}{256}\sum_{i\in S} \eps_i^2)\\
    \geq& \mathbb{P}\left(\frac{\frac{1}{4}\left(\sum_{i\in S} \eps_ix_i\right)^2}{\sum_{i\in S} x_i^2} \geq \frac{1}{256}\sum_{i\in S} \eps_i^2\right)\\
    =& \mathbb{P}\left(\left(\sum_{i\in S} \eps_ix_i\right)^2\geq \frac{1}{64}\sum_{i\in S} \eps_i^2 \sum_{i\in S} x_i^2\right)\\
    \geq &  \mathbb{P}\left(\left(\sum_{i\in S} \eps_ix_i\right)^2\geq \frac{1}{64}|S|\sum_{i\in S} \eps_i^2\right)
\end{align*}
By Chernoff bound, for any $t>0$,
\[
\mathbb{P}\left(\sum_{i\in S} \eps_i x_i - \frac{1}{2}\sum_{i\in S}\eps_i \geq - t\sqrt{\sum_{i\in S} \eps_i^2}\right) \geq 1- e^{-t^2/2}
\]
Choose $t=\sqrt{|S|}/8$, we have
\[
\mathbb{P}\left(\sum_{i\in S} \eps_i x_i \geq \frac{1}{2}\sum_{i\in S} \eps_i -\frac{1}{8}\sqrt{|S|\sum_{i\in S}\eps_i^2}\right) \geq 1-e^{-|S|/128}
\]
Note that with $\eps = \sqrt{\sum_{i\in S} \eps_i^2}$ and 
$0\leq \eps_i \leq 2\eps/\sqrt{|S|}$ for all $i\in S$, we have
\[
\sum_{i\in S} \eps_i \ge \frac{\sqrt{|S|}}{2}\eps.
\]
Consequently, we have 
\[
\sqrt{\frac{1}{64}|S|\sum_{i\in S}\eps_i^2} \leq \frac{1}{2}\sum_{i\in S}\eps_i - \frac{1}{8}\sqrt{|S|\sum_{i\in S} \eps_i^2}
\]
\begin{align*}
    \mathbb{P}\left(\theta\left(\bm{x},V\right)\geq \frac{1}{16}\sqrt{\sum_{i\in S}{\eps_i^2}}\right)
    \geq &
     \mathbb{P}\left(\left(\sum_{i\in S} \eps_ib_i\right)^2\geq \frac{1}{64}|S|\sum_{i\in S} \eps_i^2\right)\\
     \geq &
     \mathbb{P}\left(\sum_{i\in S} \eps_i b_i \geq \frac{1}{2}\sum_{i\in S} \eps_i -\frac{1}{8}\sqrt{|S|\sum_{i\in S}\eps_i^2}\right) \\
     \geq &
     1-e^{-|S|/128}
\end{align*}
\end{proof}

\begin{lemma}\label{lemma:find_avg_subset}
For $b_1,\cdots, b_k \geq 0$, with $\bar{b} = \sqrt{\sum_{i=1}^k b_i^2/k}$, let $b_i \ge \bar{b}/C$ for some constant $C > 1$. Then 
there exists a subset $S\subset [k]$ with $|S| \geq k(1-p)$ s.t. for all $i \in S$, we have 
\[
b_i \leq\sqrt{\frac{1}{p}\ln \frac{C^2}{1-p}}  \sqrt{\frac{\sum_{i\in S} b_i^2}{|S|}}.
\]
\end{lemma}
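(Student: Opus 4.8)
The plan is to sort the $b_i$ and take $S$ to be a suffix of the sorted list, tracking how the running average of the smallest squares evolves. Reorder so that $b_1\ge b_2\ge\cdots\ge b_k$ (if $\bar b=0$ then every $b_i$ vanishes and the claim is trivial, so assume $\bar b>0$, whence all $b_i\ge\bar b/C>0$), set $\lambda:=\frac1p\ln\frac{C^2}{1-p}$, and note $\lambda>1$ since $-\ln(1-p)>p$ for $p\in(0,1)$. For $t\in\{0,\dots,k-1\}$ put $S_t=\{t+1,\dots,k\}$ and $h(t):=\frac1{k-t}\sum_{i\in S_t}b_i^2$ (so $h(0)=\bar b^2$). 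Since the list is sorted, $\max_{i\in S_t}b_i=b_{t+1}$, so $S_t$ satisfies the conclusion of the lemma exactly when $b_{t+1}^2\le\lambda\,h(t)$, and then $|S_t|=k-t$. Thus I would look for some $t<pk$ with $b_{t+1}^2\le\lambda\,h(t)$: let $t^*$ be the least such $t$ (it exists, since $t=k-1$ works because $\lambda>1$), dispose of $t^*=0$ by taking $S=[k]$, and otherwise show $t^*<pk$.

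The crux is a geometric-decay recursion for $h$. If $b_{t+1}^2>\lambda\,h(t)$ and $t\le k-2$, then using $(k-t)h(t)=b_{t+1}^2+(k-t-1)h(t+1)$ I get
\[
h(t+1)=\frac{(k-t)h(t)-b_{t+1}^2}{k-t-1}<\frac{(k-t-\lambda)\,h(t)}{k-t-1};
\]
since the left side is $\ge0$ and $h(t)>0$, this forces $k-t>\lambda$, and then the right side is $h(t)\bigl(1-\frac{\lambda-1}{k-t-1}\bigr)$ with the factor in $(0,1)$. Applying this to $t=0,\dots,t^*-1$ (all failing cuts, by minimality of $t^*$), multiplying, and using $1-x\le e^{-x}$ together with $\sum_{t=0}^{t^*-1}\frac1{k-t-1}=\sum_{j=k-t^*}^{k-1}\frac1j\ge\ln\frac{k}{k-t^*}$ (from $\frac1j\ge\ln\frac{j+1}{j}$), I would obtain $h(t^*)<\bar b^2\bigl(\frac{k-t^*}{k}\bigr)^{\lambda-1}$.

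On the other hand $h(t^*)$ is an average of quantities $b_i^2\ge(\bar b/C)^2$, so $h(t^*)\ge\bar b^2/C^2$; combining the two bounds gives $\bigl(\frac{k-t^*}{k}\bigr)^{\lambda-1}>C^{-2}$, i.e. $\frac{k-t^*}{k}>C^{-2/(\lambda-1)}$. To finish I would check $C^{-2/(\lambda-1)}\ge1-p$, equivalently $(\lambda-1)\ln\frac1{1-p}\ge\ln C^2$: writing $L:=-\ln(1-p)$ and substituting $\lambda-1=\frac{\ln C^2+L-p}{p}$, the inequality becomes $(\ln C^2+L-p)L\ge p\ln C^2$, i.e. $(L-p)(L+\ln C^2)\ge0$, which holds since $L\ge p$ and $\ln C^2>0$. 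Hence $\frac{k-t^*}{k}>1-p$, so $t^*<pk$ and $S:=S_{t^*}$ is the desired set, with $|S|=k-t^*>k(1-p)$.

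I expect the obstacles to be bookkeeping rather than ideas. The two places needing care are: getting the \emph{direction} of the $h$-recursion right (in particular noting that $h(t+1)\ge0$ forces $k-t>\lambda$, so the telescoped factors stay in $(0,1)$ and the product bound is meaningful), and verifying the elementary inequality $(\lambda-1)\ln\frac1{1-p}\ge\ln C^2$ that calibrates the constant — this is the only point where the hypothesis $b_i\ge\bar b/C$ and the estimate $-\ln(1-p)>p$ enter, and it is exactly what makes $\lambda=\frac1p\ln\frac{C^2}{1-p}$ the right choice (a cruder Markov-type bound would only give $\lambda\asymp C^2/p$).
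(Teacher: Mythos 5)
Your proof is correct, and it is the same basic strategy as the paper's: peel off the largest elements until the maximum of what remains is at most $\sqrt{\lambda}$ times its root-mean-square, using the geometric decay of the remaining sum of squares together with the floor $b_i\ge\bar b/C$ to bound how many elements get peeled. The execution differs in two ways worth noting. First, the paper removes elements in rounds (all $i$ with $b_i$ exceeding $\gamma$ times the current RMS) and bounds the decay by $e^{-\gamma^2 p_j}$ per round, whereas you remove one element at a time in sorted order and telescope with the harmonic-sum estimate, arriving at the cleaner bound $h(t^*)<\bar b^2\bigl(\tfrac{k-t^*}{k}\bigr)^{\lambda-1}$. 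Second, and more substantively, the paper's write-up defines $p=\sum_j p_j$ as the fraction actually removed by the algorithm and then ``chooses'' $\gamma$ in terms of that $p$ — but $\gamma$ determines the algorithm and hence $p$, so the displayed inequality $1-p\le e^{-\gamma^2 p}C^2$ collapses to an identity and extra (monotonicity) work is needed to conclude that the removed fraction is at most the target $p$. Your argument fixes $\lambda$ in terms of the target $p$ up front and proves $t^*<pk$ directly via the explicit calibration $(\lambda-1)\ln\tfrac1{1-p}\ge\ln C^2$, which you verify reduces to $(L-p)(L+\ln C^2)\ge0$ with $L=-\ln(1-p)\ge p$. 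This removes the circularity and is, in that respect, a tighter proof of the same lemma.
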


\begin{proof}
Let $S_1=\{1,\cdots,k\}$. Choose a constant $\gamma>1$. We repeat the following procedure. For the $j$-th step, we are given a set $\{x_i,i\in S_j\}$, Let $S_{j+1} = \{i\in S, b_i \leq \gamma \sqrt{\sum_{i\in S_j}b_i^2/|S_j|}\}$. The algorithm terminates when $S_j=S_{j+1}$.
Denote $p_j=1 - |S_{j+1}|/|S_j|$. For the $j$-th step, we have
\[
\sum_{i\in S_j} b_i^2
= \sum_{i \in S_{j+1}} b_i^2 + \sum_{i \in S_j\backslash S_{j+1}} b_i ^2
\geq \sum_{i \in S_{j+1}} b_i^2 +  p_j \gamma^2 \sum_{i\in S_j}b_i^2
\] 
This derives that
\[
\sum_{i\in S_{j+1}}b_i^2 \leq (1-p_j \gamma^2)\sum_{i\in S_j}b_i^2 \leq e^{-p_j\gamma^2}\sum_{i\in S_j}b_i^2
\]
Accumulating all $J$ steps, we have
\[
\sum_{i\in S_J}b_i^2 \leq e^{-\gamma^2\sum_{j=1}^{J-1}p_j}\sum_{i\in S_1}b_i^2 =  e^{-\gamma^2\sum_{j=1}^{J-1}p_j} k\bar{b}^2
\]
From the condition that $b_i \geq \bar{b}/C$, we have
\[
|S_J|\frac{\bar{b}^2}{C^2}
\leq 
\sum_{i\in S_J}b_i^2 \leq e^{-\gamma^2\sum_{j=1}^{J-1}p_j} k\bar{b}^2
\]
From the definition of $p_j$, we know that 
\[
|S_J|=k\prod_{j=1}^{J-1} \left(1-p_j\right) \geq k\left(1-\sum_{j=1}^{J-1}p_j\right)
\]
Denote $p=\sum_{j=1}^{J-1}p_j$, then we have
\[
1-p \leq e^{-\gamma^2 p} C^2
\]
Choose $\gamma=\sqrt{\frac{1}{p}\ln \frac{C^2}{1-p}} $, then we get a set $S_J$ with $|S_j|\geq k(1-p)$ satisfying for all $i\in S_J$,
\[
b_i \leq \sqrt{\frac{1}{p}\ln \frac{C^2}{1-p}} \sqrt{\frac{\sum_{i\in S_J}b_i^2}{|S_j|}}
\]

\end{proof}

We can now prove Theorem~\ref{thm:adv_lower_bound} as follows.
\begin{proof}
Denote the underlying feature subspace as $U$.
Consider a sequence of tasks with first $k$ tasks the basis of the feature subspace, \ie, $\bm{a}_i=\bm{e}_i,  1\leq i\leq k$. The lifelong learning algorithm learns task $i$ up to error $\eps_i$, and get $k$ features $\hat{\bm{a}_1},\cdots,\hat{\bm{a}}_k$. 
If the number of tasks for which $\eps_i < \sqrt{2\sum_{i=1}^k \eps_i^2/3k}$ is more than $k/4$, then we have that the total sample complexity is already $\Omega(dk^{1.5}/\sqrt{\sum_{i=1}^k \eps_i^2})$ and the theorem follows.
So we assume that for at least $3k/4$ tasks, we have 
\[
\eps_i \ge \sqrt{2\sum_{i=1}^k \eps_i^2/3k}. 
\]
Calling this subset $S_1$, it follows that for each $i \in S_1$, we have 
$\eps_i \ge \sqrt{\sum_{i\in S_1} \eps_i^2/2|S_1|}$. 
Next, applying Lemma~\ref{lemma:find_avg_subset} to the set $S_1$  with cardinality at least $3k/4$, using $p=1/3$ and $C = \sqrt{2}$, we get that there exists a set $S\subseteq S_1$ with $|S| \geq k/2$ and $\eps_i \leq 2\sqrt{\frac{\sum_{i\in S}\eps_i^2}{|S|}}$ for all $i\in S$. Consider the span $V:=\{\hat{\bm{a}}_i, i\in S\}$. By Lemma~\ref{lemma:k_task_angle}, we know there exists feasible $\hat{\bm{a}}_i$ such that $\theta(U,V) =\Omega(\sqrt{\sum_{i\in S}\eps_i^2})$.

Next we consider the following tasks as $\bm{a}_j=\sum_{i\in S}x_{ji}\bm{e}_i, j\geq k+1$, where $x_{ji}\stackrel{\text{i.i.d}}{\sim}\text{Bernoulli}(1/2)$. There are $2^{k/2}$ such tasks. By Lemma~\ref{lemma:new_task_angle}, we know that with high probability each new task is far from the learned subspace, \ie,
\[
\mathbb{P}\left(\theta\left(\bm{a}_j,V\right) \geq \frac{1}{16}\sqrt{\sum_{i\in S}\eps_i^2}\right) \geq 1-e^{-k/256}
\]
Assume that the single-task learning algorithm applied to this new task also induce error in the $(k+1)$'st coordinate. Say the new learned feature $\hat{\bm{a}}_j = (\bm{a}_j^\top, \sum_{i\in S} b_{ji}\eps_i)^\top$. Then the new learned feature is in the learned subspace $V$, which means that learning new tasks does not improve the learned subspace. 

Since each new task is generated randomly, there are exponentially many new tasks that are far from the learned subspace but make no improvement by learning them. Therefore to ensure each task is learned with error $\eps$, the only way is to let $\theta(U,V) \leq \eps$. This implies that $\sum_{i\in S}\eps_i^2 \leq c\eps^2$. By the generalized H\"{o}lder inequality~\cite{finner1992generalization}, we have
$$\sum_{i\in S}\frac{1}{\eps_i}\sum_{i\in S}\frac{1}{\eps_i}\sum_{i\in S}\eps_i^2 \geq
\left(\sum_{i\in S} 1\right)^3
\geq\left(\frac{k}{2}\right)^3$$
So $\sum_{i\in k}\frac{1}{\eps_i} \geq c'k^{1.5}/\eps$. The number of the samples needed to learn the tasks in set $S$ is $\sum_{i\in S}\frac{d}{\eps_i} = \Omega(dk^{1.5}/\eps)$. So the overall sample complexity for the sequence of tasks are $\Omega(dk^{1.5}/\eps+km/\eps)$.

\end{proof}

\section{Simulations and Empirical Results}\label{section:experiments}

In this section, we describe our experimental studies. In Section~\ref{section:exp_linear}, we run the basic LLL and LLL-RR algorithms in a task-incremental binary classification setting. Then we conduct class-incremental experiments on real dataset using our H-LLL algorithm in Section~\ref{section:real_data_exp}. The performance shows the benefits of our algorithm compared to existing continual learning algorithms.

\subsection{Linear Features}\label{section:exp_linear}

\begin{figure}[ht]
\centering
\begin{minipage}[t]{.41\linewidth}
\centering
\includegraphics[height=2.1in]{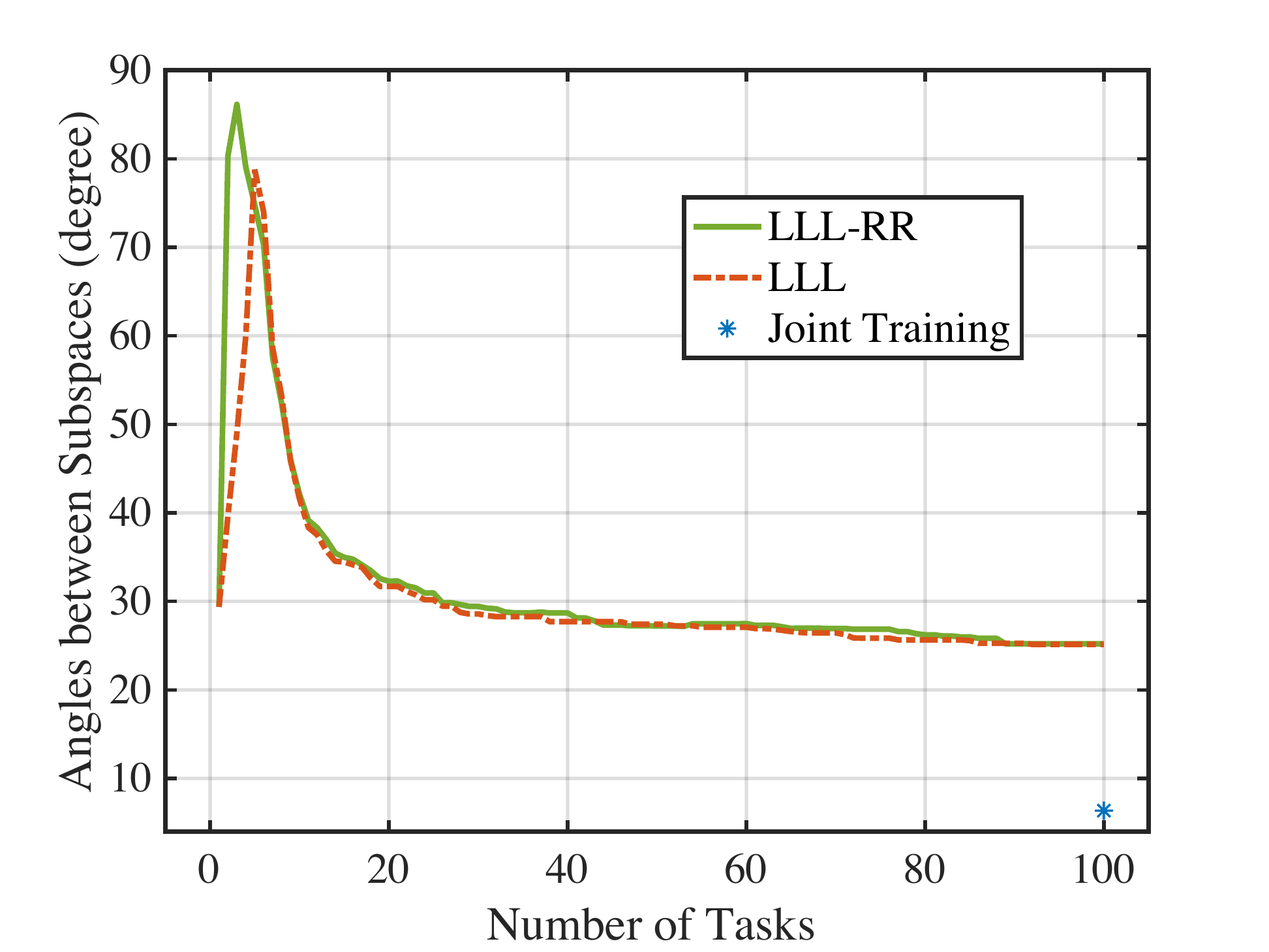}
\end{minipage}
\begin{minipage}[t]{.41\linewidth}
\centering
\includegraphics[height=2.1in]{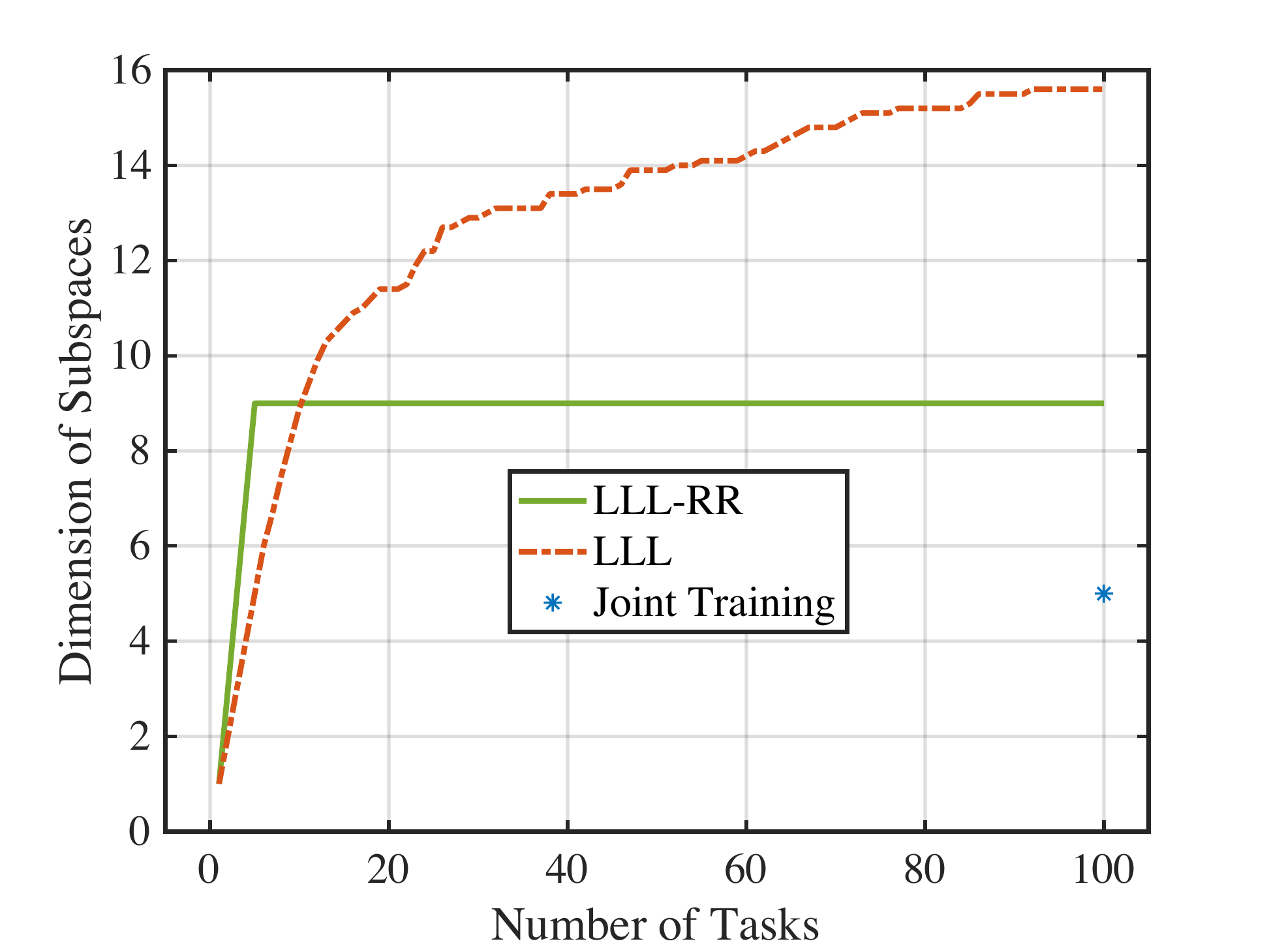}
\end{minipage}
\begin{minipage}[t]{.41\linewidth}
\centering
\includegraphics[height=2.1in]{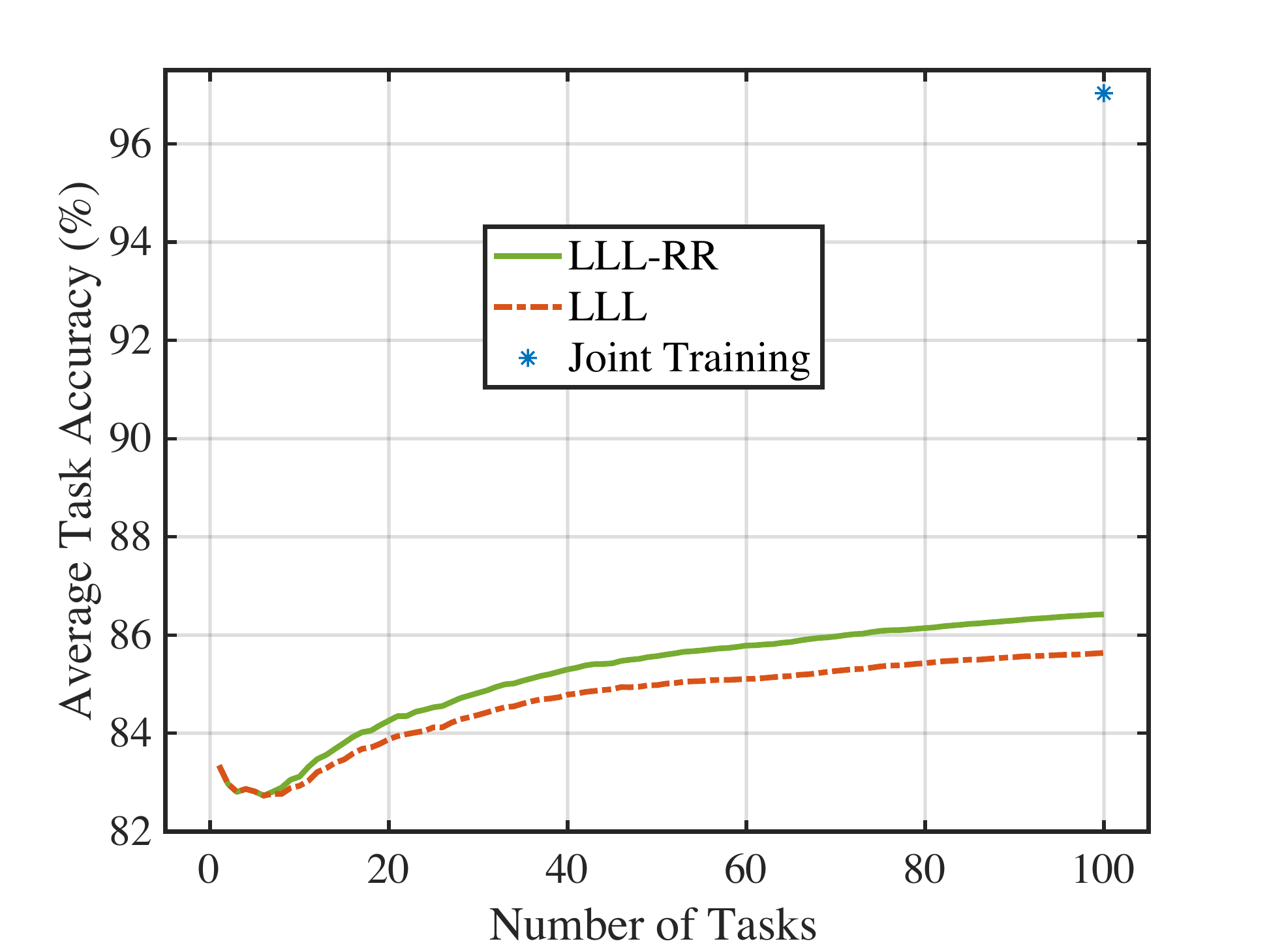}
\end{minipage}
\begin{minipage}[t]{.41\linewidth}
\centering
\includegraphics[height=2.1in]{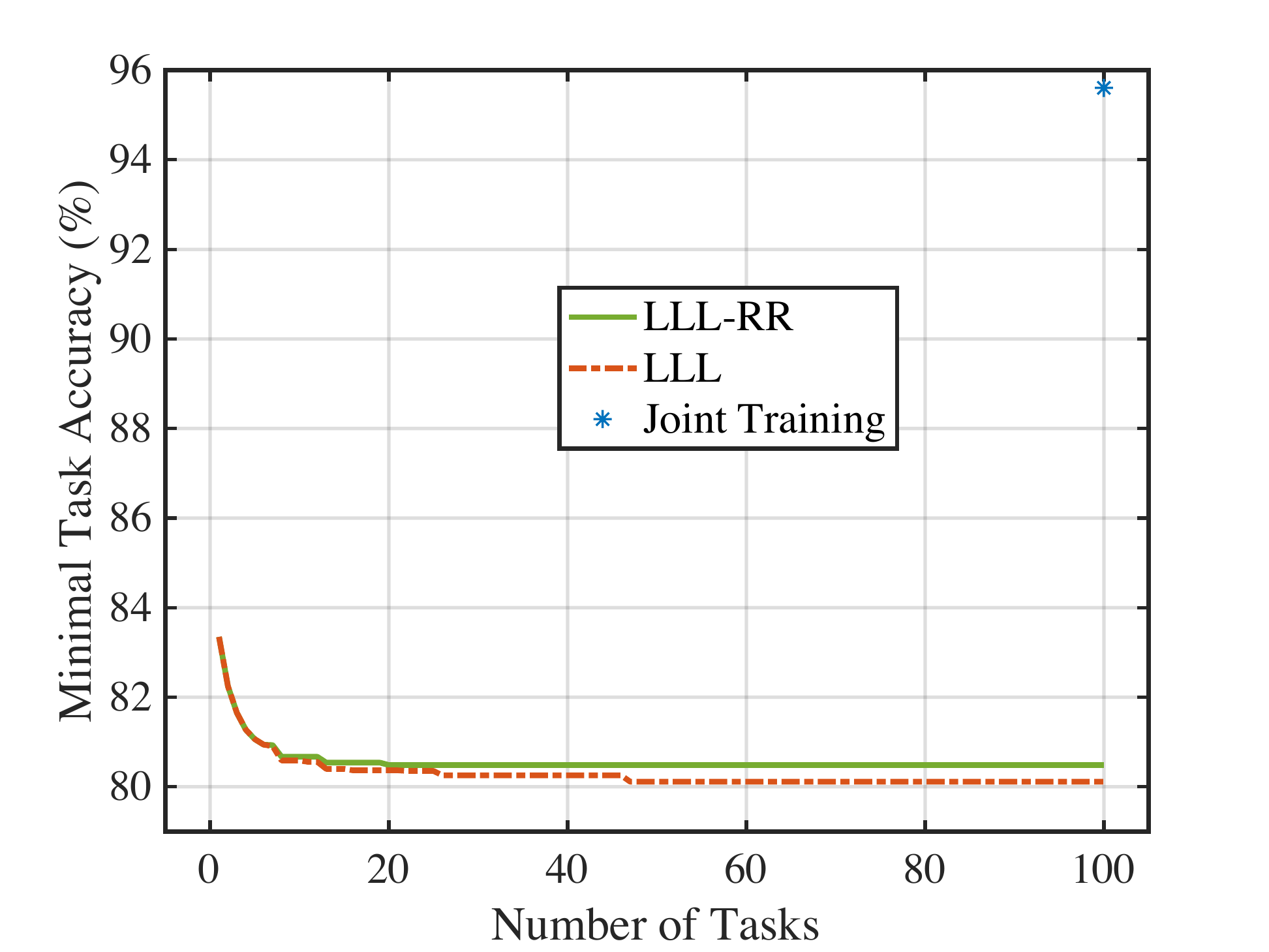}
\end{minipage}
\caption{Simulation on linear features. $\thickmuskip=2mu \medmuskip=2mu N=200,k=5,m=100,d=100$, averaged on 10 trials.}\label{fig:linear_lll}
\end{figure}
Here we consider task-incremental lifelong learning in the setting of binary classification where $y=\sign(\langle \bm{c}_i^*, \bm{W}^*\bm{x}\rangle)$. We choose the input dimension, $d=100$, the number of tasks, $m=100$, the number of examples per task, $N=200$, the dimension of feature subspace, $k=5$. The parameters $c^*_{ij}, W^*_{ij} \sim \mathcal{N}(0,1)$. The input data $X_i\sim \mathcal{N}(0, 1)$. We set the error threshold to be $\epsilon=0.1$. We compare three methods: LLL (Basic lifelong learning algorithm), LLL-RR (lifelong learning algorithm with representation refinement), and Joint Training (offline training with all data jointly).

The average task accuracy and minimal task accuracy are computed for tasks encountered so far based on the current model. The angle between feature subspaces is calculated as their maximal principal angle. Formally, for two subspaces $F$ and $G$, let $\bm{P}, \bm{Q}$ to be the orthogonal matrices whose columns form an orthonormal basis of $F$ and $G$. For the singular value decomposition $\bm{P}^\top \bm{Q}=\bm{U\Sigma V}^\top$, we define the principal angles between $F$ and $G$ as $\theta_i=\arccos(\bm{\Sigma}_{ii})$, $\frac{\pi}{2} \geq \theta_1\geq \cdots,\geq \theta_k \geq 0$. We calculate the angle between two subspaces $F$ and $G$ as the maximal principal angle, \ie, $\arccos(\bm{\Sigma}_{11})$.

As we can see in Figure \ref{fig:linear_lll}, lifelong learning can continually learn better features while learning more tasks. Moreover, lifelong learning with refinement improves average accuracy, min accuracy, model size and convergence to the underlying feature subspace.

\subsection{Image Classification}\label{section:real_data_exp}

\paragraph{Experimental settings.} We generally follow the experimental settings and evaluation protocol in \cite{rebuffi2017icarl}. In our experiments, we evaluate our H-LLL algorithm on CIFAR-100. We train all 100 classes in 10 splits and each split contains 10 classes. There is no class overlap between different splits. Each training data split can be viewed as a task and is fed to the neural network incrementally. Similar to \cite{rebuffi2017icarl}, we use a fixed memory size of 2,000 exemplars. The final result are curves of the classification accuracies after each batch of classes. We use ResNet-18~\cite{he2016deep} for all the encoders $f_j,\forall j$ and SGD with weight decay $0.0005$. All the ResNet encoders are trained from scratch.

\begin{figure}[t]
\centering
\begin{minipage}[t]{.41\linewidth}
\centering
\includegraphics[height=2.1in]{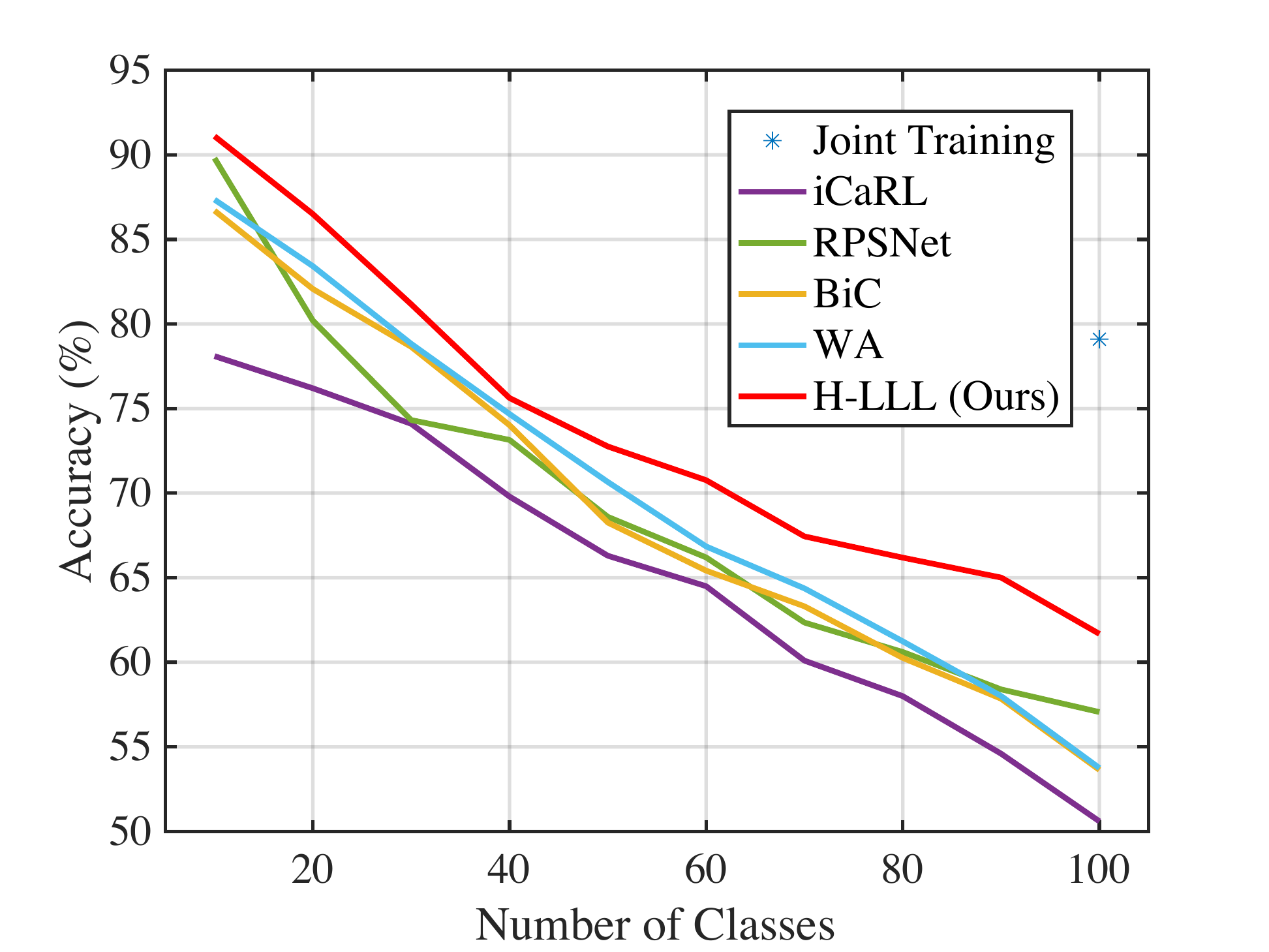}
\end{minipage}
\begin{minipage}[t]{.41\linewidth}
\centering
\includegraphics[height=2.1in]{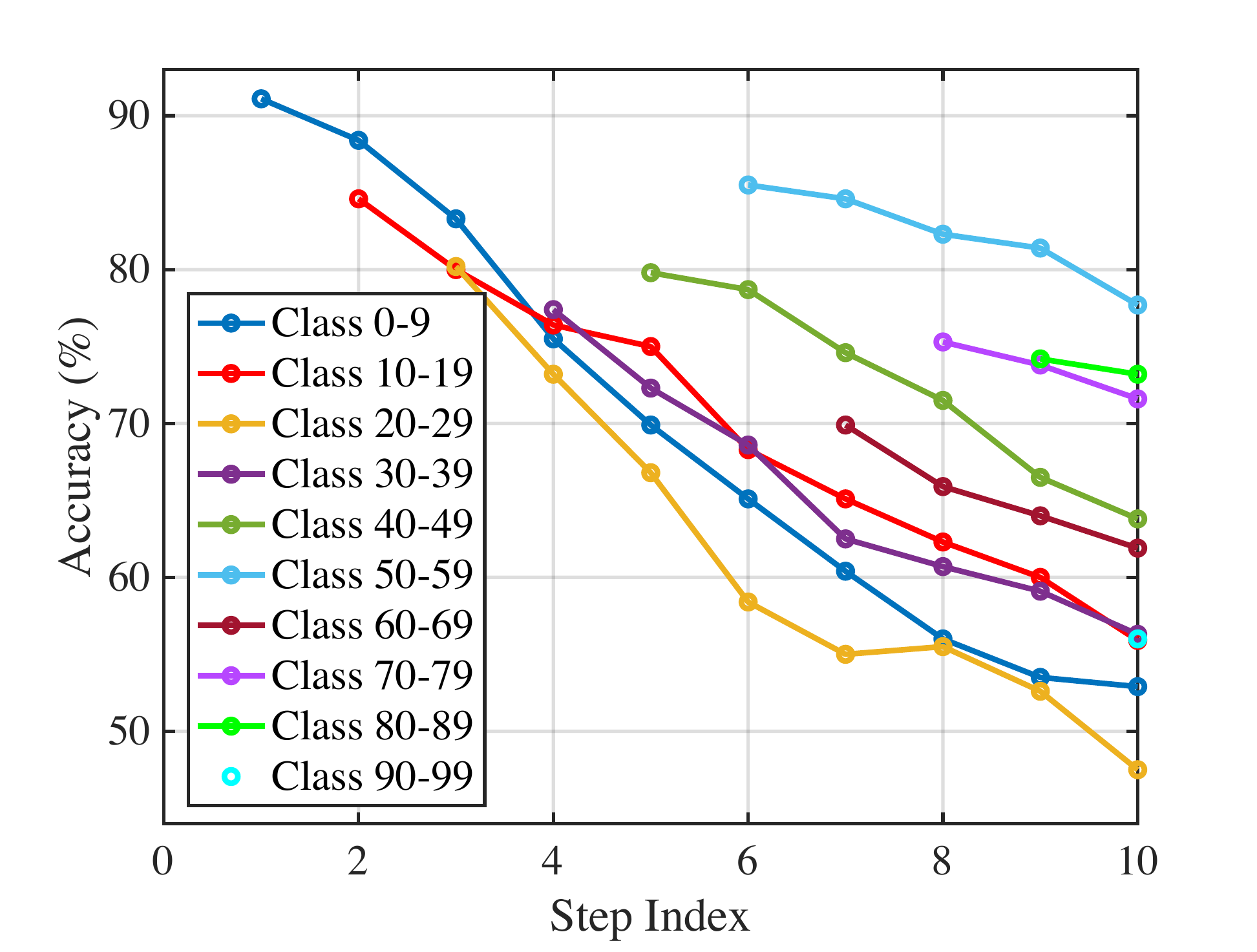}
\end{minipage}
\caption{Classification accuracy on CIFAR-100.}
\label{fig:cifar100}
\end{figure}

\paragraph{Accuracy vs. number of classes.} In Fig.~\ref{fig:cifar100}, we first show the comparison of incremental accuracies to some of the state-of-the-art methods including iCaRL~\cite{rebuffi2017icarl}, RPSNet~\cite{rajasegaran2019random}, BiC~\cite{hou2019learning} and WA~\cite{zhao2020maintaining}. One can observe that our H-LLL algorithm significantly outperforms the other methods and yields an average incremental accuracy~\cite{rebuffi2017icarl} of 73.8\%, while the second best approach (WA) only achieves 69.8\% accuracy.

\paragraph{Accuracy for different classes.} In order to gain deeper understanding of the H-LLL algorithm, we examine the accuracy of different class splits in each step. From Fig.~\ref{fig:cifar100}, we can see that the incremental accuracy for different class groups decreases in a slow and smooth way. This indicates that H-LLL is able to preserve knowledge of class concepts and effectively avoid catastrophic forgetting.

\section{Discussion}\label{section:discussion}

We study, theoretically and empirically, the efficiency of lifelong learning when tasks share a low-dimensional feature representation. We introduce a refinement algorithm and bound its representation and sample complexity, and prove a matching lower bound for the sample complexity (for any lifelong learning algorithm). 
Our results show that: (1) lifelong learning provably converges for nonlinear feature representations, (2) refinement has provable benefits, and (3) lifelong learning is an efficient approach to multi-class/multi-task learning. Our work also indicates that (a) refinement can be practical and can dynamically keep the dimension of the representation bounded and (b) remembering only a small subset of previous examples suffices for efficient lifelong learning.

These results raise further questions. In the general setting of nonlinear features, how can we guarantee that the refinement is efficient in terms of time complexity? Our experiments suggest that SGD does well in practice. One complication with nonlinear features is that even solving a single offline task is nontrivial and needs further assumptions. E.g., is there an efficient lifelong learning algorithm for two-layer ReLU networks under nice input distributions?

\paragraph{Acknowledgements.} This work was supported in part by NSF awards CCF-1909756, CCF-2007443 and CCF-2134105. Weiyang Liu is supported by a Cambridge-Tübingen Fellowship, an NVIDIA GPU grant, DeepMind and the Leverhulme Trust via CFI. We thank Le Song for helpful discussions.

\newpage
\bibliography{bibfile}

\newpage
\appendix
\section*{\LARGE Appendix}

\section{Formal Algorithm of LLL-RR}\label{section:appendix_algo_lllrr}
For completeness, we describe the formal algorithm of LLL-RR. Different from the basic LLL algorithm, we are memorizing a list $\tilde{\bm{w}}_1,\cdots,\tilde{\bm{w}}_{\hat{k}k_0}$ all along with the algorithm. Each time when we need to learn the new features $\tilde{\bm{w}}_{\hat{k}k_0+1},\cdots,\tilde{\bm{w}}_{\hat{k}k_0+k_0}$, we add them to the list, and feed the list to Algorithm \ref{algorithm:refinement} to get a new feature subspace. The formal algorithm is in Algorithm~\ref{algorithm:lll_rr}.

\begin{algorithm}[htbp]
    \caption{Lifelong Learning Algorithm with Representation Refinement (LLL-RR)}
    \label{algorithm:lll_rr}
    \KwIn{$d,m,k$, labeled examples of $m$ tasks
    , threshold parameters $\epsilon_{acc}, \epsilon$.}
    \begin{enumerate}
    \vspace{-2mm}
        \item Using data from the first task to learn a set of features $\tilde{\bm{W}}_1(\cdot)=(\tilde{\bm{w}}_1(\cdot),\cdots,\tilde{\bm{w}}_{k_0}(\cdot))^\top$ and a linear function $\tilde{\bm{c}}_1$ such that $\bm{x}\xrightarrow{}\sign(\tilde{\bm{c}}_1^\top
        \tilde{\bm{W}}_1(\bm{x}))$ has error smaller than $\epsilon_{acc}$. \\
        \tcc{Number of features $1\leq k_0\leq k$. For linear features, $k_0=1$.}
        \vspace{-2mm}
        \item[] Let $\tilde{k}=1$. 
        Set the feature subspace $\bm{V}_1=\tilde{\bm{W}}_1$, and the temporary features $\tilde{\bm{V}}_1=\tilde{\bm{W}}_1$. \\
        \vspace{-5mm}
        \item For the task $i=2,\cdots, m$
        \begin{itemize}
            \item Using the data from the $i$ task, attempt to learn the linear function $\tilde{\bm{c}_i}$ using the temporary features $\tilde{\bm{V}}_{i-1}$. 
            \item Check whether $\bm{x} \xrightarrow{}\sign( \tilde{\bm{c}}_i^\top\tilde{\bm{V}}_{i-1}(\bm{x}))$ has error less than $\epsilon$.
            \begin{enumerate}
                \item If yes, 
                set $\tilde{\bm{V}}_i=\tilde{\bm{V}}_{i-1}$.
                \tcp{Small error with current features.}
                \item Otherwise, learn a new set of features $\tilde{\bm{W}}_i(\cdot)$ and a linear function $\tilde{\bm{c}}_{i}$ such that the predictor $\bm{x}\xrightarrow{}\sign(\tilde{\bm{c}}_i^\top \tilde{\bm{W}}_i(\bm{x}))$ has error less than $\eps_{acc}$. 
                \\
                Update the feature subspace $\bm{V}_i=(\bm{V}_i; \tilde{\bm{W}}_i)$, and feed into Algorithm~\ref{algorithm:refinement}. It returns the refined subspace $\bm{V}'$. Set the temporary features $\tilde{\bm{V}}_i=\bm{V}'$.
                Let $\hat{k} = \hat{k}+1$.\\
            \end{enumerate}
        \end{itemize}
    \end{enumerate}
    \vspace{-7mm}
    \Return{$m$ predictors: $\bm{x} \xrightarrow{}\sign( \tilde{\bm{c}}_i^\top\tilde{\bm{V}}_{i}(\bm{x}))$
    , $1\leq i\leq m$.}
    \end{algorithm}

\section{Extensions to Task-Incremental Regression and Class-Incremental Learning}\label{section:class_increment}

\vspace{-2mm}
In our main text, we study the setting of solving $m$ tasks of binary classification incrementally. The classification error is defined as $err(\hat{l})= \mathbb{P}_{(\bm{x},y)\sim P}[\hat{l}(\bm{x}) \neq y]$. By Assumption \ref{equation:assumption}, we know the task error is small if and only if the parameters are close to each other. Now we would like to extend to task-incremental regression and class-incremental classification by connecting the parameter's $l_2$ distance to the error of the model.

\paragraph{Task-incremental regression.}
Consider the regression tasks shared with low-dimensional common features. For $i\in[m],y=\langle{\bm{c}_i^*}, \bm{\sigma}^*(\bm{x})\rangle+\epsilon_i$. The regression error is $err(\hat{l})=\mathbb{E}_{(\bm{x},y)\sim P}[\|\hat{l}(\bm{x})-y\|_2^2]$. We can further weaken our assumption to $c_1I \preccurlyeq \mathbb{E}[\bm{x}\bm{x}^\top]\preccurlyeq c_2I$ for $0 < c_1<c_2$. Then for any unit vector $\bm{u},\bm{v}$, we have
\[
\mathbb{E}_{\left(\bm{x},y\right)\sim P}\left[\left(\bm{u}^\top \bm{x}-\bm{v}^\top \bm{x}\right)^2\right]
=\mathbb{E}_{\left(\bm{x},y\right)\sim P}\left[(\bm{u}-\bm{v})^T\bm{x}\bm{x}^T\left(\bm{u}-\bm{v}\right)\right]
\]

So we get a lemma similar to Assumption~\ref{equation:assumption} that
$c_1\|\bm{u}-\bm{v}\|^2 \leq \mathbb{E}_{\bm{x}\in D} \|\bm{u}^\top \bm{x} - \bm{v}^\top \bm{x}\|^2 \leq c_2\|\bm{u}-\bm{v}\|^2$, which is sufficient for analysis.

\paragraph{Class-incremental classification.}
Let $X=\mathbb{R}^d$ be the input space and $Y = \{1,2,\cdots, m\}$ be the class labels. We assume that the labels can be recovered by passing the input through a linear/nonlinear layer and then taking the maximum of $m$ linear combinations. Formally, the label is given by
\[
\ell(\bm{x}) = \argmax_{i\in [m]} {\langle\bm{c}^*_i}, \bm{\sigma}^*(\bm{x})\rangle
\]
The classification error is $err(\hat{l}) = \mathbb{P}_{(\bm{x},y)\sim P}[\hat{l}(\bm{x}) \neq y]$. Noticing that, when we meet a new class, the classifier should determine whether the label belongs to the current class or not. In this sense, we regard the problem as a binary classification. To get the negative samples in the current class, we also need a small proportional of data from previous classes. Practically, we propose the heuristic lifelong learning (H-LLL) algorithm in Section~\ref{section:HLLL} to solve the class-incremental learning. Experiments in Section \ref{section:real_data_exp} complement our results.

\section{Another Approach -- Theoretical Guarantees for LLL}\label{section:appendix_theoretical_basic}
Here we give a simpler analysis for the basic LLL algorithm, along the lines of ~\cite{balcan2015efficient}. The result is weaker than Theorem~\ref{thm:vanilla_algo}, but we include the proof here for completeness, along with an extension to nonlinear features.

\begin{thm}[Basic LLL]\label{thm:lll_nonlinear_main}
Let $\gamma=c\epsilon$ and $\epsilon_{acc}$ s.t. $4k\frac{\epsilon_{acc}}{\gamma}+\gamma=c'\epsilon$ for sufficiently small constants $c,c'>0$. Assume that all targets share $k$ common features. Then, under Assumption \ref{equation:assumption}, and sequential presentation of the tasks in any order, the basic LLL algorithm will incrementally learn a representation of dimension $k$ for linear features and $k^2$ for nonlinear features with error at most $\eps$ on all tasks. The total number of samples used by the algorithm is $O(dk^2\log(k/\epsilon)/\epsilon^2+km\log(1/\epsilon)/\epsilon)=\tilde{O}(dk^2/\epsilon^2+mk/\epsilon)$ in the linear setting and a factor of $k$ higher in the nonlinear setting.
\end{thm}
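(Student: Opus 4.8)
The plan is to follow the geometric template of \cite{balcan2015efficient}: use Lemma~\ref{lemma:error2dist} to convert test errors into Euclidean distances between the relevant unit vectors, and then bound by a dimension-counting argument the number of times the algorithm enters the ``learn new features'' branch. Write $i_1<i_2<\dots<i_{\tilde k}$ for the tasks at which new features are created and let $\tilde{\bm w}_1,\dots$ be the features learned at those steps (one feature per step in the linear case, $k_0=1$; in the nonlinear case $k_0\le k$ features per step, since a single task is a combination of at most $k$ true features and we assume such a combination can be fit to error $\epsilon_{acc}$). All true targets $\bm a_i$ lie in the $k$-dimensional subspace $\bm V^\ast$; in the nonlinear case, read this as: the true features, regarded as vectors in the induced (possibly infinite-dimensional) feature space, span a $k$-dimensional subspace, and everything below is phrased through inner products in that span.

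The first and main step is the structural claim that the trigger targets $\bm a_{i_1},\dots,\bm a_{i_{\tilde k}}$ are linearly independent, which forces $\tilde k\le k$. Fix a trigger step $i_\ell$. Because the current representation cannot achieve error $\epsilon$ on task $i_\ell$, Lemma~\ref{lemma:error2dist} gives $d\big(\bm a_{i_\ell},\operatorname{span}(\tilde{\bm w}_1,\dots,\tilde{\bm w}_{\ell-1})\big)\ge\gamma$ in Euclidean distance, with $\gamma=\Theta(\epsilon)$. Since each $\tilde{\bm w}_r$ is within $O(\epsilon_{acc})$ of $\bm a_{i_r}$ (again by Lemma~\ref{lemma:error2dist}), a subspace-perturbation estimate yields $d\big(\bm a_{i_\ell},\operatorname{span}(\bm a_{i_1},\dots,\bm a_{i_{\ell-1}})\big)\ge \gamma-O\!\big(k\epsilon_{acc}/\gamma\big)$, where the loss term is controlled inductively using that the earlier trigger targets are $\gamma$-well separated. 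The hypothesis $4k\epsilon_{acc}/\gamma+\gamma=c'\epsilon$ with $\gamma=c\epsilon$ is exactly calibrated so that this remainder is $\le\epsilon/(2c'')$ for the constant $c''$ of Lemma~\ref{lemma:error2dist}, which keeps the right-hand side positive and, more importantly, keeps every task learnable to final error $\epsilon$. Linear independence of $\tilde k$ unit vectors inside a $k$-dimensional space gives $\tilde k\le k$, so the learned representation has dimension at most $k$ (linear) and at most $k\cdot k_0\le k^2$ (nonlinear).

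Correctness is then immediate from the algorithm's design: a non-trigger task is recorded only after a linear combination of error $<\epsilon$ is found (the explicit check), and a trigger task is learned with error $<\epsilon_{acc}<\epsilon$; no task is ever left with error $\ge\epsilon$. For the sample complexity, learning a linear combination over the current $r\le k$ (resp.\ $\le k^2$) features to error $\epsilon$ under Assumption~\ref{equation:assumption} costs $O\big(r\log(1/\epsilon)/\epsilon\big)$ samples (e.g.\ by the halfspace bound used in \cite{balcan2013active}), paid once per task and contributing $O\big(mk\log(1/\epsilon)/\epsilon\big)$ (a factor $k$ more in the nonlinear case). Learning a fresh feature (resp.\ a set of $\le k$ features, i.e.\ a small two-layer network) directly on the $d$-dimensional input to error $\epsilon_{acc}=\Theta(\epsilon^2/k)$ costs $O\big(dk\log(k/\epsilon)/\epsilon^2\big)$ samples per trigger (a factor $k$ more for the network in the nonlinear case), and there are at most $k$ triggers, contributing $O\big(dk^2\log(k/\epsilon)/\epsilon^2\big)$; adding the two gives the stated $\tilde O(dk^2/\epsilon^2+mk/\epsilon)$ and its factor-$k$ analogue.

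I expect the hard part to be the subspace-perturbation step: quantifying how much $d(\bm a_{i_\ell},\cdot)$ to a span of \emph{learned} features can differ from the distance to the span of the underlying targets, and carrying the required well-conditioning through the induction on $\ell$ so that the discrepancy remains $O(k\epsilon_{acc}/\gamma)$ rather than degrading with $\ell$. This is precisely where the relation $4k\epsilon_{acc}/\gamma+\gamma=c'\epsilon$ and the ``sufficiently small constants'' are used, and where I would lean on the analysis of \cite{balcan2015efficient}. A secondary technical point is making the whole argument uniform over the possibly infinite-dimensional feature space in the nonlinear case, which only requires phrasing every distance and span statement in terms of inner products within the $k$-dimensional span of the true features.
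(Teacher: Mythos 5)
Your proposal is correct and follows essentially the same route as the paper's own proof in Appendix~\ref{section:appendix_theoretical_basic}: the ``subspace-perturbation estimate'' you flag as the hard part is exactly the paper's Lemma~\ref{lem:subspace} (giving $\sin\theta(\bm{U}_k,\bm{V}_k)\le 2k\epsilon_{acc}/\gamma$ under the same $\gamma$-separation hypothesis you carry through the induction), and the paper likewise concludes $\tilde k\le k$ from the $\gamma$-separation of the true trigger targets, with the same accounting of the sample complexity.
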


Before we prove the theorem, we define the $\gamma$\textit{-separated} term. We use the definition $\gamma$\textit{-separated} from \cite{balcan2015efficient} that a subsequence of vectors $\bm{a}_{i_1},\bm{a}_{i_2},\cdots$ is $\gamma$-separated if for any $\bm{a}_{i_j}$, $\theta(\bm{a}_{i_j},\lspan(\bm{a}_{i_1},\cdots,\bm{a}_{i_{j-1}}))\geq \gamma$. Define the $\gamma$-effective dimension of $\bm{a}_1,\cdots,\bm{a}_m$ as the size of the largest $\gamma$-separated subsequence. Note that when $\gamma=0$, $\gamma$-effective dimension is exactly the dimension of the spanned subspace. We prove Theorem~\ref{thm:lll_nonlinear_main} by showing two facts: each target is far from the span of previous ones; we learn the new target accurately. (Lemma~\ref{lem:subspace}). We start with a helper lemma (Lemma~\ref{lem:theta_ineq}).

\begin{lemma}\label{lem:theta_ineq}
Let $\bm{w},\bm{v}$ be two unit vectors in $\R^d$ and $\bm{U}$ be a subspace. Then,
\[
\sin\theta(\lspan{(\bm{U},\bm{w})},\lspan{(\bm{U},\bm{v})})
\le \frac{\sin\theta(\bm{w},\bm{v})}{\max\{ \sin\theta(\bm{w},\bm{U}), \sin\theta(\bm{v},\bm{U})\}}
.
\]
\end{lemma}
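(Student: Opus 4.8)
The plan is to reduce the bound to a pointwise estimate over $\lspan(\bm{U},\bm{w})$, using the definition $\sin\theta(\bm{F},\bm{G})=\max_{\bm{x}\in\bm{F}}\sin\theta(\bm{x},\bm{G})$ (valid since all the angles involved lie in $[0,\pi/2]$, on which $\sin$ is monotone) and the identity $\sin\theta(\bm{x},\bm{G})=\norm{(\bm{I}-\bm{P})\bm{x}}/\norm{\bm{x}}$ for a nonzero $\bm{x}$ and the orthogonal projector $\bm{P}$ onto $\bm{G}$. If $\bm{w}\in\bm{U}$ or $\bm{v}\in\bm{U}$ the claim is immediate (one span contains the other, or a direct check applies), so I would assume $\bm{w},\bm{v}\notin\bm{U}$; then $\lspan(\bm{U},\bm{w})$ and $\lspan(\bm{U},\bm{v})$ have equal dimension $\dim(\bm{U})+1$, so the quantity $\theta(\cdot,\cdot)$ between them is symmetric (it is their largest principal angle). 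Hence it suffices to prove $\sin\theta(\lspan(\bm{U},\bm{w}),\lspan(\bm{U},\bm{v}))\le \sin\theta(\bm{w},\bm{v})/\sin\theta(\bm{w},\bm{U})$ and then exchange $\bm{w}\leftrightarrow\bm{v}$, keeping the smaller of the two upper bounds.

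For the main estimate I would fix a nonzero $\bm{x}=\bm{u}+s\bm{w}\in\lspan(\bm{U},\bm{w})$ with $\bm{u}\in\bm{U}$, $s\in\R$ (the case $s=0$ gives $\bm{x}\in\bm{U}\subseteq\lspan(\bm{U},\bm{v})$, hence angle $0$), and let $\bm{P}$ project onto $\lspan(\bm{U},\bm{v})$. Since $\bm{u}\in\bm{U}\subseteq\lspan(\bm{U},\bm{v})$ we have $\bm{P}\bm{u}=\bm{u}$, so $(\bm{I}-\bm{P})\bm{x}=s(\bm{I}-\bm{P})\bm{w}$ and $\norm{(\bm{I}-\bm{P})\bm{x}}=|s|\,d(\bm{w},\lspan(\bm{U},\bm{v}))\le |s|\,d(\bm{w},\lspan(\bm{v}))=|s|\sin\theta(\bm{w},\bm{v})$, the last equality because $\bm{w},\bm{v}$ are unit vectors. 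For the denominator, writing $\bm{w}=\bm{w}_{\bm{U}}+\bm{w}^{\perp}$ with $\bm{w}_{\bm{U}}\in\bm{U}$, $\bm{w}^{\perp}\perp\bm{U}$, and $\norm{\bm{w}^{\perp}}=\sin\theta(\bm{w},\bm{U})$, the orthogonal splitting $\bm{x}=(\bm{u}+s\bm{w}_{\bm{U}})+s\bm{w}^{\perp}$ yields $\norm{\bm{x}}\ge |s|\sin\theta(\bm{w},\bm{U})$. Dividing, $\sin\theta(\bm{x},\lspan(\bm{U},\bm{v}))\le \sin\theta(\bm{w},\bm{v})/\sin\theta(\bm{w},\bm{U})$ uniformly in $\bm{x}$, and taking the maximum over $\bm{x}$ (together with the symmetric version) proves the lemma.

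The computation is short, so I do not anticipate a real obstacle; the one thing to watch is keeping $\sin\theta(\bm{w},\bm{v})$ — and not a weaker quantity such as $\norm{\bm{w}-\bm{v}}$ — in the numerator. This is exactly why one must use the exact projection / orthogonal-distance identities rather than a triangle inequality, and why the common subspace $\bm{U}$ has to sit inside \emph{both} spans so that the $\bm{u}$-component cancels in $(\bm{I}-\bm{P})\bm{x}$.
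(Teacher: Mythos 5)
Your proof is correct, and the core computation is genuinely different from the paper's. Both arguments share the same outer skeleton --- dispose of the degenerate cases, use the symmetry of the largest principal angle between the two equal-dimensional spans to reduce to a one-sided bound, and then prove a pointwise estimate over $\lspan(\bm{U},\bm{w})$ --- but the pointwise estimate is obtained differently. The paper takes the maximizing $\bm{x}$, replaces $\bm{U}$ by the single vector $\bm{u}_1\in\bm{U}$ appearing in $\bm{x}$ (using monotonicity of angles under subspace inclusion), and then does an explicit three-dimensional coordinate computation, parametrizing $\bm{v}=\cos(\beta)\bm{u}_1+\sin(\beta)\bm{v}_1$ to show $\sin\theta(\bm{w},\bm{v})\ge\sin(\alpha)\sin(\beta)$. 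You instead keep $\bm{U}$ intact and exploit the projection identity directly: the $\bm{U}$-component of $\bm{x}=\bm{u}+s\bm{w}$ is annihilated by $\bm{I}-\bm{P}$, so the numerator is controlled by $|s|\,d(\bm{w},\lspan(\bm{v}))=|s|\sin\theta(\bm{w},\bm{v})$, while that same component can only increase $\norm{\bm{x}}$, so the denominator is at least $|s|\sin\theta(\bm{w},\bm{U})$. Your route avoids both the reduction to the $\bm{u}_1$-slice and the explicit trigonometric parametrization, and it yields the bound uniformly over all $\bm{x}$ rather than only at the maximizer; the paper's version makes the three relevant dimensions and the underlying geometry more visible. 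One small remark: in the degenerate case $\bm{v}\in\bm{U}$, $\bm{w}\notin\bm{U}$, the left side is actually $\sin(\pi/2)=1$ rather than $0$, but the right side is then at least $1$ because $\sin\theta(\bm{w},\bm{U})\le\sin\theta(\bm{w},\bm{v})$, so your ``direct check'' does go through --- and is in fact slightly more careful than the paper's handling of that case.
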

\begin{proof}
If $\bm{w}\in \bm{U}$ or $\bm{v}\in \bm{U}$, $\theta(\lspan{(\bm{U},\bm{w})},\lspan{(\bm{U},\bm{v})})=0$. The inequality becomes trivial. Now we assume that $\bm{w},\bm{v}\notin \bm{U}$. From the symmetry of $\bm{w}$ and $\bm{v}$, we prove the following and then replacing $\bm{v}$ with $\bm{w}$ leads to the original inequality.
$$\sin\theta( \lspan{(\bm{U}, \bm{w})}, \lspan{(\bm{U},\bm{v})})\leq \frac{\sin\theta(\bm{w},\bm{v})}{\sin\theta(\bm{v},\bm{U})}
$$
By definition, $\exists \bm{x}\in\lspan{(\bm{U},\bm{w})}$ s.t. $\theta (\bm{x}, \lspan{(\bm{U},\bm{v})}) = \theta(\lspan{(\bm{U},\bm{w})}, \lspan{(\bm{U},\bm{v})})$. Here $\bm{x}$ is a combination of $\bm{u}_1$ and $\bm{w}$, $\bm{u}_1$ is some vector in $\bm{U}$. Using the fact that $\theta(\bm{x},\lspan{(\bm{U},\bm{v})}) \leq \theta (\bm{x}, \lspan{(\bm{u}_1,\bm{v})}) \leq \theta( \lspan{(\bm{u}_1, \bm{w})}, \lspan{(\bm{u}_1,\bm{v})})$ and $\theta(\bm{v},\bm{U}) \leq \theta(\bm{v},\bm{u}_1)$, it is sufficient to prove that 
$$\sin\theta( \lspan{(\bm{u}_1, \bm{w})}, \lspan{(\bm{u}_1,\bm{v})})\leq \frac{\sin\theta(\bm{w},\bm{v})}{\sin\theta(\bm{v},\bm{u}_1)}
$$
Denote $\alpha = \theta(\lspan{(\bm{u}_1, \bm{w})}, \lspan{(\bm{u}_1,\bm{v})})$, $\beta = \theta(\bm{v},\bm{u}_1)$. WLOG we assume $\bm{u}_1=(1,0,0)$ and $\lspan{(\bm{u}_1,\bm{w})}$ is the $x$-$y$ plane. Then we can write $\bm{v}=\cos(\beta)\bm{u}_1+\sin(\beta)\bm{v}_1$, where $\bm{v}_1=(0,\cos(\alpha),\sin(\alpha))$. Since $\sin\theta(\bm{w},\bm{v}) \geq d(\bm{v}, x\text{-}y \text{ plane}) = \sin(\alpha)\sin(\beta)$, we get the lemma proved.

\end{proof}

\begin{lemma}[Kernel Subspace]\label{lem:subspace}
Let $\bm{U}_k,\bm{V}_k$ be two subspaces of $\R^d$. Let $\bm{U}_k = \lspan\{\bm{y}_1^*,\cdots, \bm{y}_k^*\}$, $\bm{V}_k=\lspan\{\bm{y}_1,\ldots, \bm{y}_k \}$. Let $\epsilon, \gamma \ge 0$ and $\epsilon \le \gamma^2/(10k)$. 
Assume that 
\begin{enumerate}
\item $\sin\theta(\bm{y}_i, \lspan\{\bm{y}_1, \ldots, \bm{y}_{i-1}\}) \ge \gamma$, for $i=2,\cdots,k$.  
\item $\sin\theta(\bm{y}_i,\bm{y}^*_i) \le \eps$, for $i=1,\cdots,k$.
\end{enumerate}
Then we have $\sin\theta(\bm{U}_k,\bm{V}_k)\leq 2k\eps/\gamma$. In other words, for any point $\bm{y}^* \in \bm{U}_k$, there is a point $\bm{y}\in \bm{V}_k$ s.t.
\[
\sin\theta{(\bm{y}^*,\bm{y})} \le \frac{2\eps k}{\gamma}.
\]
\end{lemma}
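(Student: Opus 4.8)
The plan is to prove the displayed inequality $\sin\theta(\bm{U}_k,\bm{V}_k)\le 2k\epsilon/\gamma$; the final ``in other words'' sentence is then immediate, since $\theta(\bm{U}_k,\bm{V}_k)=\max_{\bm y^*\in \bm U_k}\theta(\bm y^*,\bm V_k)$ unwinds exactly that quantity. I would first record two elementary facts. (i) From angles to vectors: since $\bm y_i,\bm y_i^*$ are unit and $\sin\theta(\bm y_i,\bm y_i^*)\le\epsilon$, after replacing $\bm y_i^*$ by $-\bm y_i^*$ if needed (harmless for spans) we get $\|\bm y_i-\bm y_i^*\|\le\sqrt2\,\epsilon$. (ii) A triangle inequality: for equal-dimensional subspaces with small pairwise angles, $\sin\theta(\bm A,\bm C)\le\sin\theta(\bm A,\bm B)+\sin\theta(\bm B,\bm C)$, and for a vector $\bm x$, $\;d(\bm x,\bm B)\ge d(\bm x,\bm C)-\sin\theta(\bm B,\bm C)\,\|\bm x\|$, so distances to subspaces are $1$-Lipschitz in the subspace. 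These let me chain estimates cleanly.

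The main step is a hybrid (telescoping) argument that swaps the $\bm y_i^*$ for the $\bm y_i$ one at a time, using Lemma~\ref{lem:theta_ineq} as the engine. Define $\bm Z_0:=\bm U_k$ and, for $1\le j\le k$, $\;\bm Z_j:=\lspan\{\bm y_1,\dots,\bm y_j,\bm y_{j+1}^*,\dots,\bm y_k^*\}$, so $\bm Z_k=\bm V_k$ and consecutive hybrids differ only in the $j$-th slot. Writing $\bm A_j:=\lspan\{\bm y_1,\dots,\bm y_{j-1},\bm y_{j+1}^*,\dots,\bm y_k^*\}$, we have $\bm Z_{j-1}=\lspan(\bm A_j,\bm y_j^*)$ and $\bm Z_j=\lspan(\bm A_j,\bm y_j)$, so Lemma~\ref{lem:theta_ineq} gives
\[
\sin\theta(\bm Z_{j-1},\bm Z_j)\le\frac{\sin\theta(\bm y_j^*,\bm y_j)}{\max\{\sin\theta(\bm y_j^*,\bm A_j),\,\sin\theta(\bm y_j,\bm A_j)\}}\le\frac{\epsilon}{\sin\theta(\bm y_j,\bm A_j)}.
\]
Summing over $j$ with (ii), $\;\sin\theta(\bm U_k,\bm V_k)\le\sum_{j=1}^k \epsilon/\sin\theta(\bm y_j,\bm A_j)$, so everything reduces to showing $\sin\theta(\bm y_j,\bm A_j)\ge\gamma/2$ for each $j$, which yields the claimed $2k\epsilon/\gamma$.

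The hard part will be exactly this lower bound on the ``denominator'' angles. Hypothesis~1 gives $\sin\theta(\bm y_j,\lspan\{\bm y_1,\dots,\bm y_{j-1}\})\ge\gamma$, i.e. $\bm y_j$ is $\gamma$-far from the span of the already-swapped (earlier) vectors; the danger is that $\bm A_j$ additionally contains $\bm y_{j+1}^*,\dots,\bm y_k^*$, which are $\epsilon$-close to $\bm y_{j+1},\dots,\bm y_k$. I would bound the drop in $\sin\theta(\bm y_j,\cdot)$ caused by adjoining the starred tail by the amount those vectors differ from the unstarred ones plus the rotation they can induce on $\bm y_j$'s orthogonal component; using $\epsilon\le\gamma^2/(10k)$ this perturbation should stay below $\gamma/2$. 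The delicate point is that normalizing orthogonal components naively costs a factor $1/\gamma$ per step, so this must be run as a simultaneous induction that carries a uniform bound $\sum_{i<j}\sin\theta(\bm Z_{i-1},\bm Z_i)\le 2j\epsilon/\gamma\le\gamma/5$ and feeds it back into the denominator estimate at step $j$; the bootstrap closes precisely because the accumulated error is $O(k\epsilon/\gamma)$, small relative to $\gamma$. Getting the constants in this bookkeeping to produce the stated ``$2$'' (rather than a larger absolute constant, or a dependence on the ordering of the $\bm y_i$) is where I expect the real work to lie.
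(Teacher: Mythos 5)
Your skeleton is the right one and matches the paper in spirit: use Lemma~\ref{lem:theta_ineq} as the engine, move from $\bm{U}_k$ to $\bm{V}_k$ through hybrid subspaces, and reduce everything to a lower bound on the ``denominator'' angles. But the specific hybrid you chose breaks at exactly the step you flag as the hard part. By swapping from the front, your common subspace $\bm{A}_j=\lspan\{\bm{y}_1,\dots,\bm{y}_{j-1},\bm{y}_{j+1}^*,\dots,\bm{y}_k^*\}$ contains the \emph{successors} of $\bm{y}_j$, and hypothesis~1 is an ordered, one-sided condition: it says each $\bm{y}_i$ is $\gamma$-far from the span of its \emph{predecessors}, and gives no control whatsoever on the angle from $\bm{y}_j$ to the span of later vectors. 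The claimed bound $\sin\theta(\bm{y}_j,\bm{A}_j)\ge\gamma/2$ is false even at $\eps=0$, so no bootstrap on the accumulated perturbation $O(k\eps/\gamma)$ can close it. Concretely, take $k=3$, $\bm{y}_1=\bm{e}_1$, $\bm{y}_2=\sqrt{1-\gamma^2}\,\bm{e}_1+\gamma\bm{e}_2$, $\bm{y}_3=\sqrt{1-\gamma^2}\,\bm{e}_2+\gamma\bm{e}_3$, and $\bm{y}_i^*=\bm{y}_i$: hypothesis~1 holds with equality, yet with $a=1/\sqrt{1-\gamma^2}$ and $b=-\gamma/(1-\gamma^2)$ one gets
\[
\bigl\|\bm{e}_1-a\bm{y}_2-b\bm{y}_3\bigr\|=\frac{\gamma^2}{1-\gamma^2},
\]
so $\sin\theta(\bm{y}_1,\bm{A}_1)=O(\gamma^2)$, and chaining this construction drives $\sin\theta(\bm{y}_j,\bm{A}_j)$ down to order $\gamma^{k-j}$. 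Your per-step bound then degrades to $\eps/\gamma^{k-j}$, which is not $O(\eps/\gamma)$, and the telescoped sum does not give $2k\eps/\gamma$.

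The repair is to order the swaps so that the denominator only ever involves predecessors, and to carry the \emph{subspace} angle, not the per-step denominators, as the induction invariant. The paper peels off the \emph{last} vector: with $\bm{U}_k'=\lspan(\bm{U}_{k-1},\bm{y}_k)$, Lemma~\ref{lem:theta_ineq} bounds $\sin\theta(\bm{U}_k,\bm{U}_k')\le\sin\theta(\bm{y}_k,\bm{y}_k^*)/\sin\theta(\bm{y}_k,\bm{U}_{k-1})$, where $\bm{U}_{k-1}$ is spanned by starred predecessors only; this denominator is controlled by combining hypothesis~1 with the inductive bound on the subspace angle,
\[
\sin\theta(\bm{y}_k,\bm{U}_{k-1})\ \ge\ \sin\theta(\bm{y}_k,\bm{V}_{k-1})-\sin\theta(\bm{V}_{k-1},\bm{U}_{k-1})\ \ge\ \gamma-\frac{2(k-1)\eps}{\gamma},
\]
which stays above $(4/5)\gamma$ under $\eps\le\gamma^2/(10k)$, and the remaining term $\sin\theta(\bm{U}_k',\bm{V}_k)$ is handled by a strengthened induction hypothesis in which the shared vector $\bm{y}_k$ is absorbed into a fixed ambient subspace $\bm{W}$ adjoined to both spans. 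That feedback loop --- inductive subspace-angle bound into the denominator, denominator into the next subspace-angle bound --- is what produces the constant $2$; your front-to-back telescoping cannot be rescued by bookkeeping because the quantity you need to lower-bound is genuinely small.
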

\begin{proof}
Here we use the strong induction on a stronger version of the conclusion where $\bm{U}_k = \lspan\{\bm{W}, \bm{y}_1^*,\cdots, \bm{y}_k^*\}$, $\bm{V}_k=\lspan\{\bm{W}, \bm{y}_1,\ldots, \bm{y}_k \}$ for some fixed subspace $\bm{W}$. The base case is $k=1$. This follows directly from Lemma \ref{lem:theta_ineq} with $\bm{U}=\bm{W}, \bm{w} = \bm{y}, \bm{v} = \bm{y}^*$. Now we prove the induction step on $k$ with strong hypothesis. Let $\bm{U}_k' = \lspan(\bm{U}_{k-1}, \bm{y}_k)$. By Lemma~\ref{lem:theta_ineq} and induction hypothesis, we have
\[
\sin\theta(\bm{U}_k,\bm{V}_k)
\leq \sin \theta (\bm{U}_k,\bm{U}_k') + \sin\theta (\bm{U}_k',\bm{V}_k)
\leq \frac{\sin\theta(\bm{y}_k,\bm{y}_k^*)}{\sin\theta(\bm{y}_k,\bm{U}_{k-1})} + \frac{2(k-1)\epsilon}{\gamma}
\]
By triangle inequality and induction hypothesis, we further have
\[
\sin\theta(\bm{y}_k,\bm{U}_{k-1})
\geq \sin\theta(\bm{y}_k,\bm{V}_{k-1}) - \sin\theta (\bm{V}_{k-1}, \bm{U}_{k-1})
\geq \gamma - \frac{2\epsilon(k-1)}{\gamma}
\]
Combining the two inequalities, we get
\[
\sin\theta(\bm{U}_k,\bm{V}_k)
\leq \frac{\epsilon}{\gamma - \frac{2\epsilon(k-1)}{\gamma}} + \frac{2(k-1)\epsilon}{\gamma}
=\frac{\epsilon}{\gamma}\left(\frac{\gamma^2}{\gamma^2-2(k-1)\epsilon}+2(k-1)\right)
\leq  \frac{2k\epsilon}{\gamma}
\]

\end{proof}

Now we put them together to analyze Algorithm~\ref{algorithm:basic_lll}.

\begin{proof}
We consider the kernel of nonlinear features $\bm{\sigma}(\bm{x})$. These features live in a potentially infinite-dimensional space (or exponential in $d$ dimensional space if, e.g., the input is from the Boolean hypercube). 
Let $\bm{U}$ be the span of the nonlinear features (viewed as vectors) in the model used to label data.The $\gamma$-effective dimension of $\bm{U}$ is at most $k$. Let $y_i^*={\bm{c}_i^*}^\top \bm{\sigma}^*(\bm{x})=\bm{a}_i^*(\bm{x}),y_i=\bm{c}^\top_i\bm{\sigma}(\bm{x})=\bm{a}_i(\bm{x})$. WLOG let's assume $\bm{a}_i,\bm{a}_i^*$ be vectors of unit length. From the algorithm, if the current task $i$ has already achieved $\epsilon$ error by current features, it's done. Otherwise, we learn a new set of $k_0$ features and a linear combination whose error is at most $\epsilon_{acc}$. Denote the indices of tasks that we learn new features as $i_1,i_2,\cdots,i_{\tilde{k}}$. Encountering the task $i_{\hat{k}}$, denote $\bm{V}_{\hat{k}}=\lspan(\bm{a}_{i_1},\cdots,\bm{a}_{i_{\hat{k}}}), \bm{U}_{\hat{k}}=\lspan(\bm{a}^*_{i_1},\cdots,\bm{a}_{i_{\hat{k}}})$. We will prove by induction that for any $\hat{k}\in[\tilde{k}]$, (1) $\theta(\bm{a}_{i_{\hat{k}}},\bm{V}_{\hat{k}-1}) \geq \gamma$; (2) $\theta(\bm{a}^*_{i_{\hat{k}}},\bm{U}_{\hat{k}-1})\geq \gamma$.

The base case $\hat{k}=1$ holds immediately. For the inductive step $\hat{k}>1$, the task $i_{\hat{k}}$ cannot achieve $\epsilon$ error with the current features $\bm{V}_{\hat{k}-1}$. By Assumption~\ref{equation:assumption}, $\theta(\bm{a}^*_{i_{\hat{k}}}, \bm{V}_{\hat{k}-1})\geq \epsilon/c_2$. After learning a new set of features to ensure error less than $\epsilon_{acc}$, we know  there is a new linear combination $\bm{a}_{i_{\hat{k}}}$ such that $\theta(\bm{a}^*_{i_{\hat{k}}},\bm{a}_{i_{\hat{k}}}) \leq \epsilon_{acc}/c_1$. So by triangle inequality,
\[  
\theta\left(\bm{a}_{i_{\hat{k}}}, \bm{V}_{\hat{k}-1}\right) \geq \epsilon/c_2 - \epsilon_{acc}/c_1 \geq \gamma
\]
So we have shown that (1) holds for $i_{\hat{k}}$. 

To prove (2), we suppose for contradiction that $\theta(\bm{a}^*_{i_{\hat{k}}},\bm{U}_{\hat{k}-1})< \gamma$. From induction hypothesis, for any $j\in[\hat{k}-1]$, $\sin\theta(\bm{a}_{i_j}, \bm{V}_{j-1}) \geq \gamma/2$. By construction, we also have for any $\sin\theta(\bm{a}_{i_j},\bm{a}^*_{i_j}) \leq \epsilon_{acc}/c_1$. Apply Lemma~\ref{lem:subspace}, we have
$\theta(\bm{U}_{\hat{k}-1},\bm{V}_{\hat{k}-1})\leq 8\eps_{acc}k/\gamma$. By triangle inequality, we further have
\[
\theta\left(\bm{a}^*_{i_{\hat{k}}}, \bm{V}_{\hat{k}-1}\right) 
\leq \theta\left(\bm{a}^*_{i_{\hat{k}}}, \bm{U}_{\hat{k}-1}\right) + \theta\left(\bm{U}_{\hat{k}-1}, \bm{V}_{\hat{k}-1}\right)
\leq \gamma+4\eps_{acc}k/\left(c_1\gamma\right) \leq \eps/c_2
\]
By Assumption~\ref{equation:assumption}, there exists $\bm{b}_{i_{\hat{k}}}\in \bm{V}_{\hat{k}-1}$ with error less than $\epsilon$, and thus leads to contradiction. So (2) is also proved. Furthermore, since we have assume that the $\gamma$-effective dimension of the true targets is at most $k$, we have $\tilde{k}\leq k$. So the size of the internal representation is $k'=O(kk_0)$.

The sample complexity for learning one task in $d$-dimension up to error $\eps$ is $O(dk_0\log(1/\epsilon)/\eps)$. Here we learn $O(k)$ such tasks. All other tasks can be learned using the features of dimension $O(kk_0)$. Therefore the total sample complexity is $O(dkk_0/\eps_{acc}\log(1/\eps_{acc})+kk_0m\log(1/\eps)/\eps)=\tilde{O}(dk^2k_0/\epsilon^2+kk_0m/\eps)$.

\end{proof}

\end{document}